\newcommand{\cmark}{\ding{51}}%
 \newcommand{\ind}{\perp\!\!\!\!\perp} 
\newcommand{\vx}{\bm{x}}
\newcommand{\vy}{\bm{y}}
\newcommand{\X}{\mathcal{X}}
\newcommand{\Y}{\mathcal{Y}}
\newcommand{\V}{\mathcal{V}}
\newcommand{\E}{\mathbb{E}}
\newcommand{\N}{\mathbb{N}}
\newcommand{\R}{\mathbb{R}}
\def\lance#1{\textcolor{black}{#1}}
\newtheorem{theorem}{Theorem}[section]
\newtheorem{proposition}[theorem]{Proposition}
\title{Beyond Masked and Unmasked: \\Discrete Diffusion Models via Partial Masking}
\author{%
  Chen-Hao Chao\textsuperscript{1}, Wei-Fang Sun\textsuperscript{2}, Hanwen Liang\textsuperscript{1}, Chun-Yi Lee\textsuperscript{3}, Rahul G. Krishnan\textsuperscript{1}\\
  \textmd{\textsuperscript{1} University of Toronto, Vector Institute}\\
  \textmd{\textsuperscript{2} NVIDIA AI Technology Center}\\
  \textmd{\textsuperscript{3} National Taiwan University}\\
  \texttt{\{chchao, rahulgk\}@cs.toronto.edu},\,\,\, \texttt{cylee@csie.ntu.edu.tw}
}
\begin{document}

\maketitle

\begin{abstract}
Masked diffusion models (MDM) are powerful generative models for discrete data that generate samples by progressively unmasking tokens in a sequence. Each token can take one of two states: \textit{masked} or \textit{unmasked}. We observe that token sequences often remain unchanged between consecutive sampling steps; consequently, the model repeatedly processes identical inputs, leading to redundant computation. To address this inefficiency, we propose the \underline{P}a\underline{r}t\underline{i}al \underline{m}asking schem\underline{e} (Prime), which augments MDM by allowing tokens to take intermediate states interpolated between the masked and unmasked states. This design enables the model to make predictions based on partially observed token information, and facilitates a fine-grained denoising process. We derive a variational training objective and introduce a simple architectural design to accommodate intermediate-state inputs. Our method demonstrates superior performance across a diverse set of generative modeling tasks. On text data, it achieves a perplexity of 15.36 on OpenWebText, outperforming previous MDM (21.52), autoregressive models (17.54), and their hybrid variants (17.58), without relying on an autoregressive formulation. On image data, it attains competitive FID scores of 3.26 on CIFAR-10 and 6.98 on ImageNet-32, comparable to leading continuous generative models.
\end{abstract}

\section{Introduction}
\label{sec:introduction}
\begin{wrapfigure}[]{r}{0.43\linewidth}
    \centering
    \vspace{-1.6em}
    \footnotesize
    \includegraphics[width=\linewidth]{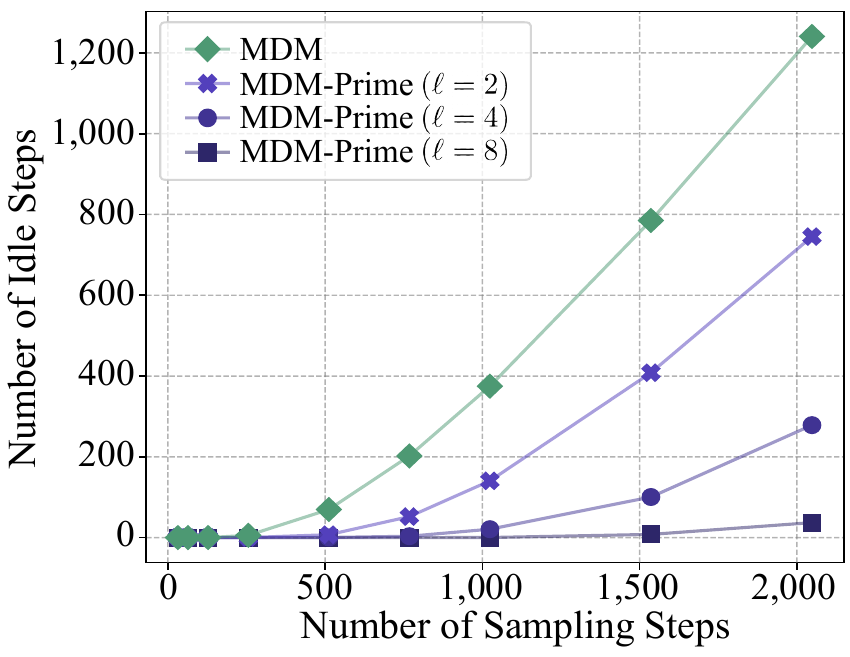}
    \vspace{-2em}
    \caption{Number of idle steps during the reverse diffusion processes of MDM and MDM-Prime. The results are averaged over ten runs. $\ell$ is the sub-token sequence length.}
    \vspace{-1.7em}
    \label{fig:intro:empty_steps}
\end{wrapfigure}
Discrete data generation has been a central focus of probabilistic modeling since the early development of neural networks~\cite{sutskever2011textrnn, mikolov2011subword, graves2013rnntext}. The goal is to build a model capable of generating sequences of symbolic units, known as \textit{tokens}, which can represent words in natural language data or pixels in images. The field has been largely shaped by autoregressive models (ARM) (e.g., \cite{vaswani2017transformer,radford2019language,touvron2023llama}), which capture the distribution of sequence data by factorizing it according to a prespecified left-to-right order. Recently, promising results from~\cite{lou2024sedd, nie2025llmdiff} have demonstrated that order-agnostic models, such as masked diffusion models (MDM) (e.g., \cite{lou2024sedd, nie2025llmdiff, sahoo2024simplifieddiff, shi2024simplifieddiff}), can be effectively extended to large-scale generation tasks, opening a new venue for discrete generative modeling.

Masked diffusion models are latent-variable generative models that introduce noise by progressively masking tokens within a sequence over time. During the reverse diffusion process, masked tokens are incrementally unmasked according to a predefined ratio. Each token takes one of two states, \textit{masked} or \textit{unmasked}. This binary representation introduces an inefficiency; the entire sequence often remains unchanged across consecutive sampling steps, causing the model to repeatedly process the same input. This phenomenon is illustrated in the green curve in Fig.~\ref{fig:intro:empty_steps}, which quantifies the number of such \textit{idle steps} by simulating the reverse diffusion process of an MDM~\cite{sahoo2024simplifieddiff}. The figure shows that 37\% of the 1,024 steps produce no update to the sequence during the reverse diffusion process. This inefficiency motivates our investigation of redefining the diffusion process to transform these idle steps into informative updates for improving the utilization of the model during generation.


\begin{figure}[t]
    \centering
    \footnotesize
    \vspace{-1em}
    \includegraphics[width=\linewidth]{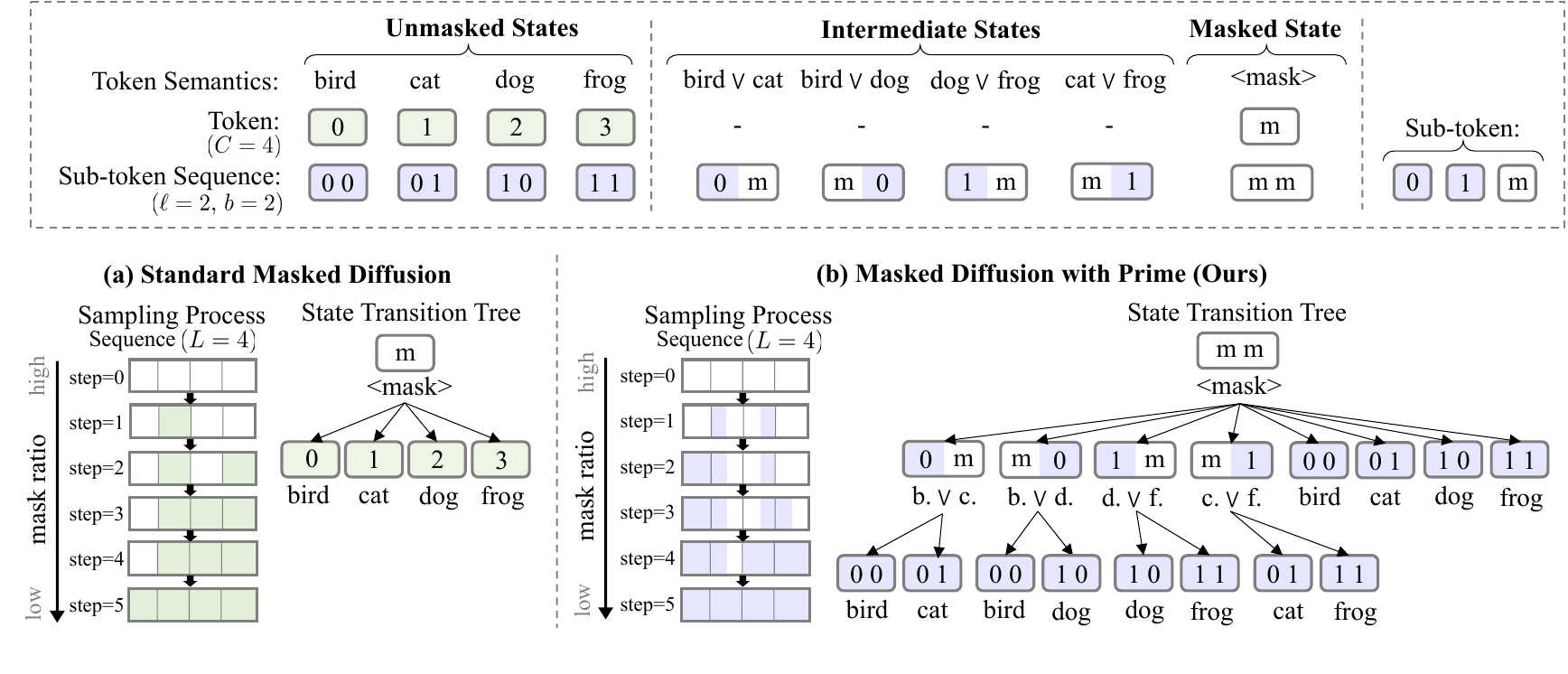}
    \vspace{-2.5em}
    \caption{An illustrative example of (a) standard MDM and (b) MDM-Prime. Each token and its corresponding sub-token sequence (constructed via base-$b$ encoding) can take one of three states: \textit{unmasked}, \textit{masked}, or \textit{intermediate}. The masked and intermediate states serve as the latent representations produced by the forward diffusion process. This example contains $C = 4$ possible token classes, labeled as `bird’, `cat’, `dog’, and `frog’. $\ell = 2$ indicates that each token is represented using two sub-tokens, and $b = \sqrt[\ell]{C} = 2$ denotes the number of classes per sub-token. The symbol \texttt{m} represents a masked token or a masked sub-token. The bottom-right sections of (a) and (b) illustrate the state transition trees. MDM-Prime supports transitions through intermediate states while retaining the ability to directly reach unmasked states. The bottom-left portions depict the sampling process for a token sequence of length $L = 4$. In (a), an idle step occurs between steps 3 and 4. In contrast, (b) demonstrates a sampling process without idle steps, which leads to improved model utilization.}
    \label{fig:intro:pm}
    \vspace{-0.75em}
\end{figure}
We propose a simple yet effective solution that allows each token to take an \textit{intermediate} state, which represents the interpolation between the masked and unmasked states, in the diffusion process. We refer to this method as the \underline{P}a\underline{r}t\underline{i}al \underline{m}asking schem\underline{e} (Prime), and denote MDM augmented with Prime as MDM-Prime. Prime represents each token as a sequence of sub-tokens using a base-$b$ encoding, with masking performed at the sub-token level. Since sub-tokens can be masked independently during the forward diffusion process, this method introduces intermediate states that partially reveal token information. An illustrative example is shown in the top of Fig.~\ref{fig:intro:pm}, where unmasked states `0-3' are first encoded as `00-11', and the intermediate states are obtained by masking one of the sub-tokens in the sub-token sequence. The intermediate states enable MDM-Prime to perform a fine-grained denoising process. For example, as illustrated in the `State Transition Tree' of Fig.~\ref{fig:intro:pm}~(b), a four-choice prediction can be decomposed into two binary decisions during the sampling process (e.g., \texttt{mm}$\to$\texttt{m}1$\to$11). Compared to standard MDM transitions (i.e., Fig.~\ref{fig:intro:pm}~(a)), MDM-Prime is capable of making predictions based on partially observed token information while deferring the final token revelation until later sampling steps. With its ability to transition through intermediate states, MDM-Prime demonstrates improved model utilization during sampling, as reflected in the reduced number of idle steps in Fig.~\ref{fig:intro:empty_steps} (i.e., the purple curves). This in turn leads to enhanced performance, as later presented in Section~\ref{sec:experiment}. The contributions of this work are as follows:

\begin{itemize}[leftmargin=12pt]
\vspace{-0.5em}
\item We propose MDM-Prime, a generalized MDM framework that enables intermediate token transitions. This framework can be optimized through a variational upper bound, which approximates the negative log-likelihood.
\item We present a simple implementation of MDM-Prime built upon the standard MDM architecture. Our design requires only minor modifications to the input embedding layer of the standard MDM.
\item We demonstrate that MDM-Prime achieves superior performance on both image and text generation tasks. On the OpenWebText dataset~\cite{gokaslan2019owt}, MDM-Prime attains an evaluation perplexity of 15.36, outperforming ARM (17.54), MDM~\cite{lou2024sedd, sahoo2024simplifieddiff, shi2024simplifieddiff, xu2025ebmdiff} (21.52), and their hybrid variants~\cite{xu2025ebmdiff, arriola2025block} (17.58). To the best of our knowledge, this is the first MDM-based approach to surpass ARM without relying on the autoregressive formulation. Furthermore, MDM-Prime achieves FID scores~\cite{heusel2017fid} of 3.26 on CIFAR-10~\cite{krizhevsky2009cifar10} and 6.98 on ImageNet-32~\cite{chrabaszcz2017imagenet}, demonstrating competitive performance comparable to leading continuous generative modeling methods~\cite{karras2020stylegan, ho2020ddpm, song2021scoreflow}.
\end{itemize}
\section{Background}
\label{sec:background}
We first provide background on continuous-time masked diffusion processes. Section~\ref{sec:background:forward} describes the forward process, while Section~\ref{sec:background:reverse} elaborates on the reverse process.

\subsection{Forward Diffusion Process}
\label{sec:background:forward}
Let $\X \triangleq \{0,\cdots,C-1\}$ denote a set of $C$ tokens, and let $\vx = [x^1, \cdots, x^L] \in \X^L$ be a sequence of tokens with length $L$, where the superscript indicates the index of each token. Let $\vx_0$ denote the sample drawn from a data distribution $p_{\text{data}}$, which is defined as a probability mass function (pmf) over $\mathcal{X}^L$. Given a continuous time variable $t \in [0,1]$, the latent variable introduced by the forward diffusion process is denoted as $\vx_t\in (\X \cup \{\texttt{m}\} )^L \triangleq \tilde{\X}^L$, where $\texttt{m}$ represents the masked token. In addition, let $\delta_{x'}(x)$ be the Kronecker delta function, which equals $1$ if $x = x'$ and $0$ otherwise. The forward diffusion process is performed through an element-wise conditional sampler $q(\vx_{t}|\vx_0)=\prod_{i=1}^L q(x_{t}^i|x^i_0)$ constructed by interpolating between $\delta_{\texttt{m}}(\cdot)$ and $\delta_{x_0^i}(\cdot)$, defined as follows~\cite{sahoo2024simplifieddiff, shi2024simplifieddiff, austin2021structurediff}:
\begin{equation}
\label{eq:background:diffusion_kernel_t0}
\begin{aligned}
    q(x^i_{t}|x^i_0)= (1-\alpha_t)\delta_{\texttt{m}} (x^i_{t})+\alpha_t \delta_{x^i_0}(x^i_{t}),
\end{aligned}
\end{equation}%
where $\alpha_t \in [0,1]$ is a strictly decreasing scheduling function with boundary conditions $\alpha_0 \approx 1$ and $\alpha_1 \approx 0$. Intuitively, each perturbed token $x_t^i$ retains the original value $x_0^i$ with probability $\alpha_t$, and is replaced by $\texttt{m}$ with probability $1-\alpha_t$. At time $t=1$, the latent variable $\vx_1=[\texttt{m},\cdots,\texttt{m}]$ consists entirely of masked tokens, exhibiting no randomness and revealing no information about $\vx_0$. This process can also be interpreted as decreasing the mutual information between $x^i_t$ and $x^i_0$ over time according to $\alpha_t$~\cite{dickstein2015diffusion,austin2021structurediff} (i.e., $I(x^i_t;x^i_0)= \alpha_t H(x^i_0)$, where $H(x^i_0)$ denotes the entropy of $x^i_0$) as explained in Appendix~\ref{apx:mdm:mutual_information}. Based on Eq.~(\ref{eq:background:diffusion_kernel_t0}), the forward diffusion process can be accomplished by first drawing a data sample $\vx_0\sim p_\text{data}(\cdot)$ and then applying masking to obtain $\vx_{t}\sim q(\cdot|\vx_0)$.

\subsection{Reverse Diffusion Process} 
\label{sec:background:reverse}
Let $s$ and $t$ be two time variables that satisfy $0 \leq s < t \leq 1$. The reverse diffusion process is performed by iterating through $p(\vx_{s}|\vx_t)$, starting from $\vx_1$. The distribution $p(\vx_{s}|\vx_t)$ can be derived using the conditional distributions $q(\vx_{t}|\vx_{s})$ and $q(\vx_{s}|\vx_{t},\vx_{0})$. In particular, the transition distribution $q(\vx_{t}|\vx_{s})=\prod_{i=1}^L q(x_{t}^i|x^i_s)$ is defined to be \textit{absorbing}~\cite{austin2021structurediff} on the masked state (i.e., a masked token remains masked from $s$ to $t$), and is derived from Eq.~(\ref{eq:background:diffusion_kernel_t0}) as follows~\cite{shi2024simplifieddiff}:
\begin{equation}
\label{eq:background:diffusion_kernel_ts}
\begin{aligned}
    q(x^i_t|x^i_{s})=\begin{cases}
    \frac{\alpha_{s}-\alpha_t}{\alpha_{s}} \delta_{\texttt{m}}(x^i_{t})+\frac{\alpha_t}{\alpha_{s}}\delta_{x^i_{s}}(x^i_{t})& \text{if }x^i_s \in \X,\\
    \delta_{x^i_s}(x^i_{t}) & \text{if } x^i_s = \texttt{m}.  
    \end{cases}    
\end{aligned}
\end{equation}
Based on Eqs.~(\ref{eq:background:diffusion_kernel_t0}) and (\ref{eq:background:diffusion_kernel_ts}), the posterior distribution $q(\vx_{s}|\vx_{t},\vx_{0})=\prod_{i=1}^L q(x^i_{s}|x^i_{t},x^i_{0})$ can be derived using Bayes’ rule and the Markov property of the diffusion process, and is expressed as~\cite{sahoo2024simplifieddiff,shi2024simplifieddiff}:
\begin{equation}
\label{eq:background:diffusion_kernel_st0}
\begin{aligned}
    q(x^i_{s}|x^i_{t},x^i_{0})= \begin{cases}
      \delta_{x^i_t}(x^i_{s}) & \text{if } x^i_t \in \X,\\
      \frac{1-\alpha_{s}}{1-\alpha_t}\delta_{\texttt{m}}(x^i_{s})+\frac{\alpha_{s}-\alpha_{t}}{1-\alpha_t} \delta_{x^i_0}(x^i_{s})& \text{if }x^i_t = \texttt{m}.
    \end{cases}    
\end{aligned}
\end{equation}%
Eq.~(\ref{eq:background:diffusion_kernel_st0}) indicates that, given $x^i_{0}$ and observing a masked token $x^i_t=\texttt{m}$, the reverse process transitions the masked token to its original value $x^i_{0}$ with probability $\frac{\alpha_{s}-\alpha_{t}}{1-\alpha_t}$ or retains its value with probability $\frac{1-\alpha_{s}}{1-\alpha_t}$. For the unmasked token $x^i_t\in \X$, its value remains unchanged for the remaining steps.

Since $p(\vx_{s}|\vx_t)=\E_{p(\vx_0|\vx_t)}[q(\vx_{s}|\vx_{t},\vx_{0})]$~\cite{austin2021structurediff}, \lance{the reverse diffusion process can be performed by first drawing $\vx_0\sim p(\cdot|\vx_t)$ and then sampling $\vx_{s} \sim q(\cdot |\vx_{t},\vx_{0})$}. Many recent works~\cite{nie2025llmdiff, sahoo2024simplifieddiff, shi2024simplifieddiff, austin2021structurediff, campbell2022ctmc, gat2024dfm, zheng2024mdm} choose to model $p(\vx_0|\vx_t)$ as $p_\theta(\vx_0|\vx_t)$, where $\theta$ denotes the model parameters. To facilitate computational efficiency, this distribution is typically factorized as $p_\theta(\vx_0|\vx_t)=\prod_{i=1}^L p_\theta (x_0^i|\vx_t)$~\cite{nie2025llmdiff, sahoo2024simplifieddiff, shi2024simplifieddiff, austin2021structurediff, campbell2022ctmc, gat2024dfm, zheng2024mdm}. The parameter $\theta$ can be optimized by estimating the negative log-likelihood $-\log p_\theta(\vx_0)$ using a variational upper bound, written as~\cite{sahoo2024simplifieddiff, shi2024simplifieddiff}:
\begin{equation}
\label{eq:background:diffusion_elbo}
\begin{aligned}
    \mathcal{L}_\text{vb}(\vx_0;\theta)=\int_0^1 \frac{\alpha'_t}{1-\alpha_t} \mathbb{E}_{q(\vx_t|\vx_0)}\left[\sum_{i=1}^L \log p_\theta (x_0^i|\vx_t) \right] dt,
\end{aligned}
\end{equation}

where $\alpha_t'=\frac{d}{dt} \alpha_t$. This objective specifies a cross-entropy loss, weighted by the coefficient $\frac{\alpha'_t}{1-\alpha_t}$, and satisfies $\mathcal{L}_\text{vb}(\boldsymbol{x}_0;\theta) \geq -\log p_\theta(\vx_0)$~\cite{sahoo2024simplifieddiff, shi2024simplifieddiff}. In addition, since unmasked elements in $\vx_t$ retain their values over time according to Eq.~(\ref{eq:background:diffusion_kernel_st0}), the \textit{carry-over} parameterization~\cite{sahoo2024simplifieddiff} (also referred to as the \textit{mean} parameterization in~\cite{shi2024simplifieddiff}) can be applied by explicitly setting the corresponding entries in $\vx_0$ to the unmasked values in $\vx_t$. Formally, this is expressed as $p_\theta(x_0^i | \vx_t) \triangleq \delta_{x_t^i}(x_0^i)$ for position $i$ where $x_t^i \in \X$. This technique ensures that the model avoids redundant predictions on unmasked elements, and therefore can effectively reduce $\mathcal{L}_\text{vb}$~\cite{sahoo2024simplifieddiff}.

\section{Methodology}
\label{sec:methodology}
In this section, we introduce a new MDM framework that incorporates the \underline{P}a\underline{r}t\underline{i}al \underline{m}asking schem\underline{e} (Prime), referred to as MDM-Prime. We begin by introducing an invertible function for constructing sub-token sequences and define a novel masked diffusion process over these sub-tokens, which enables intermediate state transitions and reduces the number of idle steps (Section~\ref{sec:methodology:diff_pm}). We then propose a novel parameterization that captures sub-token dependencies via joint probability distributions, and describe a simple model architecture design that operates over sub-token inputs (Section~\ref{sec:methodology:parameterization}).

\subsection{Discrete Diffusion via Partial Masking}
\label{sec:methodology:diff_pm}
The core idea of Prime is to represent each token $x^i_0$ with a sub-token sequence $\vy^i_0=[y^{i,1}_0,\cdots,y^{i,\ell}_0]$, allowing the creation of intermediate states during the element-wise masking of the forward diffusion process (i.e., Eq.~(\ref{eq:background:diffusion_kernel_t0})). Given a target length $\ell>1$, this can be achieved through the use of an invertible function $f$, which maps each token $x^i_0 \in \X$ to its base-$b$ encoding $\vy^i_0 \in \Y^\ell$ (i.e., a sequence of $\ell$ sub-tokens), where $\Y \triangleq \{0, \cdots, b-1\}$ denotes the set of sub-tokens with $b=\lceil\sqrt[\ell]{C}\rceil\in \N$.

\begin{wrapfigure}[]{r}{0.48\linewidth}
    \centering
    \vspace{-1.6em}
    \footnotesize
    \includegraphics[width=\linewidth]{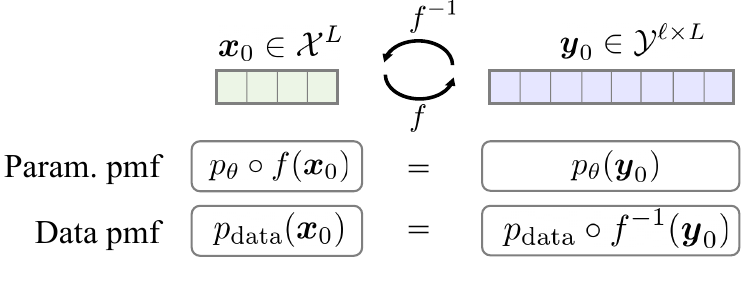}
    \vspace{-2em}
    \caption{Illustration of the data and parameterized pmf with an invertible $f$. `Param. pmf' denotes the parameterized pmf captured using an MDM with parameter $\theta$. In this example, $\ell=2$.}
    \vspace{-1.2em}
    \label{fig:methodology:invertible_func}
\end{wrapfigure}

The invertibility of $f$ enables the likelihood to be defined on the set of sub-tokens $\Y$. For convenience, we slightly abuse notation and extend $f$ to accept vector inputs: $f(\vx_0)\triangleq [f(x^1_0), \cdots, f(x^L_0)]=[\vy^1_0,\cdots,\vy^L_0]=[(y^{1,1}_0,\cdots,y^{1,\ell}_0),\cdots,(y^{L,1}_0,\cdots,y^{L,\ell}_0)]=\vy_0$. Its inverse is denoted as $f^{-1}$. Under this transformation, the pmf of the data, $p_{\text{data}}(\vx_0)$, can be equivalently written as $p_{\text{data}} \circ f^{-1}(\vy_0)$ according to the change-of-variable principle. Rather than directly modeling $\vx_0$, we model $\vy_0$ through MDM, and reconstruct $\vx_0$ using $f^{-1}(\vy_0)$, resulting in a parameterized pmf $p_\theta(\vy_0)=p_\theta \circ f(\vx_0)$. An illustrative example is provided in Fig.~\ref{fig:methodology:invertible_func}.

To learn $p_\theta(\vy_0)$ using MDM, each data point $\vx_0$ is first transformed into $\vy_0 = f(\vx_0)$, and its corresponding latent variable $\vy_t$ is sampled from $q(\vy_{t}|\vy_0)=\prod_{i=1}^L\prod_{j=1}^{\ell} q(y_{t}^{i,j}|y^{i,j}_0)$ according to Eq.~(\ref{eq:background:diffusion_kernel_t0}) by substituting $x$'s with $y$'s. The reverse diffusion process is parameterized by $p_\theta(\vy_0 | \vy_t)=\prod_{i=1}^L p_\theta(\vy_0^i| \vy_t)$, and the model is trained to minimize the loss $\mathcal{L}_{\text{vb}}(\vy_0; \theta)$ as in Eq.~(\ref{eq:background:diffusion_elbo}):
\begin{equation}
\label{eq:methodology:subtoken_elbo}
\begin{aligned}
    \mathcal{L}_\text{vb}(\vy_0;\theta)=\int_0^1 \frac{\alpha'_t}{1-\alpha_t} \mathbb{E}_{q(\vy_t|\vy_0)}\left[\sum_{i=1}^{L}  \log p_\theta(\vy^{i}_0|\vy_{t}) \right] dt.
\end{aligned}
\end{equation}%
Section~\ref{sec:methodology:parameterization} provides details on the parameterization of $p_\theta(\vy_0^i | \vy_t)$, while Appendix~\ref{apx:analysis:nll} shows that Eq.~(\ref{eq:methodology:subtoken_elbo}) defines a variational bound that approximates negative log-likelihood (NLL).

This new diffusion process operates on an augmented set $\tilde{\Y} \triangleq \Y \cup \{\texttt{m}\}$. For each token, the number of intermediate states introduced by $f$ and the diffusion process is given by $|\tilde{\Y}^\ell| - |\tilde{\X}| = (b+1)^\ell - (C+1)$. \textbf{Proposition}~\ref{prop:num_state} ensures that this number is always positive. This property allows MDM-Prime to learn smooth transitions via a rich set of intermediate states. Moreover, \textbf{Proposition}~\ref{prop:idle_step} and Eq.~(\ref{eq:apx:sub_idle_step}) formally establish that the number of idle steps decreases as $\ell$ increases, which guarantees an improved model utilization of MDM-Prime during the reverse diffusion process.

\subsection{Parameterization}
\label{sec:methodology:parameterization}

In this section, we discuss our implementation for $p_\theta(\vy^{i}_0|\vy_{t})$, which comprises a decoder for modeling the distribution of $\vy^{i}_0$ and an encoder for processing the input $\vy_{t}$. We begin by outlining the decoder design, followed by a description of the encoder.

\begin{figure}
    \begin{minipage}{0.595\linewidth}
	\centering
    \vspace{0.4em}
    \footnotesize
    \includegraphics[width=0.985\linewidth]{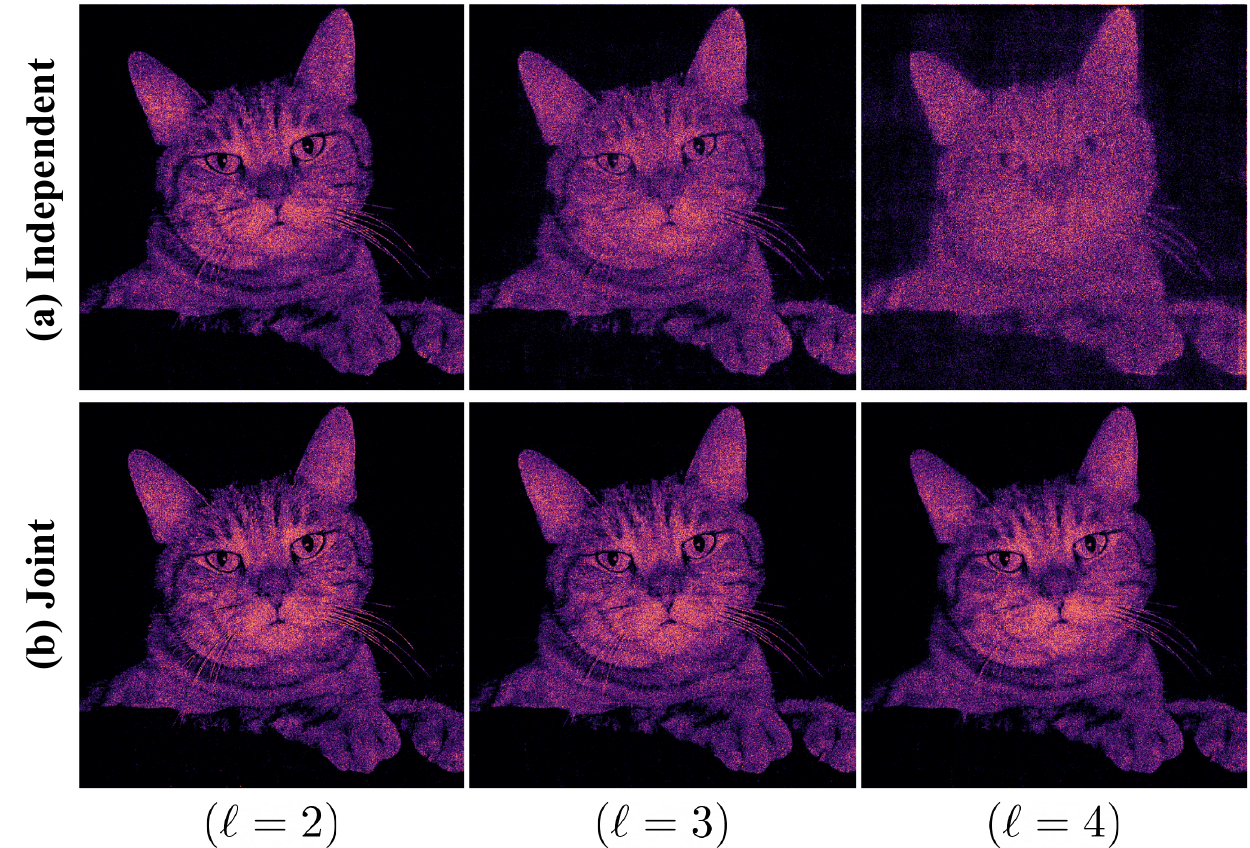}
    \vspace{-0.6em}
    \caption{Distributions modeled by MDM-Prime using (a) independent and (b) joint parameterizations. Models are trained on a two-dimensional synthetic dataset with $\vx_0\in[0,\cdots,511]^2$ representing the coordinate of the figure ($512\times 512$). Brighter regions indicate higher probabilities. Experimental details are offered in Appendix~\ref{apx:setups:toy}.}
    \label{fig:methodology:toy}
	\end{minipage}\hfill
	\begin{minipage}{0.38\linewidth}
	\centering
    \vspace{1em}
    \footnotesize
    \includegraphics[width=\linewidth]{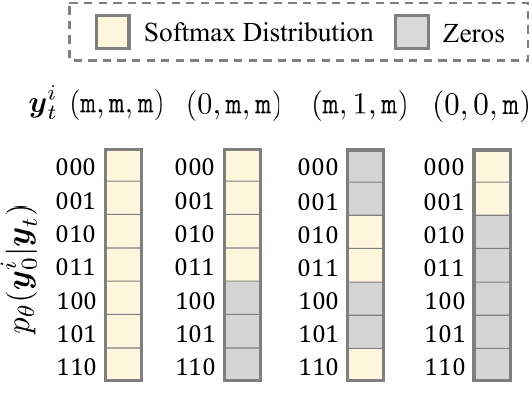}
    \vspace{-0.35em}
    \caption{An illustration of the proposed carry-over parameterization technique. In this example, $C = 7$, $\ell = 3$, and $b = 2$. The conditional distribution $p_\theta (\vy_0^i |\vy_t)$ is defined in Eq.~(\ref{eq:methodology:parameterization}). Softmax distributions are formed by normalizing the corresponding logit outputs highlighted in yellow.}
    \label{fig:methodology:marginalization}
	\end{minipage}
\end{figure}
\paragraph{Decoder Design via Joint Probability.}~A straightforward approach to implementing $p_\theta(\vy^{i}_0|\vy_{t})$ is to factorize it as $\prod_{j=1}^{\ell} p_\theta(y^{i,j}_0|\vy_{t})$, modeling each component with a softmax distribution. While this factorization allows for easy application of the \textit{carry-over} parameterization~\cite{sahoo2024simplifieddiff,shi2024simplifieddiff} (discussed in Section~\ref{sec:background:reverse}), it introduces two key challenges: (1) an independence assumption and (2) potential generation of invalid samples. First, the factorized form $p_\theta(\vy^{i}_0|\vy_{t})=\prod_{j=1}^{\ell} p_\theta(y^{i,j}_0|\vy_{t})$ imposes an additional independence assumption across sub-tokens, preventing the model from capturing dependencies between them. As shown in Fig.~\ref{fig:methodology:toy}~(a), increasing the sub-token sequence length $\ell$ leads to a deterioration in the sampling distribution due to this limitation. Second, since $f$ is defined as an injective function but not necessarily bijective (i.e., $|f(\X)|\leq |\Y^\ell|$), some samples $\vy_0$ produced by such a factorized model may not correspond to any valid $\vx_0$ under the inverse mapping $f^{-1}$. For instance, when encoding the GPT-2~\cite{radford2019language} vocabulary of $C=50,257$ classes with $\ell=4$ and $b=\lceil \sqrt[\ell]{C}\rceil=15$, the model may generate invalid sub-token sequences such as $\vy^i_0=(14,14,14,14)$ during inference time, even though there is no corresponding $x^i_0=50,624$ for decoding.

To address these two issues, we propose to model the joint distributions $p_\theta(\vy^{i}_0|\vy_t)$ of a sequence of $\ell$ sub-tokens while explicitly zeroing out the probability mass assigned to invalid samples. This is achieved by parameterizing only the logits of base-$b$ encoding $\vy^i_0\in f(\X)$ that correspond to a valid $x_0^i\in \X$, which results in exactly $C$ entries in the logit outputs for each position $i\in\{1,\cdots,L\}$. 

Based on the above joint probability design, to further support carry-over parameterization for MDM-Prime, the element-wise distribution should be defined as $p_\theta(y^{i,j}_0|\vy_t) \triangleq \delta_{y^{i,j}_t}(y^{i,j}_0)$ for all position $i,j$ where $y^{i,j}_t \in \Y$ (following the end of Section~\ref{sec:background:reverse}). Since we parameterize the joint distribution as $p_\theta(\vy^{i}_0|\vy_t)=p_\theta(y^{i,1}_0,\cdots,y^{i,\ell}_0|\vy_{t})$, this condition is imposed on the marginal distribution as follows:
\begin{equation}
\label{eq:methodology:marginalization}
\begin{aligned}
p_\theta(y^{i,j}_0|\vy_{t})=\sum_{y_0^{i,1},\,\cdots,\,y_0^{i,j-1},\,y_0^{i,j+1},\,\cdots,\,y_0^{i,\ell}\in \Y} p_\theta(y^{i,1}_0,\cdots,y^{i,\ell}_0|\vy_{t}) \triangleq \delta_{y^{i,j}_t}(y^{i,j}_0).
\end{aligned}
\end{equation}



%
To meet this condition, the probabilities of $\vy_0^i$ with any element $y_0^{i,j}$ that is inconsistent with $y_t^{i,j}$ should be explicitly set to zero. The parameterized probability can thus be defined as follows:
\begin{equation}
\label{eq:methodology:parameterization}
\begin{aligned}
    p_\theta(\vy^{i}_0|\vy_{t})=
    \begin{cases}
        \frac{\exp (E_\theta (\vy_0^i|\vy_t))}{ \sum_{\vy^i\in \V(\vy_t^i)} \exp (E_\theta (\vy^i|\vy_t)) },& \text{if $\vy^{i}_0\in \V(\vy_t^i)$},\\
        0,& \text{if $\vy^{i}_0\notin \V(\vy_t^i)$},
    \end{cases}
\end{aligned}
\end{equation}%
where $\V(\vy_t^i)\triangleq \{ \vy^i=[y^{i,1}, \cdots, y^{i,\ell}]  \in f(\X) \,\,\,s.t.\,\,\, (y^{i,j} = y^{i,j}_t) \vee (y^{i,j}_t = \texttt{m})\}$ denotes a set of outputs that is consistent with $y_t^{i,j}$, and $E_\theta:\Y^{\ell}\times \tilde{\Y}^{\ell\times L}\to \R$ is a scalar logit. \textbf{Proposition}~\ref{prop:param_condition} guarantees the correctness of this parameterization (i.e., Eq.~(\ref{eq:methodology:parameterization}) satisfies Eq.~(\ref{eq:methodology:marginalization})). Consider a simple example where $C=7$, $\ell=3$, and $b=2$, then $\V(\vy_t^i)$ corresponds to the following sets:
\begin{itemize}[leftmargin=12pt]
\vspace{-0.5em}
\item If $\vy_t^i=(\texttt{m},\texttt{m},\texttt{m})$, then $\V(\vy_t^i)=\{(0,0,0),(0,0,1),(0,1,0),(0,1,1),(1,0,0),(1,0,1),(1,1,0)\}$
\item If $\vy_t^i=(0,\texttt{m},\texttt{m})$, then $\V(\vy_t^i)=\{(0,0,0),(0,0,1),(0,1,0),(0,1,1)\}$ 
\item If $\vy_t^i=(0,0,\texttt{m})$, then $\V(\vy_t^i)=\{(0,0,0),(0,0,1)\}$ 
\vspace{-0.5em}
\end{itemize}
Note that $(1,1,1)$ is invalid since $C=7$ in this case. Some illustrative examples of $p_\theta(\vy^i_0|\vy_{t})$ are provided in Fig.~\ref{fig:methodology:marginalization}. As the reverse diffusion process progresses, the number of unmasked sub-tokens in $\vy_t^i$ increases, leading to a substantial reduction in $|\V(\vy_t^i)|$. This results in a decreasing number of candidate classes of $\vy^i_0$ over time, and thus explicitly reducing uncertainty in the prediction task. From an implementation perspective, Eq.~(\ref{eq:methodology:parameterization}) can be efficiently derived using precomputed filters indexed by $y_t^{i,j}$. During the forward pass of the model, the logits of $\vy^{i}_0\notin \V(\vy_t^i)$ are excluded according to these filters. Further implementation details are provided in Appendix~\ref{apx:analysis:carry_over} and Fig.~\ref{fig:implementation}.

\begin{wrapfigure}[]{r}{0.525\linewidth}
    \centering
    \vspace{-1.4em}
    \footnotesize
    \includegraphics[width=\linewidth]{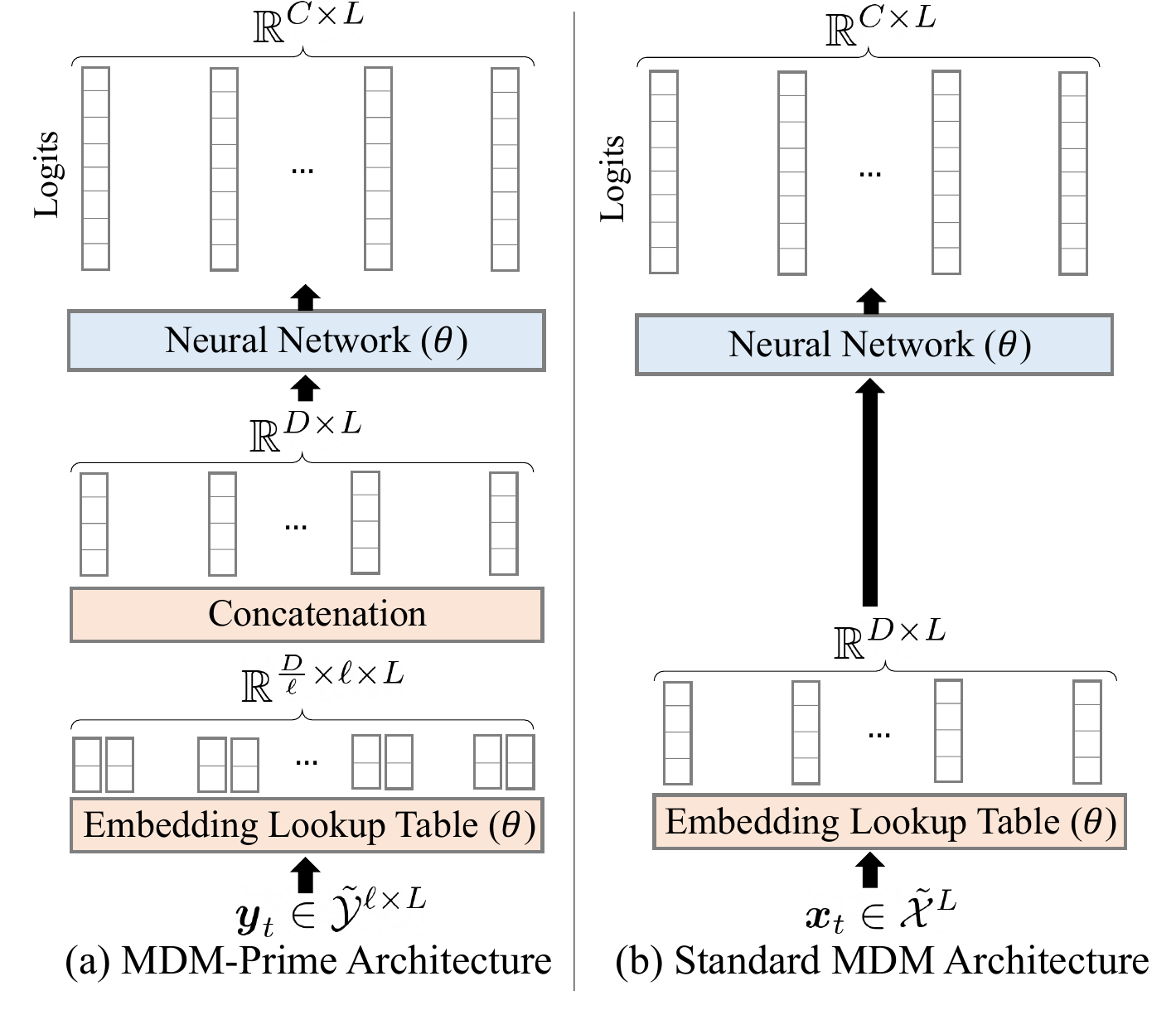}
    \vspace{-2.3em}
    \caption{Comparison between (a) MDM-Prime and (b) standard MDM architectures. The embedding lookup table in (a) has fewer learnable parameters since $|\tilde{\Y}|< |\tilde{\X}|$ and $D/\ell< D$.}
    \vspace{-1.7em}
    \label{fig:methodology:architecture_comp}
\end{wrapfigure}

\paragraph{Encoder Design for Processing Sub-tokens.}~In contrast to the decoder, where the distribution over sub-tokens $\vy^i_0 \in f(\X)$ is represented jointly using logit outputs with $C$ entries (see Figs.~\ref{fig:methodology:marginalization}~and~\ref{fig:methodology:architecture_comp}), the encoder receives noised inputs $\vy_t^i$ that lie in the augmented set $\tilde{\Y}^\ell$. Since the size of this set, $|\tilde{\Y}^\ell|$, may grow with $\ell$ and typically exhibits $|\tilde{\Y}^\ell| \gg C$ (also see Appendix~\ref{apx:analysis:intermediate}), creating an embedding lookup table for $\vy_t^i$ is impractical due to the resulting growth in the number of parameters in it. To address this issue, we propose to model each sub-token embedding separately (i.e., creating a lookup table for individual $y_t^{i,j}\in\tilde{\Y}$), followed by a merging operation to produce a token embedding.

In our approach, a simple merging operation based on concatenation is employed. Let $D$ denote the dimensionality of the token embedding vector. Each sub-token is first embedded into a vector of size $D/\ell$, and the resulting $\ell$ embeddings are concatenated to form a $D$-dimensional token embedding vector. This token embedding can then be processed by an arbitrary downstream neural network, which allows us to reuse the standard MDM architecture. A comparison between this design and a standard MDM architecture is shown in Fig.~\ref{fig:methodology:architecture_comp}. Alternative merging strategies, such as the Perceiver~\cite{jaegle2021perceiver} cross-attention mechanism, are discussed and evaluated in Appendices~\ref{apx:architecture} and~\ref{apx:experiments:ablation}, where we show that simple concatenation yields the best performance.

In summary, adapting a standard MDM to MDM-Prime requires only minimal architectural modifications on the embedding layer. This simple strategy preserves the overall architectural design of the standard MDM, enabling a fair comparison with our baseline in the following experiments.

\begin{table}[t]
    \renewcommand{\arraystretch}{1}
    \newcommand{\boldtoprule}{\toprule[1.2pt]}
    \newcommand{\boldbottomrule}{\bottomrule[1.2pt]}
    \centering
    \small
    \vspace{-1.5em}
    \caption{Zero-shot validation perplexities evaluated on seven textual datasets. Lower values correspond to better performance. Methods marked with * incorporate an autoregressive formulation. MDLM-Prime exhibits improved results on LAMBADA, PTB, and ArXiv.}
    \vspace{-0.75em}
    \resizebox{\linewidth}{!}{%
    \begin{tabular}{cccccccc}
        \boldtoprule
                                            & LAMBADA & WikiText & PTB & LM1B & AG News & PubMed & ArXiv \\
        \hline
        ARM*~\cite{sahoo2024simplifieddiff}  & 51.28 & 25.75 & 82.05  & 51.25 & \textbf{52.09}  & 49.01 & 41.73 \\
        BD3-LM*~\cite{arriola2025block}     & $\leq$50.03 & $\leq$31.31 & $\leq$96.81  & $\leq$60.88 & $\leq$61.67 & $\leq$42.52 & $\leq$39.20 \\
        EDLM-coAR*~\cite{xu2025ebmdiff}     & $\leq$50.04 & $\leq$28.31 & $\leq$89.73  & $\leq$60.23 & $\leq$57.94 & $\leq$46.31 & $\leq$39.02 \\
        \hline
        SEDD~\cite{lou2024sedd}             & $\leq$49.86 & $\leq$34.28 & $\leq$100.09 & $\leq$68.20 & $\leq$62.09 & $\leq$41.89  & $\leq$38.48 \\
        EDLM-NCE~\cite{xu2025ebmdiff}       & $\leq$46.92 & $\leq$30.77 & $\leq$93.21  & $\leq$63.19 & $\leq$60.02 & $\leq$41.80 & $\leq$36.63 \\
        \hline
        MDLM~\cite{sahoo2024simplifieddiff} & $\leq$47.52 & $\leq$32.83 & $\leq$95.26  & $\leq$67.01 & $\leq$61.15 & $\leq$41.89 & $\leq$37.37 \\
        MDLM-Prime ($\ell=2$) & $\leq$30.91 &  $\leq$27.93 & $\leq$74.81 & $\leq$55.50 & $\leq$63.21 & $\leq$\textbf{33.32} & $\leq$25.44 \\
        MDLM-Prime ($\ell=3$) & $\leq$27.75 & $\leq$27.42 & $\leq$60.13 & $\leq$42.69 & $\leq$62.58 & $\leq$41.14  & $\leq$\textbf{24.71} \\
        MDLM-Prime ($\ell=4$) & $\leq$\textbf{24.44} & $\leq$\textbf{23.86} & $\leq$53.98 & $\leq$38.02 & $\leq$59.44  & $\leq$48.64 & $\leq$25.83 \\
        MDLM-Prime ($\ell=6$) & $\leq$25.80 & $\leq$26.87 & $\leq$62.62 & $\leq$45.36 & $\leq$60.16  & $\leq$59.09 & $\leq$25.19 \\
        MDLM-Prime ($\ell=8$) & $\leq$25.23 & $\leq$25.77 & $\leq$\textbf{53.77} & $\leq$\textbf{38.00} & $\leq$64.89 & $\leq$54.50 &  $\leq$25.79 \\
        \boldbottomrule
    \end{tabular}}
    \label{tab:experiment:zeroshot}
    \vspace{-0.75em}
\end{table}

\section{Experiments}
\label{sec:experiment}
This section presents empirical evaluations to examine the effectiveness of the proposed method. We first report results in the text generation domain in Section~\ref{sec:experiment:text}. Then, we provide comparisons on the image generation benchmarks in Section~\ref{sec:experiment:image}.

\subsection{Text Generation}
\label{sec:experiment:text}

\paragraph{Configuration.}~In this set of experiments, models are trained on the OpenWebText (OWT) dataset~\cite{gokaslan2019owt}. The data are tokenized using the GPT-2 tokenizer~\cite{radford2019language}, which defines $L = 1,024$ and $C = 50,257$. The Masked Diffusion Language Model (MDLM)~\cite{sahoo2024simplifieddiff} is adopted as our baseline, and the experimental setup is consistent with~\cite{sahoo2024simplifieddiff}. Prime with different $\ell$ is applied to enhance its performance, and our method is denoted as MDLM-Prime in this section. We also include comparisons with several recent works~\cite{lou2024sedd, shi2024simplifieddiff, xu2025ebmdiff, arriola2025block}. In MDLM~\cite{sahoo2024simplifieddiff}, MDLM-Prime, and all other recent methods~\cite{lou2024sedd, sahoo2024simplifieddiff, shi2024simplifieddiff, xu2025ebmdiff, arriola2025block}, the core model architecture, i.e., the `Neural Network ($\theta$)' component in Fig.~\ref{fig:methodology:architecture_comp}, is a diffusion transformer \lance{(DiT-B)}~\cite{Peebles2022DiT} with rotary positional embeddings~\cite{su2023rotary}. \lance{The model contains 130M non-embedding parameters.} No temperature is applied to the output probabilities during training and test time. Detailed hyperparameters are offered in Appendix~\ref{apx:setups:text}. Additional results\lance{, including ablation studies and efficiency evaluation,} are provided in Appendices~\ref{apx:experiments:ablation} and \ref{apx:experiments:qualitative}.

\begin{wraptable}{r}{16.5em}
    \renewcommand{\arraystretch}{0.99}
    \newcommand{\boldtoprule}{\toprule[1.2pt]}
    \newcommand{\boldbottomrule}{\bottomrule[1.2pt]}
    \centering
    \small
    \vspace{-2em}
    \caption{PPL and ISR evaluation on OWT. Methods marked with * incorporate an autoregressive formulation. The symbol $\downarrow$ represents that lower values correspond to better performance. MDLM-Prime with $\ell\geq 3$ outperforms prior methods.}
    \vspace{0.5em}
    \resizebox{\linewidth}{!} {%
    \begin{tabular}{ccc}
        \boldtoprule
             & PPL ($\downarrow$) & ISR \\
        \hline
        ARM*~\cite{sahoo2024simplifieddiff}   & 17.54 & - \\
        BD3-LMs*~\cite{arriola2025block} & $\leq$20.73 & -\\
        EDLM-coAR*~\cite{xu2025ebmdiff} & $\leq$17.58 & -\\
        \hline
        SEDD~\cite{lou2024sedd}  & $\leq$24.10 & -\\
        GenMD4~\cite{shi2024simplifieddiff}  & $\leq$21.80 & -\\
        EDLM-NCE~\cite{xu2025ebmdiff}  & $\leq$21.52 & -\\
        \hline
        MDLM~\cite{sahoo2024simplifieddiff} & $\leq$22.98 & 36.77\%\\
        MDLM-Prime ($\ell=2$) & $\leq$17.90 & 13.52\%\\
        MDLM-Prime ($\ell=3$) & $\leq$16.36 & 4.97\%\\
        MDLM-Prime ($\ell=4$) & $\leq$15.62 & 1.83\%\\
        MDLM-Prime ($\ell=6$) & $\leq$\textbf{15.36} & 0.25\%\\
        MDLM-Prime ($\ell=8$) & $\leq$15.48 & 0.03\%\\
        \boldbottomrule
    \end{tabular}}
    \label{tab:experiment:nll}
    \vspace{-1em}
\end{wraptable}

\paragraph{Improvements to Likelihood Evaluation.}~We evaluate the models’ ability to capture the data distribution using the perplexity (PPL) metric~\cite{jurafsky2025slp3}. Table~\ref{tab:experiment:nll} reports PPL on OWT, along with the idle step ratio (ISR), which is defined as the proportion of idle steps relative to the total sampling steps and is computed using Eq.~(\ref{eq:apx:isr}). We observe that as $\ell$ increases, MDLM-Prime achieves lower PPL, with performance converging when $\ell \geq 4$. Since ISR also converges when $\ell \geq 4$, this trend suggests that ISR can serve as an indicator of improved likelihood modeling ability. We provide a further analysis of the relationship between performance and ISR, with a guideline for selecting $\ell$, in Appendix~\ref{apx:analysis:hyperparameter}. Moreover, MDLM-Prime with $\ell \geq 3$ outperforms ARM, MDM-based approaches~\cite{lou2024sedd, sahoo2024simplifieddiff, shi2024simplifieddiff, xu2025ebmdiff}, and their hybrid variants~\cite{arriola2025block, xu2025ebmdiff} by a noticeable margin in terms of PPL, indicating that incorporating intermediate state representations allows MDLM-Prime to model data likelihood more effectively. Instead of following recent approaches~\cite{xu2025ebmdiff,arriola2025block} that leverage an autoregressive formulation to enhance MDM performance, MDLM-Prime maintains an order-agnostic framework while achieving superior performance on textual data.

\paragraph{Improvements to Generalizability to Unseen Text Data.}~With the models trained on OWT, we then examine their generalizability to unseen textual datasets. To assess the models’ generalizability across diverse text domains, we report PPL on a suite of commonly used zero-shot benchmarks, including LAMBADA~\cite{paperno2016lambada}, WikiText~\cite{merity2016pointer}, Penn Treebank (PTB)~\cite{marcus1993building}, 1 Billion Word Benchmark (LM1B)~\cite{chelba2013one}, AG News~\cite{zhang2015character}, and Scientific Papers (PubMed and ArXiv subsets~\cite{cohan-etal-2018-discourse}). The results are reported in Table~\ref{tab:experiment:zeroshot}. MDLM-Prime exhibits superior results on LAMBADA, PTB, and ArXiv, and achieves comparable performance to ARM on WikiText. While it underperforms ARM on AG News, the overall results highlight its superior generalizability across multiple domains. Furthermore, our ablation study in Appendix~\ref{apx:experiments:ablation} reveals that the carry-over parameterization plays an important role in enhancing zero-shot performance, offering improvements on both LAMBADA and PubMed.

\subsection{Image Generation}
\label{sec:experiment:image}
\begin{table}
    \newcommand{\boldtoprule}{\toprule[1.2pt]}
    \newcommand{\boldbottomrule}{\bottomrule[1.2pt]}
    \renewcommand{\arraystretch}{1}
	\begin{minipage}{0.49\linewidth}
        \centering
        \footnotesize
        \caption{FID and IS evaluation on CIFAR-10. The arrow symbols $\uparrow / \downarrow$ represent that higher / lower results correspond to better performance.}
        \resizebox{\linewidth}{!}{%
        \begin{tabular}{lrr}
            \boldtoprule
            \multicolumn{3}{c}{CIFAR-10} \\ 
            \hline
            \multicolumn{3}{c}{Discrete} \\ 
            \hline
            Method                     & FID ($\downarrow$) & IS ($\uparrow$)\\ 
            \hline
            MDM         \scriptsize{(NFE=512)} &  4.66 & 9.09  \\
            MDM-Mixture \scriptsize{(NFE=512)} &  4.80 & 9.22  \\
            MDM-Prime \scriptsize{(NFE=512)} &  \textbf{3.26} & \textbf{9.67}  \\
            \hline 
            PixelCNN~\cite{oord2016pixelcnn}  & 65.93  &  4.60  \\
            D3PM Absorb~\cite{austin2021structurediff} \scriptsize{(NFE=1,000)}  & 30.97  &  6.78  \\
            D3PM Gauss.~\cite{austin2021structurediff} \scriptsize{(NFE=1,000)}  & 7.34  &  8.56  \\
            CTDD-DG~\cite{nisonoff2025guidance} \scriptsize{(NFE=1,000)}  & 7.86  &  8.91  \\
            Tau-LDR~\cite{campbell2022ctmc}  \scriptsize{(NFE=1,000)}     & 3.74  &  9.49  \\
            Discrete FM~\cite{gat2024dfm} \scriptsize{(NFE=1,024)}          &  3.63  &  - \\
            \hline
            \multicolumn{3}{c}{Continuous} \\ 
            \hline
            NCSN~\cite{song2019generative}	& 25.32  &  8.87  \\
            Continuous FM~\cite{lipman2023flowmatching}	 & 6.35	 &   -    \\
            Bit Diffusion~\cite{chen2023bitdiff}  & 3.48  &  -  \\
            StyleGAN+ADA~\cite{karras2020stylegan} & 3.26   &  \textbf{9.74}  \\
            DDPM~\cite{ho2020ddpm}  & \textbf{3.17}   &  9.46  \\
            \boldbottomrule
        \end{tabular}}
        \label{tab:experiment:benchmark_cifar}
	\end{minipage}\hfill
	\begin{minipage}{0.49\linewidth}
        \centering
        \footnotesize
        \caption{FID and IS evaluation on ImageNet-32. The arrow symbols $\uparrow / \downarrow$ represent that higher / lower results correspond to better performance.}
        \vspace{-0.1em}
        \resizebox{\linewidth}{!}{%
        \begin{tabular}{lrr}
            \boldtoprule
            \multicolumn{3}{c}{ImageNet-32} \\ 
            \hline
            \multicolumn{3}{c}{Discrete} \\ 
            \hline
            Method                     & FID ($\downarrow$) & IS ($\uparrow$)\\ 
            \hline
            MDM         \scriptsize{(NFE=1,024)} &  7.91  &  11.60 \\
            MDM-Mixture \scriptsize{(NFE=1,024)}~~~~~~~~~ &  8.08  &  11.56 \\
            MDM-Prime \scriptsize{(NFE=1,024)} &  \textbf{6.98} & \textbf{11.65}  \\
            \hline
            \multicolumn{3}{c}{Continuous} \\ 
            \hline
            QC-NCSN++~\cite{chao2023investigating}	&  19.62  &  9.94 \\
            NDM~\cite{bartosh2024neuraldiff}   &  17.02  &   -   \\
            DDPM~\cite{ho2020ddpm}     &  16.18	 &   -   \\
            MSGAN~\cite{tran2019msgan} &  12.30  &   -   \\
            i-DODE (SP)~\cite{zheng2023ode}  &  10.31  &   -   \\
            i-DODE (VP)~\cite{zheng2023ode}  &  9.09	 &   -   \\
            Stochastic Interp.~\cite{albergo2023stochastic}  &  8.49   &   -   \\
            Soft Trunc. DDPM~\cite{kim2022soft}  &  8.42   &  \textbf{11.82} \\
            ScoreFlow (subVP)~\cite{song2021scoreflow} &  8.87   &   -   \\
            ScoreFlow (VP)~\cite{song2021scoreflow}  &  8.34   &   -   \\
            Continuous FM~\cite{lipman2023flowmatching}  &  \textbf{5.02}   &   -    \\
            \boldbottomrule
        \end{tabular}}
        \label{tab:experiment:benchmark_imagenet}
	\end{minipage}
    \vspace{-0.5em}
\end{table}

\paragraph{Configuration.}~In this set of experiments, models are trained and evaluated on the CIFAR-10~\cite{krizhevsky2009cifar10} and ImageNet-32~\cite{chrabaszcz2017imagenet} datasets. For both datasets, the dimensionality is set to $L = 32 \times 32 \times 3$, with $C = 256$ corresponding to pixel intensity values. The core model architecture is adapted from the ablated diffusion model (ADM)~\cite{dhariwal2021diffusion}, which \lance{contains 114M parameters and} is the same as that used in~\cite{gat2024dfm}. Sample quality is evaluated using the widely adopted Fréchet Inception Distance (FID)~\cite{heusel2017fid} and Inception Score (IS)~\cite{barratt2018is} metrics. Experimental details are provided in Appendix~\ref{apx:setups:image}. Additional results are presented in Appendices~\ref{apx:experiments:qualitative} and~\ref{apx:experiments:traj}.

\begin{wrapfigure}[]{r}{20.75em}
    \centering
    \footnotesize
    \vspace{-1.5em}
    \includegraphics[width=\linewidth]{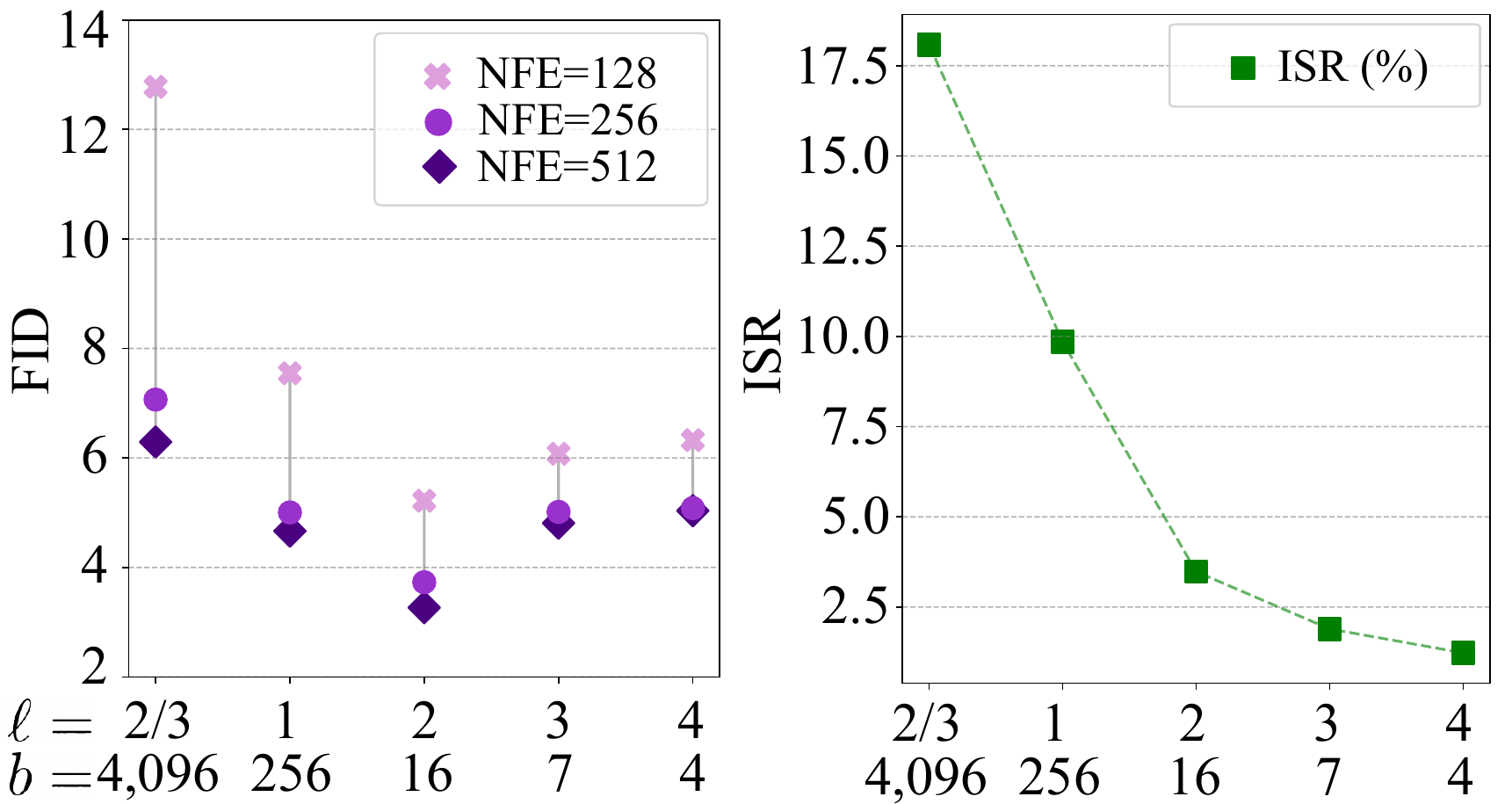}
    \vspace{-1.75em}
    \caption{FID and ISR evaluated under different $\ell$ on CIFAR-10. NFE denotes the number of function evaluations during sampling.}
    \label{fig:experiment:timestep_fid}
    \vspace{-1em}
\end{wrapfigure}

\paragraph{Improvements to Sample Quality.}~We first compare MDM-Prime with varying values of $\ell$ against the baseline configuration (i.e., $\ell = 1$). Due to the relatively small number of classes in the image experiments, we additionally explore the case of $\ell = 2/3$, where pixel values are merged into super-pixels, resulting in $b=\sqrt[2/3]{256}=4,096$ discrete classes in one of the experimental settings. As shown in Fig.~\ref{fig:experiment:timestep_fid}, the configuration with $\ell = 2$ achieves the best FID scores and exhibits a relatively low ISR compared to that of $\ell=1$ and $2/3$. While the settings with $\ell = 3$ and $\ell = 4$ perform comparably to the baseline, we observe that models with lower ISR are less sensitive to the number of function evaluations (NFE) during sampling, as reflected by the smaller FID performance gaps between NFE=$128$ and $512$. 

The benchmark results are reported in Tables~\ref{tab:experiment:benchmark_cifar} and~\ref{tab:experiment:benchmark_imagenet}, which include two baselines, MDM and MDM-Mixture, as well as several existing generative modeling approaches~\cite{karras2020stylegan, ho2020ddpm, song2021scoreflow, austin2021structurediff, campbell2022ctmc, gat2024dfm, oord2016pixelcnn, nisonoff2025guidance, song2019generative, lipman2023flowmatching, chen2023bitdiff, chao2023investigating, bartosh2024neuraldiff, tran2019msgan, zheng2023ode, albergo2023stochastic, kim2022soft}. The MDM baseline corresponds to the standard configuration with $\ell = 1$, while MDM-Mixture extends this baseline by incorporating a mixture distribution using an auxiliary variable, similar to~\cite{hayakawa2024distillation}. In this comparison, MDM-Prime adopts $\ell = 2$.

As shown in the tables, MDM and MDM-Mixture are inferior to MDM-Prime. On CIFAR-10, MDM-Prime achieves better results than the other discrete generative models while requiring fewer NFE, and attains performance comparable to StyleGAN+ADA~\cite{karras2020stylegan}. On ImageNet-32, MDM-Prime demonstrates improved performance over existing continuous diffusion and score-based models (i.e., \cite{ho2020ddpm, song2021scoreflow, chao2023investigating, bartosh2024neuraldiff, zheng2023ode, albergo2023stochastic, kim2022soft}), achieving an FID improvement of 1.36 over ScoreFlow (VP)~\cite{song2021scoreflow}.

\begin{wrapfigure}[]{r}{18.7em}
    \centering
    \footnotesize
    \vspace{-1.75em}
    \includegraphics[width=\linewidth]{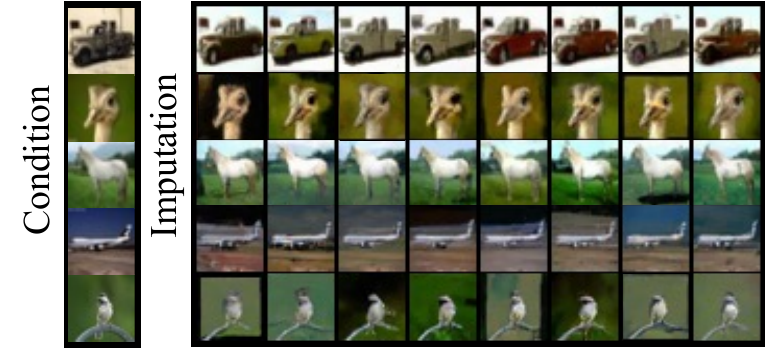}
    \vspace{-2em}
    \caption{Imputation results with conditional images obtained from CIFAR-10. MDM-Prime generates visually coherent image variants. More examples are provided in Appendix~\ref{apx:experiments:qualitative}.}
    \label{fig:experiment:imputation}
    \vspace{-1.5em}
\end{wrapfigure}

\paragraph{Imputation under Partial Masking.}~To further evaluate MDM-Prime’s capability for conditional image generation, we conduct experiments on image imputation. In contrast to text generation, where the positional index $j$ of a sub-token $y^{i,j}_0$ lacks explicit semantic interpretation, sub-token positions in the image domain have a direct influence on the resulting pixel values. Specifically, the first sub-token $y^{i,1}_0$ has the greatest impact on determining the final pixel intensity. In this experiment, each conditional image (denoted as `Condition' in Fig.~\ref{fig:experiment:imputation}) is first encoded into its corresponding sub-token representation. The first sub-token is retained, while the remaining sub-tokens are masked and subsequently predicted by our model. Some generated examples are shown in the `Imputation' section of Fig.~\ref{fig:experiment:imputation}. The results demonstrate that our method can generate visually coherent images conditioned on the preserved sub-tokens, highlighting its effectiveness in controlled image synthesis.

\section{Related Works}
\label{sec:related_works}

The general framework of discrete diffusion models was first introduced by~\cite{dickstein2015diffusion}, where the authors explored modeling binary data using a diffusion process. This framework was extended to real-world applications, such as text and image generation, by the authors in~\cite{austin2021structurediff}, who proposed various perturbation strategies to implement diffusion processes with discrete noise. The authors in~\cite{campbell2022ctmc} proposed generalizing this framework as Continuous-Time Markov Chains (CTMC). Inspired by the success of scaling MDM for text generation~\cite{lou2024sedd}, some works~\cite{sahoo2024simplifieddiff, shi2024simplifieddiff, ou2025radd} explored simplifications of the training objective of discrete diffusion models with latent variables represented using masked tokens. Other studies~\cite{gat2024dfm, shaul2025kinetic} investigated learning approaches based on a broader class of latent representations formulated through flow matching. 

Building on the theoretical foundation of MDM, several enhancement techniques~\cite{xu2025ebmdiff, arriola2025block, gat2024dfm, hayakawa2024distillation, zhao2024informed, wang2025remasking, deschenaux2025bydiff} have been proposed. In~\cite{arriola2025block}, the authors proposed an interpolation between ARM and MDM to capture the left-to-right structure in textual data. In~\cite{xu2025ebmdiff}, an ARM was employed as an energy-based function to guide the sampling process of an MDM, resulting in improved performance. Other works~\cite{gat2024dfm, zhao2024informed, wang2025remasking} modified the sampling process of MDM to selectively remask certain unmasked predictions to enhance sample quality. In addition, the authors in~\cite{hayakawa2024distillation, deschenaux2025bydiff} explored distillation techniques designed to reduce the number of sampling steps while maintaining sample quality.

Another line of research has explored the use of diffusion models with continuous noise distributions (i.e., Gaussian) for modeling discrete data. Representative methods include~\cite{chen2023bitdiff, dieleman2022cdcd, li2022diffllm, gulrajani2023plaid, zhang2024flowllm, hu2024flowllm}. 
Among these works, Bit Diffusion~\cite{chen2023bitdiff} shares similarities with our approach. In their method, discrete data are first encoded into bit representations, and a continuous diffusion model (with Gaussian kernels) is trained to generate these encodings. The generated outputs are then quantized back into discrete tokens. However, due to its reliance on quantization, the model's likelihood becomes intractable, which leads to its inability to directly capture the distribution of discrete data. 

\section{Conclusion}
\label{sec:conclusion}
Scientific progress has continually reshaped our understanding of what constitutes the most basic units of matter. Physicists initially believed that atoms were elementary units of matter. This view changed with the discoveries of the electron, the atomic nucleus, and eventually the development of the \textit{standard model}~\cite{Cottingham_Greenwood_2007}, which describes fundamental particles, their interactions, and how they combine to form atoms. In the context of generative models, we proposed Prime, a method to decompose the elementary unit of discrete data--tokens--into fine-grained subcomponents. MDM-Prime establishes a principled framework for perturbing and reconstructing discrete data using sub-token representations. Experimental results on both text and image generation tasks demonstrated that sub-token representations provide a more expressive modeling paradigm. We believe that this framework holds potential for addressing real-world problems that require fine-grained and precise modeling of discrete data.

\section*{Acknowledgements}
RGK is supported by a Canada CIFAR AI Chair and a Canada Research Chair Tier II in Computational Medicine (CRC-2022-00049). This research was supported by an NFRF Special Call NFRFR2022-00526. Resources used in preparing this research were provided, in part, by the Province of Ontario, the Government of Canada through CIFAR, and companies sponsoring the Vector Institute. The authors gratefully acknowledge the support from the National Science and Technology Council (NSTC) in Taiwan under grant numbers NSTC 114-2221-E-002-069-MY3, NSTC 113-2221-E-002-212-MY3, and NSTC 114-2218-E-A49-026. We also express our sincere appreciation to NVIDIA Corporation and the NVIDIA AI Technology Center (NVAITC) for the donation of GPUs and access to the Taipei-1 supercomputer. Furthermore, the authors extend their gratitude to the National Center for High-Performance Computing for providing the necessary computational and storage resources. Finally, we thank David Pellow, Vahid Belazadeh Meresht, and the anonymous reviewers for valuable feedback.
\bibliographystyle{unsrt}
\bibliography{citation}

\begin{thebibliography}{10}

\bibitem{sutskever2011textrnn}
Ilya Sutskever, James Martens, and Geoffrey Hinton.
\newblock {Generating Text with Recurrent Neural Networks}.
\newblock In {\em Proc. Int. Conf. on Machine Learning (ICML)}, 2011.

\bibitem{mikolov2011subword}
Tomas Mikolov, Ilya Sutskever, Anoop Deoras, Hai~Son Le, Stefan Kombrink, and Jan~Honza Cernocky.
\newblock {Subword Language Modeling with Neural Networks}.
\newblock 2011.

\bibitem{graves2013rnntext}
Alex Graves.
\newblock {Generating Sequences With Recurrent Neural Networks}.
\newblock {\em \tt arXiv:1308.0850 [cs.NE]}, 2013.

\bibitem{vaswani2017transformer}
Ashish Vaswani, Noam Shazeer, Niki Parmar, Jakob Uszkoreit, Llion Jones, Aidan~N. Gomez, Lukasz Kaiser, and Illia Polosukhin.
\newblock {Attention Is All You Need}.
\newblock In {\em Proc. of Int. Conf. on Neural Information Processing Systems (NeurIPS)}, 2017.

\bibitem{radford2019language}
Alec Radford, Jeff Wu, Rewon Child, David Luan, Dario Amodei, and Ilya Sutskever.
\newblock Language models are unsupervised multitask learners.
\newblock 2019.

\bibitem{touvron2023llama}
Hugo Touvron, Thibaut Lavril, Gautier Izacard, Xavier Martinet, Marie-Anne Lachaux, Timothée Lacroix, Baptiste Rozière, Naman Goyal, Eric Hambro, Faisal Azhar, Aurelien Rodriguez, Armand Joulin, Edouard Grave, and Guillaume Lample.
\newblock {LLaMA: Open and Efficient Foundation Language Models}.
\newblock {\em \tt arXiv:2302.13971 [cs.CL]}, 2023.

\bibitem{lou2024sedd}
Aaron Lou, Chenlin Meng, and Stefano Ermon.
\newblock {Discrete Diffusion Modeling by Estimating the Ratios of the Data Distribution}.
\newblock In {\em Proc. Int. Conf. on Machine Learning (ICML)}, 2024.

\bibitem{nie2025llmdiff}
Shen Nie, Fengqi Zhu, Zebin You, Xiaolu Zhang, Jingyang Ou, Jun Hu, Jun Zhou, Yankai Lin, Ji-Rong Wen, and Chongxuan Li.
\newblock {Large Language Diffusion Models}.
\newblock {\em \tt arXiv:2502.09992 [cs.CL]}, 2025.

\bibitem{sahoo2024simplifieddiff}
Subham~Sekhar Sahoo, Marianne Arriola, Yair Schiff, Aaron Gokaslan, Edgar Marroquin, Justin~T Chiu, Alexander Rush, and Volodymyr Kuleshov.
\newblock {Simple and Effective Masked Diffusion Language Models}.
\newblock In {\em Proc. of the Int. Conf. on Neural Information Processing Systems (NeurIPS)}, 2024.

\bibitem{shi2024simplifieddiff}
Jiaxin Shi, Kehang Han, Zhe Wang, Arnaud Doucet, and Michalis~K. Titsias.
\newblock {Simplified and Generalized Masked Diffusion for Discrete Data}.
\newblock In {\em Proc. of the Int. Conf. on Neural Information Processing Systems (NeurIPS)}, 2024.

\bibitem{gokaslan2019owt}
Aaron Gokaslan, Vanya Cohen, Ellie Pavlick, and Stefanie Tellex.
\newblock {Openwebtext corpus}.
\newblock {\em \tt http://Skylion007.github.io/OpenWebTextCorpus}, 2019.

\bibitem{xu2025ebmdiff}
Minkai Xu, Tomas Geffner, Karsten Kreis, Weili Nie, Yilun Xu, Jure Leskovec, Stefano Ermon, and Arash Vahdat.
\newblock {Energy-Based Diffusion Language Models for Text Generation}.
\newblock In {\em Proc. Int. Conf. on Learning Representations (ICLR)}, 2025.

\bibitem{arriola2025block}
Marianne Arriola, Aaron Gokaslan, Justin~T Chiu, Zhihan Yang, Zhixuan Qi, Jiaqi Han, Subham~Sekhar Sahoo, and Volodymyr Kuleshov.
\newblock {Block Diffusion: Interpolating Between Autoregressive and Diffusion Language Models}.
\newblock In {\em Proc. Int. Conf. on Learning Representations (ICLR)}, 2025.

\bibitem{heusel2017fid}
Martin Heusel, Hubert Ramsauer, Thomas Unterthiner, Bernhard Nessler, and Sepp Hochreiter.
\newblock {GANs Trained by a Two Time-Scale Update Rule Converge to a Local Nash Equilibrium}.
\newblock In {\em Proc. of Int. Conf. on Neural Information Processing Systems (NeurIPS)}, 2017.

\bibitem{krizhevsky2009cifar10}
Alex Krizhevsky and Geoffrey Hinton.
\newblock {Learning Multiple Layers of Features from Tiny Images}.
\newblock Technical report, University of Toronto, 2009.

\bibitem{chrabaszcz2017imagenet}
Patryk Chrabaszcz, Ilya Loshchilov, and Frank Hutter.
\newblock {A Downsampled Variant of ImageNet as an Alternative to the CIFAR datasets}.
\newblock {\em \tt arXiv:1707.08819 [cs.CV]}, 2017.

\bibitem{karras2020stylegan}
Tero Karras, Miika Aittala, Janne Hellsten, Samuli Laine, Jaakko Lehtinen, and Timo Aila.
\newblock {Training Generative Adversarial Networks with Limited Data}.
\newblock In {\em Proc. of Int. Conf. on Neural Information Processing Systems (NeurIPS)}, 2020.

\bibitem{ho2020ddpm}
Jonathan Ho, Ajay Jain, and Pieter Abbeel.
\newblock {Denoising Diffusion Probabilistic Models}.
\newblock In {\em Proc. of Int. Conf. on Neural Information Processing Systems (NeurIPS)}, 2020.

\bibitem{song2021scoreflow}
Yang Song, Conor Durkan, Iain Murray, and Stefano Ermon.
\newblock {Maximum Likelihood Training of Score-Based Diffusion Models}.
\newblock In {\em Proc. of Int. Conf. on Neural Information Processing Systems (NeurIPS)}, 2021.

\bibitem{austin2021structurediff}
Jacob Austin, Daniel~D. Johnson, Jonathan Ho, Daniel Tarlow, and Rianne van~den Berg.
\newblock {Structured Denoising Diffusion Models in Discrete State-Spaces}.
\newblock In {\em Proc. of the Int. Conf. on Neural Information Processing Systems (NeurIPS)}, 2021.

\bibitem{dickstein2015diffusion}
Jascha Sohl-Dickstein, Eric~A. Weiss, Niru Maheswaranathan, and Surya Ganguli.
\newblock {Deep Unsupervised Learning using Nonequilibrium Thermodynamics}.
\newblock In {\em Proc. Int. Conf. on Machine Learning (ICML)}, 2015.

\bibitem{campbell2022ctmc}
Andrew Campbell, Joe Benton, Valentin~De Bortoli, Tom Rainforth, George Deligiannidis, and Arnaud Doucet.
\newblock {A Continuous Time Framework for Discrete Denoising Models}.
\newblock In {\em Proc. of Int. Conf. on Neural Information Processing Systems (NeurIPS)}, 2022.

\bibitem{gat2024dfm}
Itai Gat, Tal Remez, Neta Shaul, Felix Kreuk, Ricky T.~Q. Chen, Gabriel Synnaeve, Yossi Adi, and Yaron Lipman.
\newblock {Discrete Flow Matching}.
\newblock In {\em Proc. of Int. Conf. on Neural Information Processing Systems (NeurIPS)}, 2024.

\bibitem{zheng2024mdm}
Kaiwen Zheng, Yongxin Chen, Hanzi Mao, Ming-Yu Liu, Jun Zhu, and Qinsheng Zhang.
\newblock {Masked Diffusion Models are Secretly Time-Agnostic Masked Models and Exploit Inaccurate Categorical Sampling}.
\newblock In {\em Proc. Int. Conf. on Learning Representations (ICLR)}, 2024.

\bibitem{jaegle2021perceiver}
Andrew Jaegle, Felix Gimeno, Andrew Brock, Andrew Zisserman, Oriol Vinyals, and Joao Carreira.
\newblock {Perceiver: General Perception with Iterative Attention}.
\newblock In {\em Proc. Int. Conf. on Machine Learning (ICML)}, 2021.

\bibitem{Peebles2022DiT}
William Peebles and Saining Xie.
\newblock {Scalable Diffusion Models with Transformers}.
\newblock {\em \tt arXiv:2212.09748 [cs.CV]}, 2022.

\bibitem{su2023rotary}
Jianlin Su, Yu~Lu, Shengfeng Pan, Ahmed Murtadha, Bo~Wen, and Yunfeng Liu.
\newblock {RoFormer: Enhanced Transformer with Rotary Position Embedding}.
\newblock {\em \tt arXiv:2104.09864 [cs.CL]}, 2023.

\bibitem{jurafsky2025slp3}
Daniel Jurafsky and James~H. Martin.
\newblock {\em Speech and Language Processing: An Introduction to Natural Language Processing, Computational Linguistics, and Speech Recognition}.
\newblock 3rd edition, 2025.

\bibitem{paperno2016lambada}
Denis Paperno, Germ{\'a}n Kruszewski, Angeliki Lazaridou, Ngoc~Quan Pham, Raffaella Bernardi, Sandro Pezzelle, Marco Baroni, Gemma Boleda, and Raquel Fern{\'a}ndez.
\newblock {The LAMBADA dataset: Word prediction requiring a broad discourse context}.
\newblock In {\em Proc. Annual Meeting of the Association for Computational Linguistics (ACL)}, 2016.

\bibitem{merity2016pointer}
Stephen Merity, Caiming Xiong, James Bradbury, and Richard Socher.
\newblock {Pointer Sentinel Mixture Models}.
\newblock {\em \tt arXiv:1609.07843 [cs.CL]}, 2016.

\bibitem{marcus1993building}
Mitchell~P. Marcus, Beatrice Santorini, and Mary~Ann Marcinkiewicz.
\newblock Building a large annotated corpus of english: The penn treebank.
\newblock {\em Computational Linguistics (CL)}, 19(2):313--330, 1993.

\bibitem{chelba2013one}
Ciprian Chelba, Tomas Mikolov, Mike Schuster, Qi~Ge, Thorsten Brants, Philipp Koehn, and Tony Robinson.
\newblock {One Billion Word Benchmark for Measuring Progress in Statistical Language Modeling}.
\newblock In {\em Proc. Conf. of the European Chapter of the Association for Computational Linguistics (EACL)}, pages 263--272. Association for Computational Linguistics, 2013.

\bibitem{zhang2015character}
Xiang Zhang, Junbo Zhao, and Yann LeCun.
\newblock {Character-level Convolutional Networks for Text Classification}.
\newblock In {\em Proc. of Int. Conf. on Neural Information Processing Systems (NeurIPS)}, 2015.

\bibitem{cohan-etal-2018-discourse}
Arman Cohan, Franck Dernoncourt, Doo~Soon Kim, Trung Bui, Seokhwan Kim, Walter Chang, and Nazli Goharian.
\newblock {A Discourse-Aware Attention Model for Abstractive Summarization of Long Documents}.
\newblock In {\em Proc. Conf. of the North American Chapter of the Association for Computational Linguistics (NAACL)}, 2018.

\bibitem{oord2016pixelcnn}
Aaron van~den Oord, Nal Kalchbrenner, Oriol Vinyals, Lasse Espeholt, Alex Graves, and Koray Kavukcuoglu.
\newblock {Conditional Image Generation with PixelCNN Decoders}.
\newblock In {\em Proc. of Int. Conf. on Neural Information Processing Systems (NeurIPS)}, 2016.

\bibitem{nisonoff2025guidance}
Hunter Nisonoff, Junhao Xiong, Stephan Allenspach, and Jennifer Listgarten.
\newblock {Unlocking Guidance for Discrete State-Space Diffusion and Flow Models}.
\newblock In {\em Proc. Int. Conf. on Learning Representations (ICLR)}, 2025.

\bibitem{song2019generative}
Yang Song and Stefano Ermon.
\newblock {Generative Modeling by Estimating Gradients of the Data Distribution}.
\newblock In {\em Proc. of Int. Conf. on Neural Information Processing Systems (NeurIPS)}, 2019.

\bibitem{lipman2023flowmatching}
Yaron Lipman, Ricky T.~Q. Chen, Heli Ben-Hamu, Maximilian Nickel, and Matt Le.
\newblock {Flow Matching for Generative Modeling}.
\newblock In {\em Proc. Int. Conf. on Learning Representations (ICLR)}, 2023.

\bibitem{chen2023bitdiff}
Ting Chen, Ruixiang Zhang, and Geoffrey Hinton.
\newblock {Analog Bits: Generating Discrete Data using Diffusion Models with Self-Conditioning}.
\newblock In {\em Proc. Int. Conf. on Learning Representations (ICLR)}, 2023.

\bibitem{chao2023investigating}
Chen-Hao Chao, Wei-Fang Sun, Bo-Wun Cheng, and Chun-Yi Lee.
\newblock {On Investigating the Conservative Property of Score-Based Generative Models}.
\newblock In {\em Proc. Int. Conf. on Machine Learning (ICML)}, 2023.

\bibitem{bartosh2024neuraldiff}
Grigory Bartosh, Dmitry Vetrov, and Christian~A. Naesseth.
\newblock {Neural Diffusion Models}.
\newblock In {\em Proc. Int. Conf. on Machine Learning (ICML)}, 2024.

\bibitem{tran2019msgan}
Ngoc-Trung Tran, Viet-Hung Tran, Ngoc-Bao Nguyen, Linxiao Yang, and Ngai-Man Cheung.
\newblock {Self-supervised GAN: Analysis and Improvement with Multi-class Minimax Game}.
\newblock In {\em Proc. of Int. Conf. on Neural Information Processing Systems (NeurIPS)}, 2019.

\bibitem{zheng2023ode}
Kaiwen Zheng, Cheng Lu, Jianfei Chen, and Jun Zhu.
\newblock {Improved Techniques for Maximum Likelihood Estimation for Diffusion ODEs}.
\newblock In {\em Proc. Int. Conf. on Machine Learning (ICML)}, 2023.

\bibitem{albergo2023stochastic}
Michael~S. Albergo and Eric Vanden-Eijnden.
\newblock {Building Normalizing Flows with Stochastic Interpolants}.
\newblock In {\em Proc. Int. Conf. on Learning Representations (ICLR)}, 2023.

\bibitem{kim2022soft}
Dongjun Kim, Seungjae Shin, Kyungwoo Song, Wanmo Kang, and Il-Chul Moon.
\newblock {Soft Truncation: A Universal Training Technique of Score-based Diffusion Model for High Precision Score Estimation}.
\newblock In {\em Proc. Int. Conf. on Machine Learning (ICML)}, 2022.

\bibitem{dhariwal2021diffusion}
Prafulla Dhariwal and Alex Nichol.
\newblock {Diffusion Models Beat GANs on Image Synthesis}.
\newblock In {\em Proc. of Int. Conf. on Neural Information Processing Systems (NeurIPS)}, 2021.

\bibitem{barratt2018is}
Shane Barratt and Rishi Sharma.
\newblock {A Note on the Inception Score}.
\newblock {\em Workshop on Theoretical Foundations and Applications of Deep Generative Models at Int. Conf. on Machine Learning (ICML)}, 2018.

\bibitem{hayakawa2024distillation}
Satoshi Hayakawa, Yuhta Takida, Masaaki Imaizumi, Hiromi Wakaki, and Yuki Mitsufuji.
\newblock {Distillation of Discrete Diffusion through Dimensional Correlations}.
\newblock {\em Machine Learning and Compression Workshop at the Int. Conf. on Neural Information Processing Systems (NeurIPS)}, 2024.

\bibitem{ou2025radd}
Jingyang Ou, Shen Nie, Kaiwen Xue, Fengqi Zhu, Jiacheng Sun, Zhenguo Li, and Chongxuan Li.
\newblock {Your Absorbing Discrete Diffusion Secretly Models the Conditional Distributions of Clean Data}.
\newblock {\em \tt arXiv:2406.03736 [cs.LG]}, 2025.

\bibitem{shaul2025kinetic}
Neta Shaul, Itai Gat, Marton Havasi, Daniel Severo, Anuroop Sriram, Peter Holderrieth, Brian Karrer, Yaron Lipman, and Ricky T.~Q. Chen.
\newblock {Flow Matching with General Discrete Paths: A Kinetic-Optimal Perspective}.
\newblock In {\em Proc. Int. Conf. on Learning Representations (ICLR)}, 2025.

\bibitem{zhao2024informed}
Yixiu Zhao, Jiaxin Shi, Feng Chen, Shaul Druckmann, Lester Mackey, and Scott Linderman.
\newblock {Informed Correctors for Discrete Diffusion Models}.
\newblock {\em \tt arXiv:2407.21243 [cs.LG]}, 2024.

\bibitem{wang2025remasking}
Guanghan Wang, Yair Schiff, Subham~Sekhar Sahoo, and Volodymyr Kuleshov.
\newblock {Remasking Discrete Diffusion Models with Inference-Time Scaling}.
\newblock {\em \tt arXiv:2503.00307 [cs.LG]}, 2025.

\bibitem{deschenaux2025bydiff}
Justin Deschenaux and Caglar Gulcehre.
\newblock {Beyond Autoregression: Fast LLMs via Self-Distillation Through Time}.
\newblock In {\em Proc. Int. Conf. on Learning Representations (ICLR)}, 2025.

\bibitem{dieleman2022cdcd}
Sander Dieleman, Laurent Sartran, Arman Roshannai, Nikolay Savinov, Yaroslav Ganin, Pierre~H. Richemond, Arnaud Doucet, Robin Strudel, Chris Dyer, Conor Durkan, Curtis Hawthorne, Rémi Leblond, Will Grathwohl, and Jonas Adler.
\newblock {Continuous diffusion for categorical data}.
\newblock {\em \tt arXiv:2211.15089 [cs.CL]}, 2022.

\bibitem{li2022diffllm}
Xiang~Lisa Li, John Thickstun, Ishaan Gulrajani, Percy Liang, and Tatsunori~B. Hashimoto.
\newblock {Diffusion-LM Improves Controllable Text Generation}.
\newblock In {\em Proc. of Int. Conf. on Neural Information Processing Systems (NeurIPS)}, 2022.

\bibitem{gulrajani2023plaid}
Ishaan Gulrajani and Tatsunori~B. Hashimoto.
\newblock {Likelihood-Based Diffusion Language Models}.
\newblock In {\em Proc. of Int. Conf. on Neural Information Processing Systems (NeurIPS)}, 2023.

\bibitem{zhang2024flowllm}
Shujian Zhang, Lemeng Wu, Chengyue Gong, and Xingchao Liu.
\newblock {Language Rectified Flow: Advancing Diffusion Language Generation with Probabilistic Flows}.
\newblock In {\em Proc. of Annual Conf. of the North American Chapter of the Association for Computational Linguistics (NAACL)}, 2024.

\bibitem{hu2024flowllm}
Vincent Hu, Di~Wu, Yuki Asano, Pascal Mettes, Basura Fernando, Björn Ommer, and Cees Snoek.
\newblock {Flow Matching for Conditional Text Generation in a Few Sampling Steps}.
\newblock In {\em Proc. of Conference of the European Chapter of the Association for Computational Linguistics (EACL)}, 2024.

\bibitem{Cottingham_Greenwood_2007}
W.~N. Cottingham and D.~A. Greenwood.
\newblock {An Introduction to the Standard Model of Particle Physics}.
\newblock 2007.

\bibitem{zhang2008cat}
Weiwei Zhang, Jian Sun, and Xiaoou Tang.
\newblock {Cat Head Detection - How to Effectively Exploit Shape and Texture Features}.
\newblock In {\em Proc. of European Conf. Computer Vision (ECCV)}, 2008.

\bibitem{ramachandran2017searching}
Prajit Ramachandran, Barret Zoph, and Quoc~V. Le.
\newblock {Searching for Activation Functions}.
\newblock {\em \tt arXiv:1710.05941v2 [cs.NE]}, 2017.

\bibitem{kingma2015adam}
Diederik~P. Kingma and Jimmy Ba.
\newblock {Adam: A Method for Stochastic Optimization}.
\newblock In {\em Proc. Int. Conf. on Learning Representations (ICLR)}, 2015.

\bibitem{loshchilov2019adamw}
Ilya Loshchilov and Frank Hutter.
\newblock {Decoupled Weight Decay Regularization}.
\newblock In {\em Proc. Int. Conf. on Learning Representations (ICLR)}, 2019.

\bibitem{liu2025dcd}
Anji Liu, Oliver Broadrick, Mathias Niepert, and Guy~Van den Broeck.
\newblock {Discrete Copula Diffusion}.
\newblock In {\em Proc. Int. Conf. on Learning Representations (ICLR)}, 2025.

\bibitem{Dinh2014NICENI}
Laurent Dinh, David Krueger, and Yoshua Bengio.
\newblock {NICE: Non-linear Independent Components Estimation}.
\newblock 2015.

\bibitem{Dinh2016DensityEU}
Laurent Dinh, Jascha Sohl-Dickstein, and Samy Bengio.
\newblock {Density Estimation using Real NVP}.
\newblock In {\em Proc. Int. Conf. on Learning Representations (ICLR)}, 2016.

\end{thebibliography}


\newpage
\section*{NeurIPS Paper Checklist}

\begin{enumerate}

\item {\bf Claims}
    \item[] Question: Do the main claims made in the abstract and introduction accurately reflect the paper's contributions and scope?
    \item[] Answer: \answerYes{} 
    \item[] Justification: The main claims presented in the abstract and the introduction section reflect the contributions of this paper. The key theoretical results are discussed in Section~\ref{sec:methodology}. The experiments in Section~\ref{sec:experiment} provide empirical justification for these claims. Finally, Section~\ref{sec:conclusion} summarizes both the theoretical and empirical contributions of this work.
    \item[] Guidelines:
    \begin{itemize}
        \item The answer NA means that the abstract and introduction do not include the claims made in the paper.
        \item The abstract and/or introduction should clearly state the claims made, including the contributions made in the paper and important assumptions and limitations. A No or NA answer to this question will not be perceived well by the reviewers. 
        \item The claims made should match theoretical and experimental results, and reflect how much the results can be expected to generalize to other settings. 
        \item It is fine to include aspirational goals as motivation as long as it is clear that these goals are not attained by the paper. 
    \end{itemize}

\item {\bf Limitations}
    \item[] Question: Does the paper discuss the limitations of the work performed by the authors?
    \item[] Answer: \answerYes{} 
    \item[] Justification: The limitations of this work are summarized in the Appendix~\ref{apx:limitation}. The discussion covers the assumptions made in this paper.
    \item[] Guidelines:
    \begin{itemize}
        \item The answer NA means that the paper has no limitation while the answer No means that the paper has limitations, but those are not discussed in the paper. 
        \item The authors are encouraged to create a separate "Limitations" section in their paper.
        \item The paper should point out any strong assumptions and how robust the results are to violations of these assumptions (e.g., independence assumptions, noiseless settings, model well-specification, asymptotic approximations only holding locally). The authors should reflect on how these assumptions might be violated in practice and what the implications would be.
        \item The authors should reflect on the scope of the claims made, e.g., if the approach was only tested on a few datasets or with a few runs. In general, empirical results often depend on implicit assumptions, which should be articulated.
        \item The authors should reflect on the factors that influence the performance of the approach. For example, a facial recognition algorithm may perform poorly when image resolution is low or images are taken in low lighting. Or a speech-to-text system might not be used reliably to provide closed captions for online lectures because it fails to handle technical jargon.
        \item The authors should discuss the computational efficiency of the proposed algorithms and how they scale with dataset size.
        \item If applicable, the authors should discuss possible limitations of their approach to address problems of privacy and fairness.
        \item While the authors might fear that complete honesty about limitations might be used by reviewers as grounds for rejection, a worse outcome might be that reviewers discover limitations that aren't acknowledged in the paper. The authors should use their best judgment and recognize that individual actions in favor of transparency play an important role in developing norms that preserve the integrity of the community. Reviewers will be specifically instructed to not penalize honesty concerning limitations.
    \end{itemize}

\item {\bf Theory assumptions and proofs}
    \item[] Question: For each theoretical result, does the paper provide the full set of assumptions and a complete (and correct) proof?
    \item[] Answer: \answerYes{} 
    \item[] Justification: The assumptions and proofs of our theoretical results are presented in detail in Appendices~\ref{apx:mdm} and \ref{apx:analysis}.
    \item[] Guidelines:
    \begin{itemize}
        \item The answer NA means that the paper does not include theoretical results. 
        \item All the theorems, formulas, and proofs in the paper should be numbered and cross-referenced.
        \item All assumptions should be clearly stated or referenced in the statement of any theorems.
        \item The proofs can either appear in the main paper or the supplemental material, but if they appear in the supplemental material, the authors are encouraged to provide a short proof sketch to provide intuition. 
        \item Inversely, any informal proof provided in the core of the paper should be complemented by formal proofs provided in appendix or supplemental material.
        \item Theorems and Lemmas that the proof relies upon should be properly referenced. 
    \end{itemize}

    \item {\bf Experimental result reproducibility}
    \item[] Question: Does the paper fully disclose all the information needed to reproduce the main experimental results of the paper to the extent that it affects the main claims and/or conclusions of the paper (regardless of whether the code and data are provided or not)?
    \item[] Answer: \answerYes{} 
    \item[] Justification: This paper discloses the information needed to reproduce the experimental results. The experimental configurations, detailed hyperparameter setups, and hardware requirements for the experiments are elaborated in Appendix~\ref{apx:setups}.
    \item[] Guidelines:
    \begin{itemize}
        \item The answer NA means that the paper does not include experiments.
        \item If the paper includes experiments, a No answer to this question will not be perceived well by the reviewers: Making the paper reproducible is important, regardless of whether the code and data are provided or not.
        \item If the contribution is a dataset and/or model, the authors should describe the steps taken to make their results reproducible or verifiable. 
        \item Depending on the contribution, reproducibility can be accomplished in various ways. For example, if the contribution is a novel architecture, describing the architecture fully might suffice, or if the contribution is a specific model and empirical evaluation, it may be necessary to either make it possible for others to replicate the model with the same dataset, or provide access to the model. In general. releasing code and data is often one good way to accomplish this, but reproducibility can also be provided via detailed instructions for how to replicate the results, access to a hosted model (e.g., in the case of a large language model), releasing of a model checkpoint, or other means that are appropriate to the research performed.
        \item While NeurIPS does not require releasing code, the conference does require all submissions to provide some reasonable avenue for reproducibility, which may depend on the nature of the contribution. For example
        \begin{enumerate}
            \item If the contribution is primarily a new algorithm, the paper should make it clear how to reproduce that algorithm.
            \item If the contribution is primarily a new model architecture, the paper should describe the architecture clearly and fully.
            \item If the contribution is a new model (e.g., a large language model), then there should either be a way to access this model for reproducing the results or a way to reproduce the model (e.g., with an open-source dataset or instructions for how to construct the dataset).
            \item We recognize that reproducibility may be tricky in some cases, in which case authors are welcome to describe the particular way they provide for reproducibility. In the case of closed-source models, it may be that access to the model is limited in some way (e.g., to registered users), but it should be possible for other researchers to have some path to reproducing or verifying the results.
        \end{enumerate}
    \end{itemize}

\item {\bf Open access to data and code}
    \item[] Question: Does the paper provide open access to the data and code, with sufficient instructions to faithfully reproduce the main experimental results, as described in supplemental material?
    \item[] Answer: \answerYes{} 
    \item[] Justification: All the data used in this paper are publicly available. The code and installation instructions are available in an anonymous repository, with the link provided in Appendix~\ref{apx:setups}.
    \item[] Guidelines:
    \begin{itemize}
        \item The answer NA means that paper does not include experiments requiring code.
        \item Please see the NeurIPS code and data submission guidelines (\url{https://nips.cc/public/guides/CodeSubmissionPolicy}) for more details.
        \item While we encourage the release of code and data, we understand that this might not be possible, so “No” is an acceptable answer. Papers cannot be rejected simply for not including code, unless this is central to the contribution (e.g., for a new open-source benchmark).
        \item The instructions should contain the exact command and environment needed to run to reproduce the results. See the NeurIPS code and data submission guidelines (\url{https://nips.cc/public/guides/CodeSubmissionPolicy}) for more details.
        \item The authors should provide instructions on data access and preparation, including how to access the raw data, preprocessed data, intermediate data, and generated data, etc.
        \item The authors should provide scripts to reproduce all experimental results for the new proposed method and baselines. If only a subset of experiments are reproducible, they should state which ones are omitted from the script and why.
        \item At submission time, to preserve anonymity, the authors should release anonymized versions (if applicable).
        \item Providing as much information as possible in supplemental material (appended to the paper) is recommended, but including URLs to data and code is permitted.
    \end{itemize}

\item {\bf Experimental setting/details}
    \item[] Question: Does the paper specify all the training and test details (e.g., data splits, hyperparameters, how they were chosen, type of optimizer, etc.) necessary to understand the results?
    \item[] Answer: \answerYes{} 
    \item[] Justification: The experimental configurations, detailed hyperparameter setups, and hardware requirements are elaborated in Appendix~\ref{apx:setups}.
    \item[] Guidelines:
    \begin{itemize}
        \item The answer NA means that the paper does not include experiments.
        \item The experimental setting should be presented in the core of the paper to a level of detail that is necessary to appreciate the results and make sense of them.
        \item The full details can be provided either with the code, in appendix, or as supplemental material.
    \end{itemize}

\item {\bf Experiment statistical significance}
    \item[] Question: Does the paper report error bars suitably and correctly defined or other appropriate information about the statistical significance of the experiments?
    \item[] Answer: \answerYes{} 
    \item[] Justification: The results in Fig.~\ref{fig:empty_step_analytic} show the mean values with shaded regions representing the variances. The caption of Table~\ref{tab:time_eval} offers the confidence interval. Results in Tables~\ref{tab:experiment:zeroshot}-\ref{tab:experiment:benchmark_imagenet} follow prior work by reporting performance from a single run, due to the high computational cost of training multiple variants.
    \item[] Guidelines:
    \begin{itemize}
        \item The answer NA means that the paper does not include experiments.
        \item The authors should answer "Yes" if the results are accompanied by error bars, confidence intervals, or statistical significance tests, at least for the experiments that support the main claims of the paper.
        \item The factors of variability that the error bars are capturing should be clearly stated (for example, train/test split, initialization, random drawing of some parameter, or overall run with given experimental conditions).
        \item The method for calculating the error bars should be explained (closed form formula, call to a library function, bootstrap, etc.)
        \item The assumptions made should be given (e.g., Normally distributed errors).
        \item It should be clear whether the error bar is the standard deviation or the standard error of the mean.
        \item It is OK to report 1-sigma error bars, but one should state it. The authors should preferably report a 2-sigma error bar than state that they have a 96\% CI, if the hypothesis of Normality of errors is not verified.
        \item For asymmetric distributions, the authors should be careful not to show in tables or figures symmetric error bars that would yield results that are out of range (e.g. negative error rates).
        \item If error bars are reported in tables or plots, The authors should explain in the text how they were calculated and reference the corresponding figures or tables in the text.
    \end{itemize}

\item {\bf Experiments compute resources}
    \item[] Question: For each experiment, does the paper provide sufficient information on the computer resources (type of compute workers, memory, time of execution) needed to reproduce the experiments?
    \item[] Answer: \answerYes{} 
    \item[] Justification: The hardware requirements (e.g., the computational hardware configurations and the execution time) for each experiment are elaborated in Appendix~\ref{apx:setups}.
    \item[] Guidelines:
    \begin{itemize}
        \item The answer NA means that the paper does not include experiments.
        \item The paper should indicate the type of compute workers CPU or GPU, internal cluster, or cloud provider, including relevant memory and storage.
        \item The paper should provide the amount of compute required for each of the individual experimental runs as well as estimate the total compute. 
        \item The paper should disclose whether the full research project required more compute than the experiments reported in the paper (e.g., preliminary or failed experiments that didn't make it into the paper). 
    \end{itemize}
    
\item {\bf Code of ethics}
    \item[] Question: Does the research conducted in the paper conform, in every respect, with the NeurIPS Code of Ethics \url{https://neurips.cc/public/EthicsGuidelines}?
    \item[] Answer: \answerYes{} 
    \item[] Justification: All authors have reviewed the NeurIPS Code of Ethics and confirmed that the research conducted in this paper complies with it.
    \item[] Guidelines:
    \begin{itemize}
        \item The answer NA means that the authors have not reviewed the NeurIPS Code of Ethics.
        \item If the authors answer No, they should explain the special circumstances that require a deviation from the Code of Ethics.
        \item The authors should make sure to preserve anonymity (e.g., if there is a special consideration due to laws or regulations in their jurisdiction).
    \end{itemize}

\item {\bf Broader impacts}
    \item[] Question: Does the paper discuss both potential positive societal impacts and negative societal impacts of the work performed?
    \item[] Answer: \answerYes{} 
    \item[] Justification: This paper discusses its potential impacts in Appendix~\ref{apx:impacts}.
    \item[] Guidelines:
    \begin{itemize}
        \item The answer NA means that there is no societal impact of the work performed.
        \item If the authors answer NA or No, they should explain why their work has no societal impact or why the paper does not address societal impact.
        \item Examples of negative societal impacts include potential malicious or unintended uses (e.g., disinformation, generating fake profiles, surveillance), fairness considerations (e.g., deployment of technologies that could make decisions that unfairly impact specific groups), privacy considerations, and security considerations.
        \item The conference expects that many papers will be foundational research and not tied to particular applications, let alone deployments. However, if there is a direct path to any negative applications, the authors should point it out. For example, it is legitimate to point out that an improvement in the quality of generative models could be used to generate deepfakes for disinformation. On the other hand, it is not needed to point out that a generic algorithm for optimizing neural networks could enable people to train models that generate Deepfakes faster.
        \item The authors should consider possible harms that could arise when the technology is being used as intended and functioning correctly, harms that could arise when the technology is being used as intended but gives incorrect results, and harms following from (intentional or unintentional) misuse of the technology.
        \item If there are negative societal impacts, the authors could also discuss possible mitigation strategies (e.g., gated release of models, providing defenses in addition to attacks, mechanisms for monitoring misuse, mechanisms to monitor how a system learns from feedback over time, improving the efficiency and accessibility of ML).
    \end{itemize}
    
\item {\bf Safeguards}
    \item[] Question: Does the paper describe safeguards that have been put in place for responsible release of data or models that have a high risk for misuse (e.g., pretrained language models, image generators, or scraped datasets)?
    \item[] Answer: \answerNA{} 
    \item[] Justification: The paper poses no risk for misuse.
    \item[] Guidelines:
    \begin{itemize}
        \item The answer NA means that the paper poses no such risks.
        \item Released models that have a high risk for misuse or dual-use should be released with necessary safeguards to allow for controlled use of the model, for example by requiring that users adhere to usage guidelines or restrictions to access the model or implementing safety filters. 
        \item Datasets that have been scraped from the Internet could pose safety risks. The authors should describe how they avoided releasing unsafe images.
        \item We recognize that providing effective safeguards is challenging, and many papers do not require this, but we encourage authors to take this into account and make a best faith effort.
    \end{itemize}

\item {\bf Licenses for existing assets}
    \item[] Question: Are the creators or original owners of assets (e.g., code, data, models), used in the paper, properly credited and are the license and terms of use explicitly mentioned and properly respected?
    \item[] Answer: \answerYes{} 
    \item[] Justification: The creators of assets are properly credited through citations, and the license is included in the asset (see Appendix~\ref{apx:setups}).
    \item[] Guidelines:
    \begin{itemize}
        \item The answer NA means that the paper does not use existing assets.
        \item The authors should cite the original paper that produced the code package or dataset.
        \item The authors should state which version of the asset is used and, if possible, include a URL.
        \item The name of the license (e.g., CC-BY 4.0) should be included for each asset.
        \item For scraped data from a particular source (e.g., website), the copyright and terms of service of that source should be provided.
        \item If assets are released, the license, copyright information, and terms of use in the package should be provided. For popular datasets, \url{paperswithcode.com/datasets} has curated licenses for some datasets. Their licensing guide can help determine the license of a dataset.
        \item For existing datasets that are re-packaged, both the original license and the license of the derived asset (if it has changed) should be provided.
        \item If this information is not available online, the authors are encouraged to reach out to the asset's creators.
    \end{itemize}

\item {\bf New assets}
    \item[] Question: Are new assets introduced in the paper well documented and is the documentation provided alongside the assets?
    \item[] Answer: \answerYes{} 
    \item[] Justification: The assets (e.g., code, installation instructions, and running commands) are summarized in an anonymous repository, with the link provided in Appendix~\ref{apx:setups}. The datasets are publicly available, and the experiments are performed on them with the default setup.
    \item[] Guidelines:
    \begin{itemize}
        \item The answer NA means that the paper does not release new assets.
        \item Researchers should communicate the details of the dataset/code/model as part of their submissions via structured templates. This includes details about training, license, limitations, etc. 
        \item The paper should discuss whether and how consent was obtained from people whose asset is used.
        \item At submission time, remember to anonymize your assets (if applicable). You can either create an anonymized URL or include an anonymized zip file.
    \end{itemize}

\item {\bf Crowdsourcing and research with human subjects}
    \item[] Question: For crowdsourcing experiments and research with human subjects, does the paper include the full text of instructions given to participants and screenshots, if applicable, as well as details about compensation (if any)? 
    \item[] Answer: \answerNA{} 
    \item[] Justification: The paper does not involve crowdsourcing or research with human subjects.
    \item[] Guidelines:
    \begin{itemize}
        \item The answer NA means that the paper does not involve crowdsourcing nor research with human subjects.
        \item Including this information in the supplemental material is fine, but if the main contribution of the paper involves human subjects, then as much detail as possible should be included in the main paper. 
        \item According to the NeurIPS Code of Ethics, workers involved in data collection, curation, or other labor should be paid at least the minimum wage in the country of the data collector. 
    \end{itemize}

\item {\bf Institutional review board (IRB) approvals or equivalent for research with human subjects}
    \item[] Question: Does the paper describe potential risks incurred by study participants, whether such risks were disclosed to the subjects, and whether Institutional Review Board (IRB) approvals (or an equivalent approval/review based on the requirements of your country or institution) were obtained?
    \item[] Answer: \answerNA{} 
    \item[] Justification: This paper does not involve research with human subjects.
    \item[] Guidelines:
    \begin{itemize}
        \item The answer NA means that the paper does not involve crowdsourcing nor research with human subjects.
        \item Depending on the country in which research is conducted, IRB approval (or equivalent) may be required for any human subjects research. If you obtained IRB approval, you should clearly state this in the paper. 
        \item We recognize that the procedures for this may vary significantly between institutions and locations, and we expect authors to adhere to the NeurIPS Code of Ethics and the guidelines for their institution. 
        \item For initial submissions, do not include any information that would break anonymity (if applicable), such as the institution conducting the review.
    \end{itemize}

\item {\bf Declaration of LLM usage}
    \item[] Question: Does the paper describe the usage of LLMs if it is an important, original, or non-standard component of the core methods in this research? Note that if the LLM is used only for writing, editing, or formatting purposes and does not impact the core methodology, scientific rigorousness, or originality of the research, declaration is not required.
    \item[] Answer:\answerNA{} 
    \item[] Justification: LLMs are only used for editing grammar.
    \item[] Guidelines:
    \begin{itemize}
        \item The answer NA means that the core method development in this research does not involve LLMs as any important, original, or non-standard components.
        \item Please refer to our LLM policy (\url{https://neurips.cc/Conferences/2025/LLM}) for what should or should not be described.
    \end{itemize}

\end{enumerate}

\newpage
\newpage
\appendix

\setcounter{section}{0}
\setcounter{equation}{0}
\setcounter{figure}{0}
\setcounter{table}{0}

\renewcommand{\thefigure}{A\arabic{figure}}
\renewcommand{\thetable}{A\arabic{table}}
\renewcommand{\theequation}{A\arabic{equation}}

\section{Appendix}
\label{apx}

In this appendix, we provide additional analyses and experiments. Section~\ref{apx:mdm} presents some theoretical properties of masked diffusion models (MDM). Section~\ref{apx:analysis} offers an analysis of the proposed \underline{P}a\underline{r}t\underline{i}al \underline{m}asking schem\underline{e} (Prime). Section~\ref{apx:architecture} outlines a number of architectural designs of MDM augmented with Prime (i.e., MDM-Prime). Section~\ref{apx:setups} details the experimental configurations. Section~\ref{apx:experiments} reports additional experimental results. Section~\ref{apx:limitation} summarizes the limitations of this work. Finally, Section~\ref{apx:impacts} discusses the potential impacts of this work. The following table of contents summarizes the structure of the main manuscript and this appendix.

\setcounter{tocdepth}{2}
\renewcommand{\contentsname}{Table of Contents}
\tableofcontents

\clearpage

\subsection{Analyses of Masked Diffusion Processes}
\label{apx:mdm}
In this section, we examine two properties of MDM. In Section~\ref{apx:mdm:idle}, we derive an analytical expression for the expected number of idle steps in the reverse diffusion process. In Section~\ref{apx:mdm:mutual_information}, we show that the mutual information between the latent variables and data exhibits a linearly decaying trend with respect to the scheduling function over time.

\subsubsection{Expected Number of Idle Steps}
\label{apx:mdm:idle}
In Section~\ref{sec:introduction}, we show that MDM may have significant number of idle steps during the sampling process. In this section, we derive a closed-form formula for calculating the expected number of idle steps and provide the reason why Prime must reduce this number.

\begin{proposition}
\label{prop:idle_step}
Let $L$ be the token sequence length, $T$ be the total number of discretized timesteps for the sampling process, and $\alpha_t\in [0,1]$ be a strictly decreasing scheduling function in $t\in [0,1]$. Suppose the sampling timesteps are indexed by $k\in \{0,\cdots,T-1\}$, the expected number of idle steps $\eta$, i.e., the entire token sequence remains unchanged between consecutive steps, is given by:
\begin{equation}
\label{eq:apx:idle_step}
    \eta=\sum_{k=0}^{T-1} \left[1 - \left(\alpha_{1-\frac{k+1}{T}} - \alpha_{1-\frac{k}{T}}\right)\right]^L.
\end{equation}

\end{proposition}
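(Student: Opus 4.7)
The plan is to reduce $\eta$ to a sum of per-step idle probabilities via linearity of expectation, compute each such probability from the forward-process marginals, and exploit the absorbing structure of masked diffusion together with independence across token positions. Writing $t_k \triangleq 1 - k/T$ and $s_k \triangleq 1 - (k+1)/T$, and letting $A_k$ denote the event that step $k$ is idle (i.e., $\vx_{s_k} = \vx_{t_k}$), linearity gives $\eta = \sum_{k=0}^{T-1} P(A_k)$. The remaining task is to show $P(A_k) = [1 - (\alpha_{s_k} - \alpha_{t_k})]^L$, after which substituting the definitions of $s_k$ and $t_k$ recovers Eq.~(\ref{eq:apx:idle_step}).

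To evaluate $P(A_k)$, I would condition on $\vx_0 \sim p_{\text{data}}$ and use the fact that, given $\vx_0$, the forward diffusion in Eq.~(\ref{eq:background:diffusion_kernel_t0}) factorizes across positions; combined with the absorbing transition in Eq.~(\ref{eq:background:diffusion_kernel_ts}), the joint $q(\vx_{s_k}, \vx_{t_k} \mid \vx_0)$ also factorizes. For a single position $i$, the event $\{x^i_{s_k} = x^i_{t_k}\}$ splits into two disjoint cases: either both entries are unmasked (and hence both equal $x^i_0$), or both are masked. By Eq.~(\ref{eq:background:diffusion_kernel_t0}) at time $s_k$ and Eq.~(\ref{eq:background:diffusion_kernel_ts}) for the forward jump from $s_k$ to $t_k$, the first case contributes $\alpha_{s_k} \cdot (\alpha_{t_k}/\alpha_{s_k}) = \alpha_{t_k}$, and the second contributes $(1-\alpha_{s_k}) \cdot 1 = 1 - \alpha_{s_k}$ because masking is absorbing. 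Adding these yields $P(x^i_{s_k} = x^i_{t_k} \mid x^i_0) = 1 - (\alpha_{s_k} - \alpha_{t_k})$, which is notably independent of $x^i_0$. Taking the product over $i$ and then the expectation over $\vx_0$ (trivial, since the per-position probability is a constant) gives $P(A_k) = [1 - (\alpha_{s_k} - \alpha_{t_k})]^L$.

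The main subtlety I anticipate is justifying that $P(A_k)$ can be read off the forward joint $q(\vx_{s_k}, \vx_{t_k})$ in the first place. This is warranted because the statement concerns the idealized reverse process $p(\vx_s \mid \vx_t) = \E_{p(\vx_0 \mid \vx_t)}[q(\vx_s \mid \vx_t, \vx_0)]$, under which the joint distribution of $(\vx_{s_k}, \vx_{t_k})$ coincides with the forward-process joint by construction; once this identification is made, everything else is clean bookkeeping. Strict monotonicity of $\alpha_t$ further ensures $\alpha_{s_k} - \alpha_{t_k} \in (0,1]$, so the expression is well-defined, and one obtains Eq.~(\ref{eq:apx:idle_step}) directly.
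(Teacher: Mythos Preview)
Your proposal is correct and follows essentially the same approach as the paper: linearity of expectation to reduce to per-step idle probabilities, independence across positions given $\vx_0$, and a two-case split per token yielding $1-(\alpha_{s_k}-\alpha_{t_k})$. The only cosmetic difference is the direction of conditioning in the two-case split: the paper conditions on the state at $t_k$ and invokes the reverse posterior (Eq.~(\ref{eq:background:diffusion_kernel_st0})), obtaining $\alpha_{t_k} + (1-\alpha_{t_k})\cdot\frac{1-\alpha_{s_k}}{1-\alpha_{t_k}}$, whereas you condition on the state at $s_k$ and use the forward transition (Eq.~(\ref{eq:background:diffusion_kernel_ts})), obtaining $\alpha_{s_k}\cdot\frac{\alpha_{t_k}}{\alpha_{s_k}} + (1-\alpha_{s_k})\cdot 1$; both simplify to the same expression, and your extra remark identifying the reverse-process joint with the forward-process joint is a welcome justification that the paper leaves implicit.
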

\begin{proof}
At reverse step $k \to k+1$, a token remains unchanged if it is already unmasked at $t=1-\frac{k}{T}$ or it is masked at $t=1-\frac{k}{T}$ while remaining masked $t=1-\frac{k+1}{T}$. According to the forward diffusion process, the probability that a single token remains unchanged is given by:
\begin{equation}
\label{eq:apx:idle_step_k}
\underbrace{\alpha_{1-\frac{k}{T}}}_{(i)} + \underbrace{(1-\alpha_{1-\frac{k}{T}})}_{(ii)} \cdot \underbrace{\frac{1-\alpha_{1-\frac{k+1}{T}}}{1-\alpha_{1-\frac{k}{T}}}}_{(iii)}  = 1 - \left(\alpha_{1-\frac{k+1}{T}} -\alpha_{1-\frac{k}{T}}\right),
\end{equation}
where $(i)$ is the probability of observing an unmasked token at time $t=1-\frac{k}{T}$ (i.e., Eq.~(\ref{eq:background:diffusion_kernel_t0})), $(ii)$ is the probability of observing a masked token at time $t=1-\frac{k}{T}$ (i.e., Eq.~(\ref{eq:background:diffusion_kernel_t0})), and $(iii)$ is the probability that the masked token remains masked at time $t=1-\frac{k+1}{T}$ (i.e., Eq.~(\ref{eq:background:diffusion_kernel_st0})). For all $L$ tokens to remain unchanged, the joint probability is $\left[1 - \left(\alpha_{1-\frac{k+1}{T}} -\alpha_{1-\frac{k}{T}}\right)\right]^L$. Due to the linearity of expectation, the expected number of idle steps can be calculated by accumulating Eq.~(\ref{eq:apx:idle_step_k}) for all $k\in\{0,\cdots,T-1\}$, and is written as:
\begin{equation}
\sum_{k=0}^{T-1} \left[1 - \left(\alpha_{1-\frac{k+1}{T}} -\alpha_{1-\frac{k}{T}}\right)\right]^L.
\end{equation}
\end{proof}

Eq.~(\ref{eq:apx:idle_step}) enables the computation of idle steps given the sequence length $L$, the number of discretized steps $T$, and the scheduling function $\alpha_t$. Consider the commonly used linear schedule $\alpha_t = 1 - t$~\cite{sahoo2024simplifieddiff, shi2024simplifieddiff}. Substituting the difference $\alpha_{1 - \frac{k+1}{T}} - \alpha_{1 - \frac{k}{T}} = \frac{1}{T}$ into the formula yields:
\begin{equation}
\sum_{k=0}^{T-1} \left(1 - \frac{1}{T}\right)^L=T\left(1 - \frac{1}{T}\right)^L\overset{(i)}{\approx} T e^{-\frac{L}{T}},
\end{equation}
where $(i)$ holds when $T$ is large. For example, when $T = 1{,}024$ and $L= 1{,}024$, we obtain $T e^{-L/T} \approx 1024 \cdot e^{-1} \approx 376$, indicating that approximately 37\% of the sampling steps are idle in the discrete diffusion process.

Moreover, since $1 - \left(\alpha_{1 - \frac{k+1}{T}} - \alpha_{1 - \frac{k}{T}}\right)\in [0,1]$, the quantity in Eq.~(\ref{eq:apx:idle_step}) decreases as the token sequence length $L$ increases. In MDM-Prime, the sequence length is extended by a factor of $\ell > 1$ due to the introduction of sub-token sequences, and the number of idle steps $\eta_\text{\,Prime}$ is given by:
\begin{equation}
\label{eq:apx:sub_idle_step}
\eta_\text{\,Prime}=\sum_{k=0}^{T-1} \left[1 - \left(\alpha_{1-\frac{k+1}{T}} - \alpha_{1-\frac{k}{T}}\right)\right]^{L\times \ell}.
\end{equation}
Given identical $T$ and $\alpha_t$, $\eta_\text{\,Prime}$ in Eq.~(\ref{eq:apx:sub_idle_step}) is always smaller than $\eta$ in Eq.~(\ref{eq:apx:idle_step}). Therefore, employing Prime must result in fewer idle steps. To support this observation and verify the correctness of Eqs.~(\ref{eq:apx:idle_step}) and~(\ref{eq:apx:sub_idle_step}), Fig.~\ref{fig:empty_step_analytic} compares our simulation results with the analytical values computed using $\eta$ and $\eta_\text{\,Prime}$. The results demonstrate agreement between the theoretical and empirical estimates.
\begin{figure}[t]
    \centering
    \footnotesize
    \includegraphics[width=\linewidth]{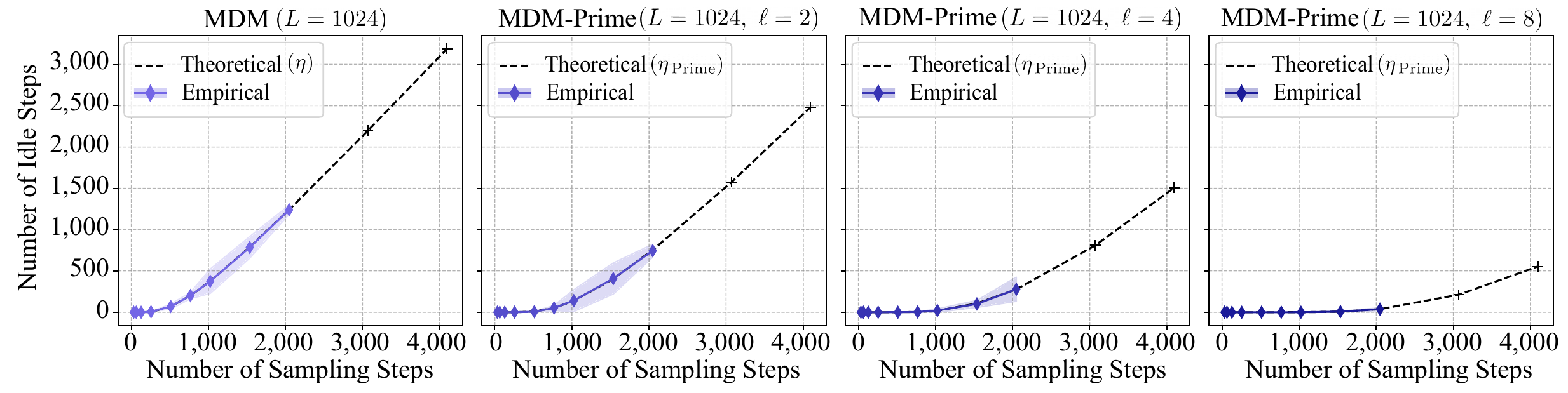}
    \vspace{-1.5em}
    \caption{Comparison between idle steps obtained from simulation results and analytical computation. Solid curves and their corresponding shaded areas represent the mean and variance from ten independent simulation runs. Dashed lines indicate the theoretical values.}
    \label{fig:empty_step_analytic}
\end{figure}

Based on the above definition of expected idle steps, we define the idle step ratio (ISR) as the proportion of expected idle steps relative to the total number of sampling steps $T$:
\begin{equation}
\label{eq:apx:isr}
\frac{1}{T}\sum_{k=0}^{T-1} \left[1 - \left(\alpha_{1-\frac{k+1}{T}} - \alpha_{1-\frac{k}{T}}\right)\right]^{L\times \ell}.
\end{equation}
ISR quantifies the model utilization in diffusion processes. To evaluate it, we can fix the scheduling function $\alpha_t$ and choose a large $T$ (i.e., to approximate the continuous-time limit $T \to \infty$). In our experiments presented in Section~\ref{sec:experiment}, we set $T$ as $1,024$.

\subsubsection{Mutual Information Scheduling}
\label{apx:mdm:mutual_information}

In this section, we show that the MDM framework implements mutual information scheduling (i.e., $I(x^i_t; x^i_0) = \alpha_t H(x^i_0)$). This relationship was initially established in prior works~\cite{austin2021structurediff,dickstein2015diffusion}, where a linear scheduling function $\alpha_t = 1 - t$ was considered as a special case. In our work, we extend this result to arbitrary scheduling functions \(\alpha_t \in [0, 1]\), as formally stated in \textbf{Proposition}~\ref{prop:schedule}.

\begin{proposition}
\label{prop:schedule}
Let $\alpha_t \in [0,1]$ be the scheduling function and $q(x^i_t|x^i_0)$ be the distribution defined in Eq.~(\ref{eq:background:diffusion_kernel_t0}). The mutual information between $x^i_0$ and $x^i_t$ satisfies:
\begin{equation}
\label{eq:apx:mutual_info_scheduling}
    I(x^i_t;x^i_0)=\alpha_t H(x^i_0).
\end{equation}
\end{proposition}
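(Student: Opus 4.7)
The plan is to compute $I(x_t^i;x_0^i)$ by introducing an auxiliary binary masking indicator and then applying the chain rule for mutual information. Define $M \in \{0,1\}$ by $M=0$ iff $x_t^i = \texttt{m}$. By Eq.~(\ref{eq:background:diffusion_kernel_t0}), $M$ is a deterministic function of $x_t^i$, so $I(x_t^i;x_0^i) = I(x_t^i, M;x_0^i)$. The same kernel shows that $M \sim \mathrm{Bernoulli}(\alpha_t)$ independently of $x_0^i$, since the masking event happens with probability $1-\alpha_t$ regardless of the value of $x_0^i$. The chain rule then gives
\begin{equation*}
I(x_t^i, M; x_0^i) \;=\; I(M;x_0^i) \;+\; I(x_t^i; x_0^i \mid M),
\end{equation*}
and the first term vanishes by the independence $M \perp x_0^i$.

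Next, I would evaluate the conditional mutual information by splitting on the two values of $M$. When $M=0$, we have $x_t^i = \texttt{m}$ deterministically, so $x_t^i$ carries no information about $x_0^i$ and $I(x_t^i;x_0^i \mid M=0)=0$. When $M=1$, Eq.~(\ref{eq:background:diffusion_kernel_t0}) collapses to $x_t^i = x_0^i$ deterministically, so $H(x_0^i \mid x_t^i, M=1)=0$ and $I(x_t^i;x_0^i \mid M=1) = H(x_0^i \mid M=1) = H(x_0^i)$, where the last equality again uses $M \perp x_0^i$. Taking expectation over $M$ yields $I(x_t^i;x_0^i \mid M) = \alpha_t H(x_0^i)$, which gives the claimed identity after combining with the chain rule.

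There is no real obstacle beyond bookkeeping: the key observation is that $M$ is a measurable function of $x_t^i$, which guarantees $I(x_t^i;x_0^i) = I(x_t^i, M;x_0^i)$ and legitimizes the use of the chain rule above. If one prefers to avoid introducing $M$, an alternative is a direct entropy calculation via $I(x_t^i;x_0^i)=H(x_t^i)-H(x_t^i \mid x_0^i)$, using the marginal $q(x_t^i) = (1-\alpha_t)\delta_{\texttt{m}}(x_t^i) + \alpha_t\, p_{\mathrm{data}}(x_t^i)$ and the per-sample conditional entropy $H(x_t^i \mid x_0^i) = H_2(\alpha_t)$, where $H_2$ is the binary entropy; expanding $H(x_t^i)$ and cancelling the $H_2(\alpha_t)$ contribution from the mask atom leaves exactly $\alpha_t H(x_0^i)$. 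The auxiliary-variable route is cleaner, so that would be my primary presentation.
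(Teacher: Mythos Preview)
Your proof is correct, but your primary route differs from the paper's. The paper takes exactly the alternative you sketch at the end: it writes $I(x_t^i;x_0^i)=H(x_t^i)-H(x_t^i\mid x_0^i)$, computes $H(x_t^i\mid x_0^i)=H(\mathrm{Bernoulli}(\alpha_t))$ directly from the kernel, expands $H(x_t^i)$ from the marginal $p_t(x_t^i)=\alpha_t\,p_0(x_t^i)\,\mathbf{1}[x_t^i\in\mathcal{X}]+(1-\alpha_t)\,\mathbf{1}[x_t^i=\texttt{m}]$ to obtain $\alpha_t H(x_0^i)+H(\mathrm{Bernoulli}(\alpha_t))$, and cancels the binary-entropy term. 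Your auxiliary-variable argument via $M=\mathbf{1}[x_t^i\neq\texttt{m}]$ and the chain rule $I(x_t^i,M;x_0^i)=I(M;x_0^i)+I(x_t^i;x_0^i\mid M)$ is a genuinely different decomposition: it replaces explicit entropy bookkeeping with two structural observations (independence $M\perp x_0^i$ and the deterministic identifications on each branch of $M$), so no logarithms ever appear. The paper's route is more elementary in the sense that it uses only the definition of entropy; yours is shorter and makes the mechanism---``with probability $\alpha_t$ you see $x_0^i$ exactly, otherwise nothing''---transparent without any cancellation step.
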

\begin{proof}
The mutual information can be decomposed as follows:
\begin{equation}
\label{eq:apx:mutual_info}
I(x^i_t; x^i_0) = H(x_t^i) - H(x_t^i | x_0^i)
\end{equation}
The following proof expands both entropy terms to derive Eq.~(\ref{eq:apx:mutual_info_scheduling}).

\paragraph{(i) $H(x_t^i | x_0^i)$:}~Given $x_0^i$, the distribution over $x_t^i$ is:
\begin{equation}
q(x^i_t|x^i_0) = \alpha_t \, \delta_{x_0^i}(x_t^i) + (1 - \alpha_t) \, \delta_{\texttt{m}}(x_t^i).
\end{equation}
Its entropy can be expressed as the entropy of a Bernoulli:
\begin{equation}
\label{eq:apx:conditional_entropy}
H(x_t^i | x_0^i) = H(\text{Bernoulli}(\alpha_t)) = -\alpha_t \log \alpha_t - (1 - \alpha_t) \log(1 - \alpha_t).
\end{equation}

\paragraph{(ii) $H(x_t^i)$:}~The element-wise distribution $p_t(x_t^i)$ can be written as follows:
\begin{equation}
\label{eq:apx:marginal}
\begin{aligned}
p_t(x_t^i) &= \sum_{x_0^i \in \X} p_0(x_0^i) \cdot q(x_t^i | x_0^i) \\
&= \sum_{x_0^i \in \X} p_0(x_0^i) \left[ \alpha_t \delta_{x_0^i}(x_t^i) + (1 - \alpha_t) \delta_{\texttt{m}}(x_t^i) \right]=
\begin{cases}
\alpha_t \, p_0(x_t^i), & \text{if } x_t^i \in \mathcal{X}, \\
1 - \alpha_t, & \text{if } x_t^i = \texttt{m}
\end{cases}.
\end{aligned}
\end{equation}
The element-wise entropy can be expanded using Eq.~(\ref{eq:apx:marginal}) as follows:
\begin{equation}
\label{eq:apx:marginal_entropy}
\begin{aligned}
H(x_t^i)
&= - \sum_{x \in \tilde{\X}} p_t(x) \log p_t(x) \\
&= - \left(\sum_{x \in \X} \alpha_t p_0(x) \log(\alpha_t p_0(x)) \right)- (1 - \alpha_t) \log(1 - \alpha_t) \\
&= -\left(\alpha_t \sum_{x \in \X} p_0(x) \log p_0(x) \right) - \alpha_t \log \alpha_t - (1 - \alpha_t) \log(1 - \alpha_t) \\
&= \alpha_t H(x_0^i) + H(\text{Bernoulli}(\alpha_t)).
\end{aligned}
\end{equation}

According to Eqs.~(\ref{eq:apx:conditional_entropy}) and (\ref{eq:apx:marginal_entropy}), the mutual information is expressed as follows:
\begin{equation}
\begin{aligned}
I(x^i_t; x^i_0) &= H(x_t^i) - H(x_t^i | x_0^i)\\
&= \alpha_t H(x_0^i) + H(\text{Bernoulli}(\alpha_t)) - H(\text{Bernoulli}(\alpha_t)) \\
&= \alpha_t H(x_0^i).
\end{aligned}
\end{equation}
\end{proof}

\subsection{Analyses of the Partial Mask Scheme}
\label{apx:analysis}

In this section, we provide thorough analyses of MDM-Prime. Section~\ref{apx:analysis:nll} derives a formula for computing the negative log-likelihood (NLL). Section~\ref{apx:analysis:intermediate} shows that Prime yields a positive number of intermediate states. Section~\ref{apx:analysis:carry_over} examines the correctness of the \textit{carry-over} parameterization for Prime. Finally, Section~\ref{apx:analysis:hyperparameter} offers a guideline for selecting the target length $\ell$.

\subsubsection{Negative Log Likelihood Calculation}
\label{apx:analysis:nll}
In Section~\ref{sec:methodology}, we introduce an invertible function $f:\X^L \to \Y^{\ell \times L}$ to transform between two vectors, $\vx_0$ and $\vy_0$. In this section, we detail the computation of the expected negative log-likelihood (NLL) of MDM after applying this transformation. We begin by revisiting the derivation of Eq.~(\ref{eq:background:diffusion_elbo}), and then extend it to our proposed objective in Eq.~(\ref{eq:methodology:subtoken_elbo}).

The expected NLL of $\vx_0$ is expressed as $\E_{p_{\text{data}}(\vx_0)}[-\log p_\theta(\vx_0)]$, which can be approximated using a variational upper bound~\cite{sahoo2024simplifieddiff, shi2024simplifieddiff, xu2025ebmdiff} as expressed as follows:
\begin{equation}
\label{eq:apx:nll_derivation}
\begin{aligned}
    \E_{p_{\text{data}}(\vx_0)}[-\log p_\theta(\vx_0)] &\leq \E_{p_{\text{data}}(\vx_0)}\left[\int_0^1 \frac{\alpha'_t}{1-\alpha_t} \mathbb{E}_{q(\vx_t|\vx_0)}\left[\log p_\theta (\vx_0|\vx_t) \right] dt \right]\\
    &\overset{(i)}{=}\E_{p_{\text{data}}(\vx_0)}\left[\int_0^1 \frac{\alpha'_t}{1-\alpha_t} \mathbb{E}_{q(\vx_t|\vx_0)}\left[\sum_{i=1}^L \log p_\theta (x_0^i|\vx_t) \right] dt\right],
\end{aligned}
\end{equation}
where $(i)$ is derived from $p_\theta(\vx_0|\vx_t)=\prod_{i=1}^L p_\theta (x_0^i|\vx_t)$. Due to the introduction of $f$, the expected NLL expressed by MDM-Prime is written as $\E_{p_{\text{data}}(\vx_0)}[-\log p_\theta\circ f(\vx_0)]$, where $p_\theta\circ f(\vx_0)$ is the parameterized pmf as discussed in Section~\ref{sec:methodology:diff_pm}. This expectation can be estimated via Eq.~(\ref{eq:methodology:subtoken_elbo}), as derived below:
\begin{equation}
\label{eq:apx:nll}
\begin{aligned}
    \E_{p_{\text{data}}(\vx_0)}[-\log p_\theta\circ f(\vx_0)]&=\E_{p_{\text{data}}\circ f^{-1}(\vy_0)}[-\log p_\theta(\vy_0)] \\
    &\leq \E_{p_{\text{data}}\circ f^{-1}(\vy_0)} \left[\int_0^1 \frac{\alpha'_t}{1-\alpha_t} \mathbb{E}_{q(\vy_t|\vy_0)}\left[\log p_\theta(\vy_0|\vy_{t}) \right] dt \right] \\
    &\overset{(i)}{=} \E_{p_{\text{data}}\circ f^{-1}(\vy_0)} \left[\int_0^1 \frac{\alpha'_t}{1-\alpha_t} \mathbb{E}_{q(\vy_t|\vy_0)}\left[\sum_{i=1}^{L}  \log p_\theta(\vy^{i}_0|\vy_{t}) \right] dt \right],
\end{aligned}
\end{equation}
where $(i)$ is due to $p_\theta(\vy_0|\vy_t)=\prod_{i=1}^L p_\theta (\vy_0^i|\vy_t)$ following the derivation in Eq.~(\ref{eq:apx:nll_derivation}). To sample data points from the distribution $p_{\text{data}}\circ f^{-1}$, one can first sample $\vx_0 \sim p_{\text{data}}$ and then transform it to $\vy_0 = f(\vx_0)$ according to the change-of-variable principle for probability distributions. This formulation enables us to estimate the expected NLL of MDM-Prime.

\subsubsection{Number of Intermediate States}
\label{apx:analysis:intermediate}
In Section~\ref{sec:methodology:diff_pm}, we claim that the number of intermediate states can be quantified as $|\tilde{\Y}^\ell| - |\tilde{\X}| = (b+1)^\ell - (C+1)$ and that this number is always positive. To verify this, we provide \textbf{Proposition}~\ref{prop:num_state}.

\begin{proposition}
\label{prop:num_state}
Let $\tilde{\Y}=\{0,\cdots, b-1\} \cup \{\mathtt{m}\}$ and $\tilde{\X}=\{0,\cdots, C-1\} \cup \{\mathtt{m}\}$. Let $\ell>1$ be a positive integer and let $b=\lceil \sqrt[\ell]{C} \rceil$. The number of intermediate states is a positive integer:
\begin{equation}
\begin{aligned}
|\tilde{\Y}^\ell| - |\tilde{\X}| > 0.
\end{aligned}
\end{equation}
\end{proposition}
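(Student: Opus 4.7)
The plan is to unpack the two quantities on each side of the inequality and then exploit the fact that $\ell>1$ provides extra slack in the binomial expansion of $(b+1)^\ell$. The starting observation is that the ceiling in the definition of $b$ gives the lower bound $b \geq C^{1/\ell}$, which on raising both sides to the $\ell$-th power yields $b^\ell \geq C$. So the plain inequality $(b+1)^\ell > C$ is immediate; the only thing left to verify is that the strict gap between $(b+1)^\ell$ and $b^\ell$ is at least $1$ so that we can upgrade this to $(b+1)^\ell \geq C+1+ (\text{something positive})$.

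First I would write $(b+1)^\ell - C - 1$ as
\begin{equation*}
\bigl[(b+1)^\ell - b^\ell\bigr] + \bigl[b^\ell - C\bigr] - 1,
\end{equation*}
and handle the two bracketed pieces separately. The second bracket is nonnegative by the ceiling bound above. For the first bracket, the binomial theorem gives
\begin{equation*}
(b+1)^\ell - b^\ell \;=\; \sum_{k=0}^{\ell-1}\binom{\ell}{k}b^{k} \;\geq\; \ell\, b^{\ell-1} + 1,
\end{equation*}
where I have kept the $k=\ell-1$ and $k=0$ terms and dropped the rest. Since $C\geq 1$ (there is at least one token class) we have $b\geq 1$, and since $\ell>1$ we have $\ell\geq 2$, so $\ell\, b^{\ell-1}\geq 2$. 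Plugging these bounds back in gives $(b+1)^\ell - C - 1 \geq 2+1-1 = 2 > 0$, which is the desired conclusion.

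The argument is essentially a one-line binomial expansion once the ceiling bound has been used, so I do not anticipate a genuine obstacle. The only mild subtlety is making sure the degenerate regime ($C$ small, or $b=1$) is not excluded: one should note explicitly that $b\geq 1$ and $\ell\geq 2$ force the retained binomial terms $\ell b^{\ell-1}$ and $1$ to together exceed $1$, so that the combined expression is strictly positive regardless of whether the ceiling is tight. It may also be worth remarking that the bound we actually establish, $(b+1)^\ell - (C+1) \geq \ell b^{\ell-1}$, shows that the number of intermediate states grows at least linearly in $b^{\ell-1}$, which motivates the ``rich set of intermediate states'' claim made just after the proposition.
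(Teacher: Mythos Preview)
Your argument is correct and matches the paper's almost exactly: both reduce to $b^\ell \geq C$ via the ceiling and then use the binomial expansion of $(b+1)^\ell$ to extract the middle term $\ell b^{\ell-1}$ (plus the constant $1$) as the slack needed to beat $C+1$. Your version is slightly more quantitative in that you explicitly record the lower bound $(b+1)^\ell - (C+1) \geq \ell b^{\ell-1}$, whereas the paper simply notes the middle terms are positive and concludes $(b+1)^\ell > b^\ell + 1 \geq C+1$.
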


\begin{proof}
The number of original tokens with the mask token is:
\begin{equation}
|\tilde{\X}| = C + 1.
\end{equation}
The total number of possible sub-token sequences (including the mask) is:
\begin{equation}
|\tilde{\Y}^\ell| = (b+1)^\ell \overset{(i)}{=} \sum_{k=0}^{\ell} \binom{\ell}{k} b^{\ell-k}= b^\ell + \underbrace{\binom{\ell}{1}b^{\ell-1} + \cdots}_{>0} + 1 > b^\ell + 1,
\end{equation}
where $(i)$ is derived by the binomial theorem. Since $b = \lceil \sqrt[\ell]{C} \rceil$ by construction, we have $b^\ell \geq C$, and thus:
\begin{equation}
(b+1)^\ell > b^\ell + 1 \geq C + 1.
\end{equation}
By rearranging this equation, we conclude that $(b+1)^\ell - (C + 1) > 0$. As a result, 
\begin{equation}
|\tilde{\Y}^\ell| - |\tilde{\X}| = (b+1)^\ell - (C + 1) > 0.
\end{equation}
\end{proof}
To illustrate how the number of intermediate states (i.e., $|\tilde{\Y}^\ell| - |\tilde{\X}|$) may increase with $\ell$, we define a function $M(\ell, C) = (b + 1)^\ell - (C + 1)$ and evaluate its value under different $\ell$. As a concrete example, consider $C = 256$ and $\ell \in \{2, 4, 8\}$. Substituting these values yields $M(2, 256) = 32$, $M(4, 256) = 368$, and $M(8, 256) = 6304$. This example highlights that Prime can produce a substantial number of intermediate states by selecting $\ell$. This growth of $M(\ell, C)$ motivates our design choice for the embedding lookup table discussed in Section~\ref{sec:methodology:parameterization}, where we mention the infeasibility of directly modeling the embeddings for $\vy_t^i$.

\subsubsection{Carry-over Parameterization}
\label{apx:analysis:carry_over}

In Section~\ref{sec:methodology:parameterization}, we introduced the carry-over parameterization for MDM-Prime. In this section, we justify this approach by presenting \textbf{Proposition}~\ref{prop:param_condition} and discussing its practical implementation.

\begin{proposition}
\label{prop:param_condition}
Let $\V(\vy_t^i)$ denote the set of $\vy_0^i$ such that each $y_0^{i,j}$ is consistent with $y_t^{i,j}\in \Y$, i.e.,
\begin{equation}
\label{eq:apx:valid_set}
\V(\vy_t^i)\triangleq \{ \vy^i=[y^{i,1}, \cdots, y^{i,\ell}]  \in f(\X) \,\,\,s.t.\,\,\, (y^{i,j} = y^{i,j}_t) \vee (y^{i,j}_t = \textnormal{\texttt{m}}) \}.
\end{equation}
Given the parameterized distribution $p_\theta(\vy^i_0 | \vy_t)$ expressed as follows: 
\begin{equation}
\label{eq:apx:parameterization}
\begin{aligned}
    p_\theta(\vy^{i}_0|\vy_{t})=
    \begin{cases}
        \frac{\exp (E_\theta (\vy_0^i|\vy_t))}{ \sum_{\vy^i\in \V(\vy_t^i)} \exp (E_\theta (\vy^i|\vy_t)) },& \text{if $\vy^{i}_0\in \V(\vy_t^i)$},\\
        0,& \text{if $\vy^{i}_0\notin \V(\vy_t^i)$},
    \end{cases}
\end{aligned}
\end{equation}
the marginal distribution $p_\theta(y^{i,j}_0|\vy_{t})$ of Eq.~(\ref{eq:apx:parameterization}) satisfies the carry-over condition for all position $i,j$ where $y^{i,j}_t\in \Y$:
\begin{equation}
\label{eq:apx:carry_over_cond}
p_\theta(y^{i,j}_0|\vy_{t})=\sum_{y_0^{i,1},\,\cdots,\,y_0^{i,j-1},\,y_0^{i,j+1},\,\cdots,\,y_0^{i,\ell}\in \Y} p_\theta(y^{i,1}_0,\cdots,y^{i,\ell}_0|\vy_{t}) = \delta_{y^{i,j}_t}(y^{i,j}_0).
\end{equation}
\end{proposition}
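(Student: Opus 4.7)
The plan is to exploit the two structural features built into the parameterization of Eq.~(\ref{eq:apx:parameterization}): (a) the probability mass is supported only on $\V(\vy_t^i)$, and (b) the nonzero entries are normalized to sum to one over $\V(\vy_t^i)$. The key observation is that, by the very definition of $\V(\vy_t^i)$ in Eq.~(\ref{eq:apx:valid_set}), whenever $y^{i,j}_t \in \Y$ every $\vy^i \in \V(\vy_t^i)$ already satisfies $y^{i,j} = y^{i,j}_t$. Thus the $j$-th coordinate is effectively pinned to $y^{i,j}_t$ on the entire support of $p_\theta(\vy_0^i|\vy_t)$, and the carry-over identity should fall out by a clean case split.

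First, I would fix a coordinate $j$ with $y^{i,j}_t\in\Y$ and split the marginal $p_\theta(y^{i,j}_0|\vy_{t})$ in Eq.~(\ref{eq:apx:carry_over_cond}) into two cases based on the value of $y^{i,j}_0$. In the case $y^{i,j}_0 \neq y^{i,j}_t$, every tuple $(y^{i,1}_0,\dots,y^{i,\ell}_0)$ appearing in the sum has $j$-th entry differing from $y^{i,j}_t$, so it violates the defining condition of $\V(\vy_t^i)$; by Eq.~(\ref{eq:apx:parameterization}) each such summand vanishes, giving marginal value $0$. In the case $y^{i,j}_0 = y^{i,j}_t$, the constraint coincides with one of the clauses in $\V(\vy_t^i)$, so the remaining sum over $y^{i,j'}_0 \in \Y$ for $j'\neq j$ enumerates exactly the elements of $\V(\vy_t^i)$ once each.

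Finally, I would invoke the normalization of Eq.~(\ref{eq:apx:parameterization}), which by construction gives $\sum_{\vy^i\in \V(\vy_t^i)} p_\theta(\vy^i|\vy_t) = 1$. Combining the two cases yields $p_\theta(y^{i,j}_0|\vy_t) = \delta_{y^{i,j}_t}(y^{i,j}_0)$, as required by Eq.~(\ref{eq:apx:carry_over_cond}).

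I do not anticipate any serious technical obstacle here; the argument reduces to a careful case analysis plus the built-in normalization. The only subtlety worth double-checking is that when the remaining coordinates range over $\Y$ (rather than $\tilde{\Y}$), every element of $\V(\vy_t^i)$ is in fact hit exactly once: this is immediate because $\V(\vy_t^i) \subseteq f(\X) \subseteq \Y^\ell$, so no valid tuple is skipped or double-counted by the summation indices in Eq.~(\ref{eq:apx:carry_over_cond}).
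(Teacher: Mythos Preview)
Your proposal is correct and follows essentially the same route as the paper's proof: both fix a position $j$ with $y^{i,j}_t\in\Y$, observe that any $\vy_0^i$ with $y_0^{i,j}\neq y_t^{i,j}$ lies outside $\V(\vy_t^i)$ and hence has zero probability, and then recover the value $1$ at $y_0^{i,j}=y_t^{i,j}$ from the normalization of Eq.~(\ref{eq:apx:parameterization}). The paper phrases the last step slightly differently---it notes $\sum_{y_0^{i,j}\in\Y} p_\theta(y_0^{i,j}|\vy_t)=\sum_{\vy_0^i\in\Y^\ell}p_\theta(\vy_0^i|\vy_t)=1$ and subtracts the zero cases---whereas you argue directly that the surviving marginal sum contains all of $\V(\vy_t^i)$; these are equivalent (any extra tuples in your sum that lie outside $\V(\vy_t^i)$ contribute zero anyway).
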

\begin{proof}
Given $i,j$ such that $y_t^{i,j}\in\Y$, 
according to Eq.~(\ref{eq:apx:valid_set}), $y_0^{i,j}=y_t^{i,j}$ is a necessary condition for non-zero probability:
\begin{equation}
\label{eq:apx:non_zero}
p_\theta(\vy^{i}_0|\vy_{t})>0 \,\,\,\Rightarrow\,\,\, y_0^{i,j}=y_t^{i,j}.
\end{equation}
In other word, if $y_0^{i,j}\neq y_t^{i,j}$ for any $j$ is observed, then $p_\theta(\vy^{i}_0|\vy_{t})=0$. This indicates that the marginal $p_\theta(y^{i,j}_0|\vy_{t})=\sum_{y_0^{i,1},\,\cdots,\,y_0^{i,j-1},\,y_0^{i,j+1},\,\cdots,\,y_0^{i,\ell}\in \Y} p_\theta(\vy^i_0|\vy_{t})=0$ for any $y_0^{i,j}\neq y_t^{i,j}$.

Since $\sum_{y^{i,j}_0\in \Y} p_\theta(y^{i,j}_0|\vy_{t})=\sum_{\vy^{i}_0\in \Y^{\ell}} p_\theta(\vy^{i}_0|\vy_{t})=1$ by Eq.~(\ref{eq:apx:parameterization}), $p_\theta(y^{i,j}_0|\vy_{t})=1$ for $y_0^{i,j}= y_t^{i,j}$. Therefore, the marginal distribution corresponds to the Kronecker delta function as follows:
\begin{equation}
p_\theta(y_0^{i,j}| \vy_t) =
\begin{cases}
1, & \text{if } y_0^{i,j} = y_t^{i,j}, \\
0, & \text{if } y_0^{i,j} \neq y_t^{i,j}.
\end{cases}= \delta_{y_t^{i,j}}(y_0^{i,j}).
\end{equation}
\end{proof}

To understand this parameterization method, we examine its implementation in practice. The goal is to ensure that the marginal distribution $p_\theta(y_0^{i,j}| \vy_t)=\sum_{y_0^{i,1},\,\cdots,\,y_0^{i,j-1},\,y_0^{i,j+1},\,\cdots,\,y_0^{i,\ell}\in \Y} p_\theta(\vy^i_0|\vy_{t})$ is zero whenever the candidate state $\vy_0^i$ contains any $y_0^{i,j}$ that is inconsistent with $y_t^{i,j}$ (i.e., when $y_0^{i,j} \ne y_t^{i,j}$, as discussed in the proof of \textbf{Proposition}~\ref{prop:num_state}). Since the marginal probability is a sum over probabilities, it follows that each individual probability $p_\theta(\vy^i_0|\vy_{t})$ should be zero for $\vy^i_0$ with $y_0^{i,j}$ that is inconsistent with $y_t^{i,j}$. To enforce this, we use filters with $C$ entries to exclude the output logits of invalid $\vy^i_0$. For example, suppose $C = 7$, $\ell = 3$, and $b = 2$. If $\vy_t^i = (y_t^{i,1}, y_t^{i,2}, y_t^{i,3}) = (1, \texttt{m}, 0)$, we create $\ell$ filters (i.e., Filters 1-3) as follows:
\begin{itemize}[leftmargin=12pt]
\vspace{-0.5em}
\item For $y_t^{i,1} = 1$, Filter 1 excludes $\vy^i_0$ where the first position is not 1.
\item For $y_t^{i,2}=\texttt{m}$, Filter 2 excludes no state since no condition is needed to be satisfied at this position.
\item For $y_t^{i,3} = 0$, Filter 3 excludes $\vy^i_0$ where the third position is not 0.
\vspace{-0.5em}
\end{itemize}
\begin{figure}[t]
    \centering
    \footnotesize
    \includegraphics[width=\linewidth]{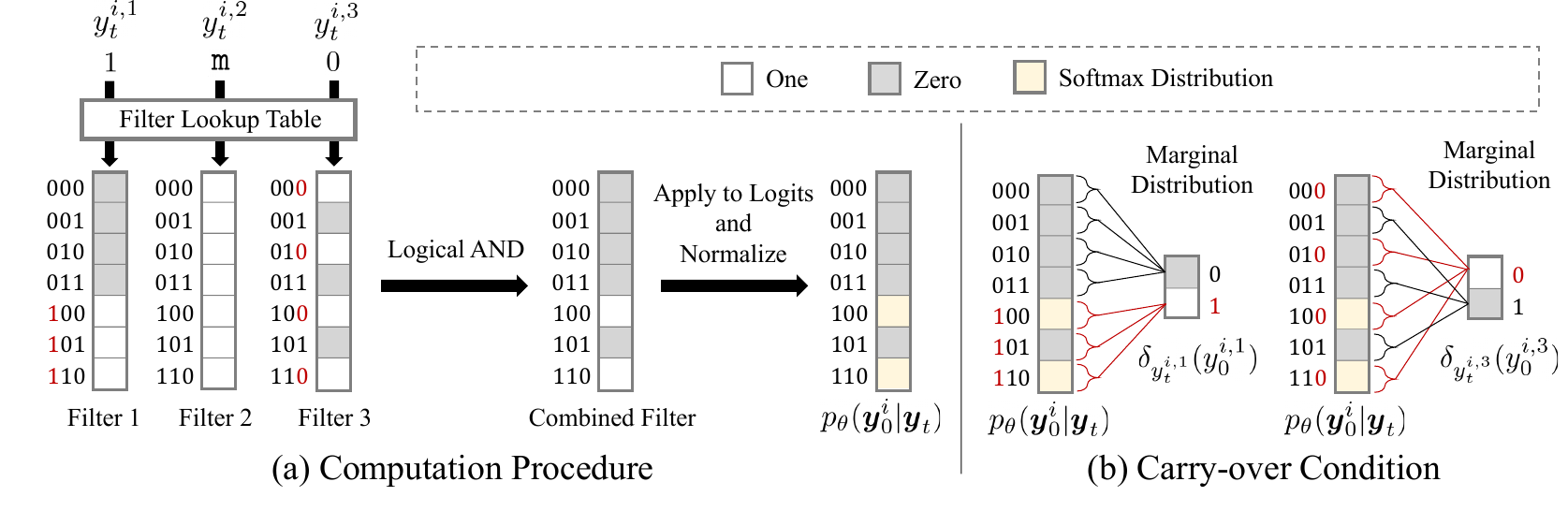}
    \vspace{-2em}
    \caption{An illustration of (a) the computation procedure of the carry-over parameterization and (b) the corresponding carry-over condition (i.e., Eq.~(\ref{eq:apx:carry_over_cond})) on the marginal distributions. In this example, $C = 7$, $\ell = 3$, and $b = 2$ (i.e., binary encodings). In (a), the filters are precomputed and stored in a lookup table indexed by $y_t^{i,j}$. These filters exclude logits corresponding to $\vy_0^i$ with $y_0^{i,j}$ that are inconsistent with $y^{i,j}_t\in\Y$ (i.e., $y^{i,1}_t=1$ and $y^{i,3}_t=0$). For each position $j\in\{1,\cdots,\ell\}$, a filter is queried from the lookup table using $y^{i,j}_t$. Then, a combined filter is then constructed by applying a logical AND operation across $\ell$ filters. Finally, the softmax distribution $p_\theta(\vy_0^i|\vy_t)$ is established by normalizing the filtered logits. In (b), the resulting distribution $p_\theta(\vy_0^i|\vy_t)$ satisfies the carry-over condition for $y_t^{i,j}\in \Y$ as defined in Eq.~(\ref{eq:apx:carry_over_cond}).}
    \label{fig:implementation}
\end{figure}

We then combine these element-wise filters using a logical AND operation, since any violation of an element-wise condition leads to an invalid $\vy^i_0$ that has inconsistency with $y_t^{i,j}$. The resulting combined filter excludes $\vy^i_0\notin\V(\vy_t^i)$, retaining only $\vy_0^i\in\V(\vy_t^i)$. Subsequently, we apply a softmax over the logits corresponding to $\vy_0^i\in\V(\vy_t^i)$ to define the probability distribution $p_\theta(\vy^{i}_0|\vy_{t})$. An illustrative example of this procedure is depicted in Fig.~\ref{fig:implementation}.

For efficient implementation, we precompute and cache a lookup table with size $|\tilde{\Y}|\times |f(\X)|=(b+1)\times C$, which contains all possible filters for $y^{i,j}_t$. During the forward pass, we query this lookup table using $y^{i,j}_t$ to retrieve the corresponding filters, which are then applied to the logits to zero out the probability of $\vy^i_0\notin\V(\vy_t^i)$.

\subsubsection{Analysis of the Target Length}
\label{apx:analysis:hyperparameter}
Prime introduces a parameter $\ell$ that controls the target length of sub-token sequences. This section analyzes its impact on performance and provides a practical guideline for selecting an appropriate value of $\ell$.

\begin{wrapfigure}[]{r}{0.48\linewidth}
    \centering
    \vspace{-0.25em}
    \footnotesize
    \includegraphics[width=\linewidth]{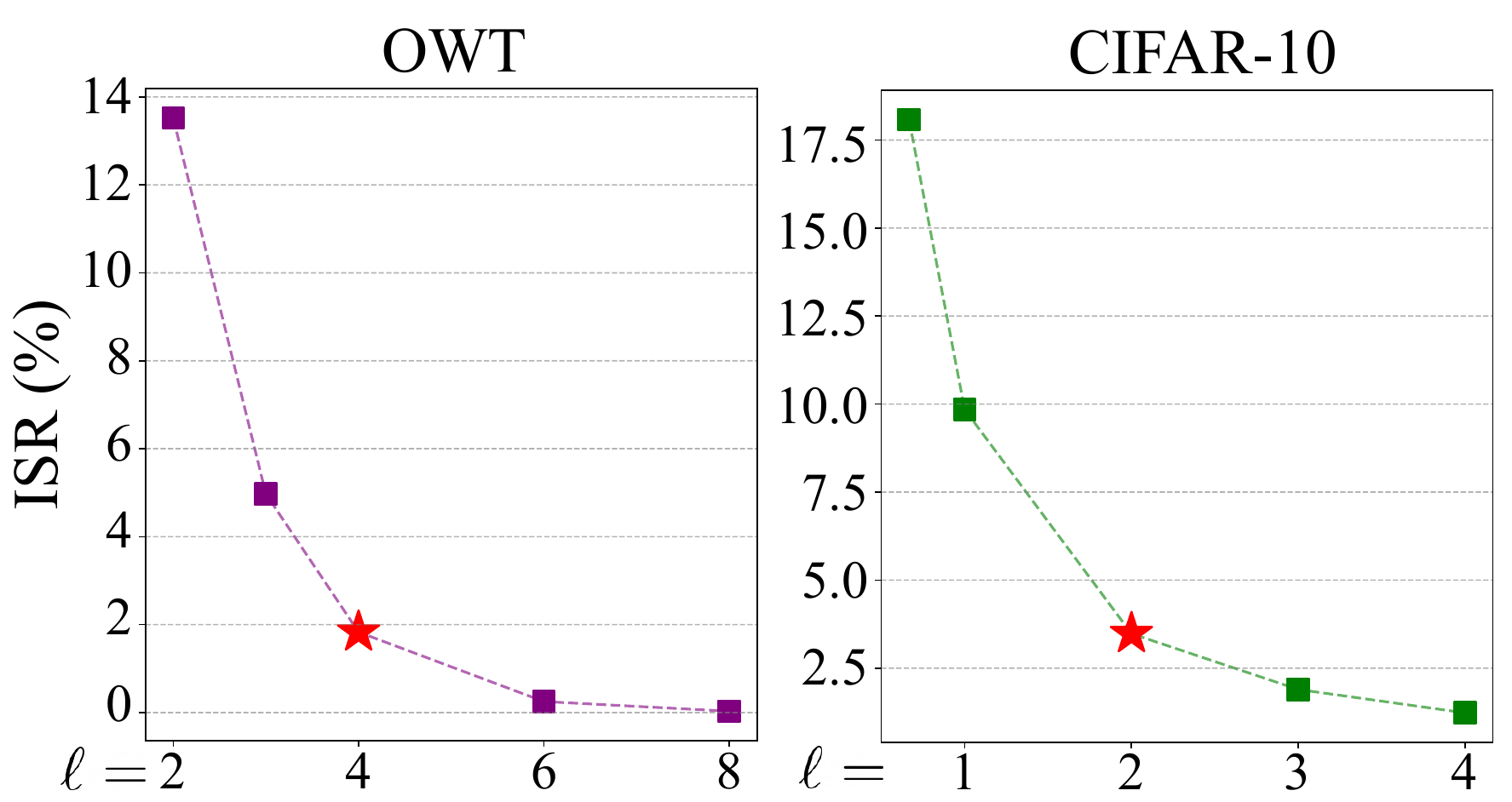}
    \vspace{-1.5em}
    \caption{ISR under different choices of $\ell$. The elbow points are highlighted using red stars.}
    \vspace{-1.5em}
\label{fig:apx:isr_performance}
\end{wrapfigure}
\begin{figure}[t]
    \centering
    \footnotesize
    \includegraphics[width=\linewidth]{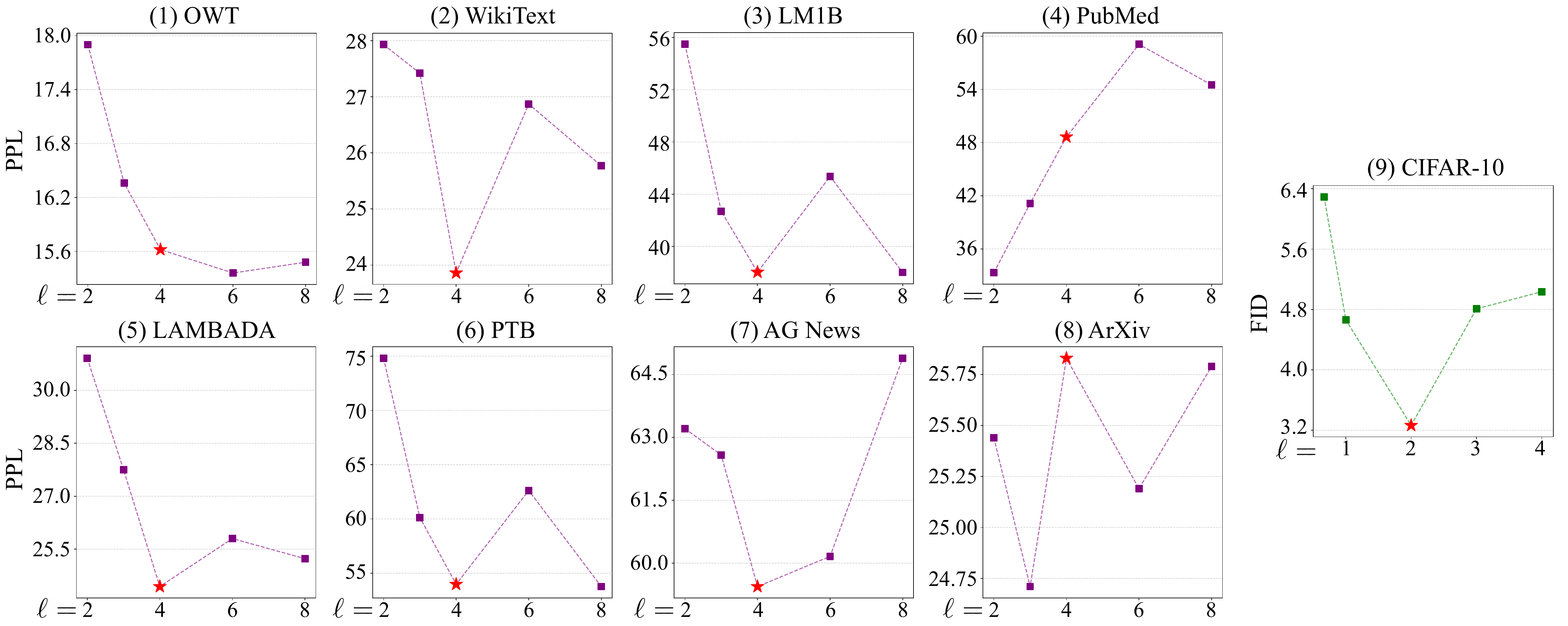}
    \caption{(1)-(8) PPL of MDLM-Prime with different $\ell$ evaluated on OWT and seven zeroshot datasets. (9) FID of MDM-Prime across varying $\ell$ evaluated on CIFAR-10. Lower values indicate better performance. Red stars highlight the $\ell$ values corresponding to the elbow points of the ISR curves (see Fig.~\ref{fig:apx:isr_performance}). These points yield near-optimal performance across most evaluations.}
    \label{fig:l_performance}
\end{figure}
Section~\ref{sec:experiment} demonstrates that, although increasing $\ell$ generally correlates with improved performance, the relationship is not strictly monotonic, i.e., larger values of $\ell$ do not always lead to better results. As larger values of $\ell$ typically induce more intermediate states (see Section~\ref{apx:analysis:intermediate}), we hypothesize that the associated increase in learning complexity may degrade performance under fixed model capacity (e.g., model size). This observation motivates our development of a strategy for identifying an effective $\ell$.

Through empirical analysis across both image and text datasets, we find that selecting $\ell$ near the elbow point of the Idle Step Ratio (ISR) curve (i.e., Eq.~(\ref{eq:apx:isr})) yields strong performance. Fig.~\ref{fig:apx:isr_performance} presents ISR curves for various values of $\ell$, with elbow points indicated by red stars. The relationship between ISR elbow points and MDM-Prime's performance is depicted in Fig.~\ref{fig:l_performance}, where these points align with near-optimal results across most evaluations. Based on the above findings, we recommend $\ell = 2$ for image datasets and $\ell = 4$ for text datasets.

\subsection{Model Architecture Designs}
\label{apx:architecture}
This section compares a number of architectural variants of MDM-Prime, with a focus on the design choices for the output logit layer and the input embedding layer. For reference, the standard MDM architecture is shown in Figs.~\ref{fig:architecture}~(o1) and (i1), which illustrate its output logit and embedding layers, respectively. In standard MDM, the output layer produces $L$ logits with $C$ entries, while the embedding layer processes inputs $\vx_t \in \tilde{\X}^L$ to produce $L$ $D$-dimensional embeddings.

\paragraph{Output Logit Layer.}~In Section~\ref{sec:methodology:parameterization}, we introduce two alternative designs for the output layer. The first design assumes independence among sub-tokens, leading to a factorized form of the probability distribution: $p_\theta(\vy^{i}_0 | \vy_t) = \prod_{j=1}^{\ell} p_\theta(y^{i,j}_0 | \vy_t)$. This formulation can be implemented using $\ell \times L$ logits, each with $b$ entries, as illustrated in Fig.~\ref{fig:architecture}~(o3). The second design (i.e., Fig.~\ref{fig:architecture}~(o2)) models a joint distribution over sub-token sequences, producing $L$ logits with $C$ entries, consistent with the standard MDM architecture shown in Fig.~\ref{fig:architecture}~(o1). As discussed in Section~\ref{sec:methodology:parameterization}, we adopt the joint distribution design over the factorized alternative (Fig.~\ref{fig:architecture}~(o3)) due to two key limitations of the latter: (1) the independence assumption, and (2) the potential to generate invalid samples. These issues hinder the model’s ability to capture complex data distributions (see Fig.~\ref{fig:methodology:toy}) and limit its applicability to real-world generative tasks.

\paragraph{Embedding Layer.}~As discussed in Section~\ref{sec:methodology:parameterization}, creating an embedding lookup table for $\vy_t^i$ is impractical due to the resulting growth in the number of parameters in it (also see Section~\ref{apx:analysis:intermediate}). To address this, we model sub-token embeddings by creating a lookup table for $y_t^{i,j} \in \tilde{\Y}$.  Since the input and output sequences differ in length, we introduce a simple merging strategy based on concatenation, as illustrated in Fig.~\ref{fig:architecture}~(i2). In this approach, each sub-token is embedded into a vector of size $\frac{D}{\ell}$, and the $\ell$ sub-token embeddings are then concatenated to form $D$-dimensional token-level embeddings.

\begin{figure}[t]
    \centering
    \footnotesize
    \includegraphics[width=\linewidth]{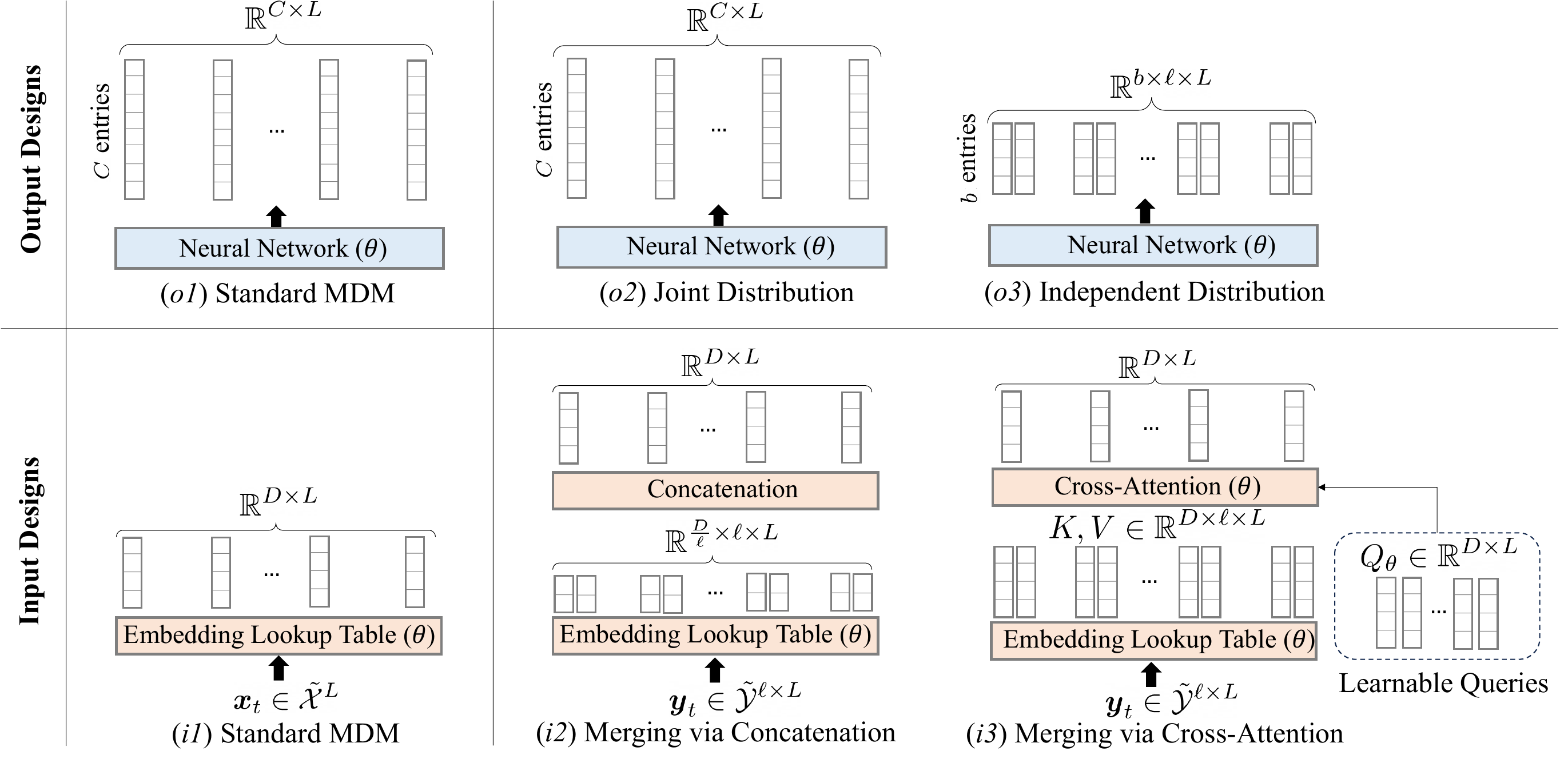}
    \vspace{-2em}
    \caption{Comparison of input and output layer designs in MDM and MDM-Prime. The top row illustrates three types of output layer designs: (o1) standard MDM, (o2) MDM-Prime with joint distribution, and (o3) MDM-Prime with independent distribution. The bottom row shows three embedding layer designs: (i1) standard MDM embedding layer, (i2) MDM-Prime embedding layer with concatenation, and (i3) MDM-Prime embedding layer with Perceiver~\cite{jaegle2021perceiver} cross-attention.}
    \label{fig:architecture}
\end{figure}
Fig.~\ref{fig:architecture}~(i3) shows an alternative design based on Perceiver~\cite{jaegle2021perceiver}, which employs cross-attention with learnable latent queries to merge sub-token embeddings. In this design, each sub-token is embedded into a $D$-dimensional vector, producing $\ell \times L$ sub-token embeddings. These sub-token embeddings are then used as the key and value matrices in a cross-attention layer, while a learnable query matrix $Q_\theta \in \mathbb{R}^{D \times L}$ retrieves the token-level embeddings. The resulting outputs are $L$ $D$-dimensional token embeddings. Although this approach is well-established for processing high-dimensional inputs in transformer-based models, it introduces substantial computational overhead~\cite{jaegle2021perceiver}. Moreover, empirical results in Appendix~\ref{apx:experiments:ablation} show that it does not lead to performance gains. 
Based on these observations, we adopt the simple concatenation-based merging strategy in MDM-Prime's embedding layer.

\subsection{Experimental Setups}
\label{apx:setups}
In this section, we provide additional implementation details and the hyperparameter settings used in our experiments. The configurations for the two-dimensional synthetic experiment, image generation experiment, and text generation experiment are presented in Sections~\ref{apx:setups:toy}, \ref{apx:setups:image}, and \ref{apx:setups:text}, respectively.

\subsubsection{Two-Dimensional Example}
\label{apx:setups:toy}
\textbf{Dataset.}~The experiment in Fig.~\ref{fig:methodology:toy} is conducted on a synthetic dataset constructed by converting pixel values from an image in the Cat dataset~\cite{zhang2008cat} into a two-dimensional probability histogram. The image is first cropped to $512\times512$ pixels and converted to grayscale. A probability distribution is then constructed according to the normalized pixel intensity values. In this example, $\X^L=\{0, \dots, 511\}^2$, where $L=2$ and $C=512$. The sample $\vx_0\in\X^L$ represents the coordinate of the figure ($512\times 512$).

\textbf{Training and Implementation Details.}~The network architecture is a four-layered multilayer perceptron (MLP) with hidden dimension $512$ and Swish~\citep{ramachandran2017searching} as its activation function. The models are trained using the Adam optimizer~\citep{kingma2015adam} with a learning rate of $1\times 10^{-3}$ and a batch size of $4,096$. The training is performed on a single NVIDIA A40 GPU with 48 GB memory.

\subsubsection{Image Generation}
\label{apx:setups:image}
\textbf{Datasets and Evaluation Methods.}~The experiments described in Section~\ref{sec:experiment} are conducted on the CIFAR-10~\cite{krizhevsky2009cifar10} and ImageNet-32~\cite{chrabaszcz2017imagenet} datasets. For both datasets, $L=32\times 32 \times 3$ and $C=256$. The CIFAR-10 training set contains $50,000$ images, while the ImageNet-32 dataset comprises $1,281,149$ training images and $49,999$ validation images. The sample quality is assessed using the Fréchet Inception Distance (FID)~\cite{heusel2017fid} and Inception Score (IS)~\cite{barratt2018is}, implemented via the \texttt{torchmetrics.image.fid} and \texttt{torchmetrics.image.inception} libraries, respectively. The corrector steps~\cite{campbell2022ctmc, gat2024dfm} are adopted during sampling. For CIFAR-10, FID is computed using the training set as the reference distribution, whereas for ImageNet-32, the validation set serves as the reference distribution. This evaluation protocol is consistent with prior works~\cite{karras2020stylegan, ho2020ddpm, song2021scoreflow, austin2021structurediff, campbell2022ctmc, gat2024dfm, oord2016pixelcnn, nisonoff2025guidance, song2019generative, lipman2023flowmatching, chen2023bitdiff, chao2023investigating, bartosh2024neuraldiff, tran2019msgan, zheng2023ode, albergo2023stochastic, kim2022soft}.

\textbf{Training and Implementation Details.}~Our model architecture follows~\cite{gat2024dfm} to adapt the ablated diffusion model (ADM)~\cite{dhariwal2021diffusion}, which has a U-Net structure with a symmetric design. It comprises a downsampling module, a bottleneck module, and an upsampling module. Each module contains multiple residual convolutional blocks, with attention layers incorporated at selected resolutions. The downsampling and upsampling modules use an embedding dimension of $96$ and channel multipliers of $[3, 4, 4]$, respectively. The depth of each block is set to $5$. For attention layers, the number of head channels is set to $64$. The network is optimized using the Adam optimizer~\cite{kingma2015adam} with $\beta_1 = 0.9$, $\beta_2 = 0.999$, and a learning rate of $1\times10^{-4}$. The scheduling function is defined as a third-order polynomial, given by $\alpha_t = (1 - t)^3$. \lance{Following prior work~\cite{austin2021structurediff, gat2024dfm}, the coefficient $\frac{\alpha'_t}{1 - \alpha_t}$ in the loss term is omitted during training to improve sample quality. As for sampling, temperature scheduling~\cite{gat2024dfm}, corrector steps~\cite{gat2024dfm}, and timestep scheduling~\cite{shi2024simplifieddiff} are incorporated into both MDM and MDM-Prime to enhance performance. The sampling parameters are selected via grid search (see Section~\ref{apx:experiments:fid}).} The model is trained with a batch size of $512$ for $4,250$ epochs on CIFAR-10 and $1,000$ epochs on ImageNet-32. The training is performed on eight NVIDIA L40 GPUs with 48 GB memory.

\subsubsection{Text Generation}
\label{apx:setups:text}
\textbf{Datasets and Evaluation Methods.}~The training is performed on the OpenWebText (OWT) dataset~\cite{gokaslan2019owt}. Text data is tokenized using the GPT-2 tokenizer~\cite{radford2019language}, which defines $L = 1,024$ and $C = 50,257$. Since some sequences in OWT exceed the maximum length $L$, they are concatenated with the \texttt{<eos>} token as a separator and then wrapped into segments of length $1,024$. In addition, the first and last tokens of each sequence are set to \texttt{<eos>}. As OWT does not include an official validation split, we follow the procedure in~\cite{sahoo2024simplifieddiff} by reserving the last $100,000$ samples for validation. The above data preprocessing and evaluation setup is consistent with~\cite{sahoo2024simplifieddiff}.

\textbf{Training and Implementation Details.}~Following prior works~\cite{lou2024sedd, sahoo2024simplifieddiff, shi2024simplifieddiff, xu2025ebmdiff, arriola2025block}, our model architecture is a diffusion transformer (DiT)~\cite{Peebles2022DiT} with rotary positional embeddings~\cite{su2023rotary}. The model consists of $12$ DiT blocks with a hidden dimension of $768$ and $12$ attention heads, consistent with the configuration used in~\cite{sahoo2024simplifieddiff}. A dropout rate of $0.1$ is applied throughout training. The scheduling function is defined as $\alpha_t = 1 - t$. The network is optimized using the AdamW optimizer~\cite{loshchilov2019adamw} with $\beta_1 = 0.9$, $\beta_2 = 0.999$, and a linear learning rate warm-up from $0$ to $3 \times 10^{-4}$. The model is trained with a batch size of $128$ for $1$ million steps, corresponding to a total of $262$ billion tokens~\cite{sahoo2024simplifieddiff}. To facilitate efficient training, the model is first trained without the carry-over parameterization for 900K iterations, and then enables this mechanism in the final 100K iterations (see Section~\ref{apx:experiments:time_eval}). The training is performed on eight NVIDIA L40 GPUs with 48 GB memory.
\subsection{Supplementary Experiments}
\label{apx:experiments}
In this section, we present additional experimental results. \lance{Section~\ref{apx:experiments:fid} discusses a technique for tuning the hyperparameters based on relative FID values.} Section~\ref{apx:experiments:ablation} provides ablation studies on the embedding layer designs and the carry-over parameterization method. Section~\ref{apx:experiments:time_eval} reports the time cost per training iteration under different values of $\ell$. \lance{Section~\ref{apx:experiments:curves} compares the training curves of MDLM, MDLM-Prime, and ARM.} Section~\ref{apx:experiments:qualitative} presents qualitative results for both the text and image generation tasks. Finally, Section~\ref{apx:experiments:traj} provides visualization of the sampling process of MDM-Prime.

\subsubsection{Hyper-Parameter Tuning based on Relative FID}
\label{apx:experiments:fid}

\begin{wrapfigure}[]{r}{0.4\linewidth}
    \centering
    \vspace{-1.5em}
    \footnotesize
    \includegraphics[width=\linewidth]{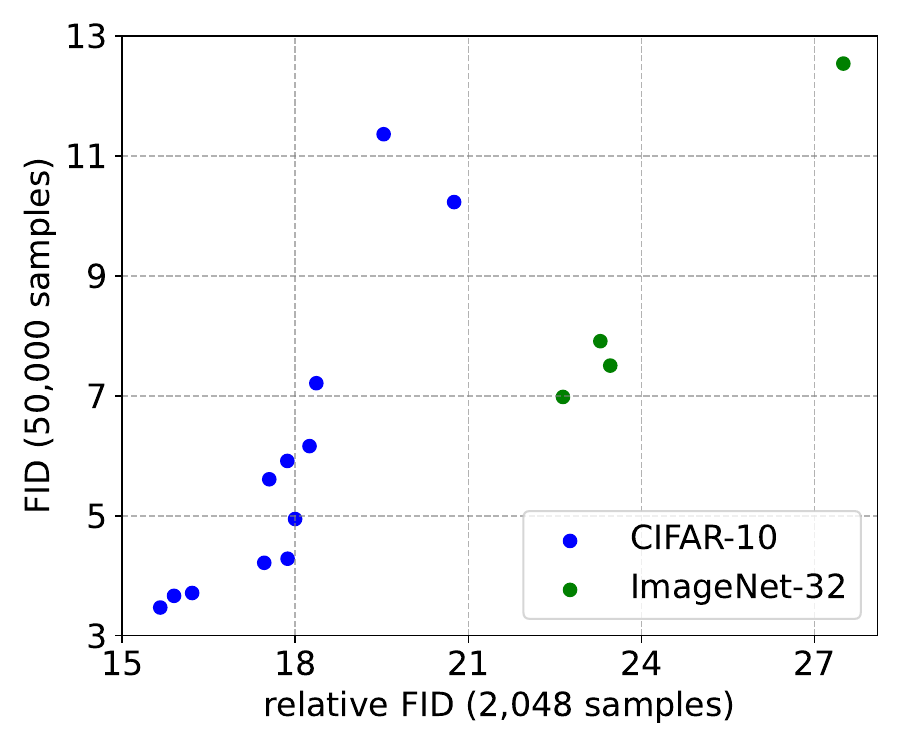}
    \vspace{-1.5em}
    \caption{FID versus relative FID.}
    \vspace{-1.75em}
    \label{fig:fid_relfid}
\end{wrapfigure}
\lance{Since computing the FID score is computationally intensive, we tune hyperparameters (e.g., batch size, dropout rate, scheduling coefficients $\alpha_t$, and sampling parameters) using FID scores estimated from a smaller number of samples. We refer to this approximation as relative FID. Empirically, we find that relative FID is positively correlated with the standard FID computed on the full dataset. Fig.~\ref{fig:fid_relfid} illustrates this trend, comparing relative FID computed with 2,048 samples to standard FID computed with 50,000 samples. In our experiments, we use relative FID to guide hyperparameter search for both the baseline methods (i.e., MDM and MDM-Mixture) and our proposed method, and report their best-performing results in the benchmark comparison.}

\subsubsection{Ablation Study}
\label{apx:experiments:ablation}

\begin{table}[t]
    \renewcommand{\arraystretch}{1}
    \newcommand{\boldtoprule}{\toprule[1.3pt]}
    \newcommand{\boldbottomrule}{\bottomrule[1.3pt]}
    \centering
    \large
    \vspace{-0.5em}
    \caption{Zero-shot validation PPL evaluated on seven textual datasets. Lower values correspond to better performance. Methods marked with * incorporate an autoregressive formulation. MDLM-Prime with carry-over parameterization achieves improved results on LAMBADA and PubMed.}
    \vspace{-0.25em}
    \resizebox{\linewidth}{!}{%
    \begin{tabular}{ccccccccc}
        \boldtoprule
                              & Carry-over  & LAMBADA & WikiText & PTB & LM1B & AG News & PubMed & ArXiv \\
        \hline
        ARM*~\cite{sahoo2024simplifieddiff} & - & 51.28 & 25.75 & 82.05  & 51.25 & \textbf{52.09}  & 49.01 & 41.73 \\
        \rowcolor{gray!15}
        MDLM~\cite{sahoo2024simplifieddiff} & \cmark & $\leq$47.52 & $\leq$32.83 & $\leq$95.26  & $\leq$67.01 & $\leq$61.15 & $\leq$41.89 & $\leq$37.37 \\
        MDLM-Prime ($\ell=2$) &        & $\leq$34.61 &  $\leq$31.94 & $\leq$91.85 & $\leq$65.25 & $\leq$63.74 &  $\leq$63.51 & $\leq$25.95 \\
        \rowcolor{gray!15}
        MDLM-Prime ($\ell=2$) & \cmark & $\leq$30.91 &  $\leq$27.93 & $\leq$74.81 & $\leq$55.50 & $\leq$63.21 & $\leq$\textbf{33.32} & $\leq$25.44 \\
        MDLM-Prime ($\ell=3$) &        & $\leq$34.55 &  $\leq$31.00 & $\leq$55.46 & $\leq$41.15 & $\leq$62.51 & $\leq$171.74 & $\leq$29.44  \\
        \rowcolor{gray!15}
        MDLM-Prime ($\ell=3$) & \cmark & $\leq$27.75 & $\leq$27.42 & $\leq$60.13 & $\leq$42.69 & $\leq$62.58 & $\leq$41.14  & $\leq$24.71 \\
        MDLM-Prime ($\ell=4$) &        & $\leq$25.79 & $\leq$\textbf{23.52} & $\leq$\textbf{49.90} & $\leq$37.70 & $\leq$59.65 & $\leq$210.96 & $\leq$\textbf{23.19} \\
        \rowcolor{gray!15}
        MDLM-Prime ($\ell=4$) & \cmark & $\leq$24.44 & $\leq$23.86 & $\leq$53.98 & $\leq$38.02 & $\leq$59.44  & $\leq$48.64 & $\leq$25.83 \\
        MDLM-Prime ($\ell=6$) &        & $\leq$25.89 & $\leq$25.86 & $\leq$52.42 & $\leq$\textbf{37.65} & $\leq$56.68  & $\leq$497.39 & $\leq$24.99 \\
        \rowcolor{gray!15}
        MDLM-Prime ($\ell=6$) & \cmark & $\leq$25.80 & $\leq$26.87 & $\leq$62.62 & $\leq$45.36 & $\leq$60.16  & $\leq$59.09 & $\leq$25.19 \\
        MDLM-Prime ($\ell=8$) &        & $\leq$32.20 & $\leq$26.01 & $\leq$53.27 & $\leq$38.05  & $\leq$65.63 & $\leq$218.99 & $\leq$28.63 \\
        \rowcolor{gray!15}
        MDLM-Prime ($\ell=8$) & \cmark & $\leq$\textbf{25.23} & $\leq$25.77 & $\leq$53.77 & $\leq$38.00 & $\leq$64.89 & $\leq$54.50 &  $\leq$25.79 \\
        \boldbottomrule
    \end{tabular}}
    \label{tab:apx:carry_over_zeroshot}
\end{table}

\begin{wraptable}{r}{14em}
    \renewcommand{\arraystretch}{1}
    \newcommand{\boldtoprule}{\toprule[1.2pt]}
    \newcommand{\boldbottomrule}{\bottomrule[1.2pt]}
    \centering
    \small
    \vspace{-2em}
    \caption{Perplexity (PPL) evaluation on OWT. The symbol $\downarrow$ represents that lower values correspond to better performance.}
    \vspace{0.5em}
    \begin{tabular}{ccc}
        \boldtoprule
                   & \multicolumn{2}{c}{PPL ($\downarrow$)}  \\
        \cmidrule(lr){2-3}
        Carry-over & & \cmark \\
        \hline
        $\ell=2$   & $\leq$18.04 & $\leq$17.90 \\
        $\ell=3$   & $\leq$16.43 & $\leq$16.36 \\
        $\ell=4$   & $\leq$15.67 & $\leq$15.62 \\
        $\ell=6$   & $\leq$\textbf{15.43} & $\leq$\textbf{15.36} \\
        $\ell=8$   & $\leq$15.48 & $\leq$15.45 \\
        \boldbottomrule
    \end{tabular}
    \label{tab:apx:carry_over}
    \vspace{-1em}
\end{wraptable}

\paragraph{Carry-over Parameterization.}~In this subsection, we evaluate the effectiveness of the carry-over parameterization. Table~\ref{tab:apx:carry_over} compares the performance of MDLM-Prime trained with and without carry-over on the OWT dataset. It is observed that incorporating the carry-over parameterization consistently leads to slightly improved performance across different values of $\ell$. In contrast, when the model is evaluated using unseen data from zeroshot datasts, the difference becomes significant. A comparison is offered in Table~\ref{tab:apx:carry_over_zeroshot}. This parameterization method consistently offers performance gains on LAMBADA. In addition, MDLM-Prime without the carry-over parameterization has noticeably inferior performance (i.e., difference $>30$) on specific text domains such as PubMed. We hypothesize that the carry-over parameterization, by explicitly removing predictions of $\vy_0$ inconsistent with $\vy_t$, effectively reduces the uncertainty in the model's outputs and therefore enhances generalization to unseen domains.

\paragraph{Embedding Layer Designs.}~In this subsection, we compare the performance of MDM-Prime when trained using two embedding-merging strategies: the concatenation-based design (denoted as (o2)) and the cross-attention-based design (denoted as (o3)), both of which are detailed in Appendix~\ref{apx:architecture}. We first evaluate these approaches on the image generation task using the CIFAR-10 dataset with $\ell=2$. The concatenation-based design (o2) achieves a superior FID score of 3.26, compared to the cross-attention-based design (o3), which yields an FID score of 3.98. 
The results indicate that the concatenation-based design offers better overall performance across both modalities.

\subsubsection{Runtime Evaluation}
\label{apx:experiments:time_eval}
\begin{table}[t]
\begin{minipage}{0.49\linewidth}
\newcommand{\boldtoprule}{\toprule[1.2pt]}
\newcommand{\boldbottomrule}{\bottomrule[1.2pt]}
\renewcommand{\arraystretch}{1.2}
\centering
\caption{The average runtime calculated based on ten training iterations. The results are reported in seconds per iteration. The 95\% confidence intervals of the results are around 0.05.}
\vspace{-0.5em}
\label{tab:time_eval}
\begin{center}
    \small
    \begin{tabular}{llcc}
        \boldtoprule
        &     & \multicolumn{2}{c}{Runtime (sec./iter.)} \\
        \cmidrule(lr){3-4}
        \multicolumn{2}{c}{Carry-over} &     & \cmark  \\
        \hline
        $\ell=1$, & $b=50,257$ &  1.06 e-1 & 1.14 e-1 \\
        $\ell=2$, & $b=225$ &  1.06 e-1 & 1.29 e-1 \\
        $\ell=3$, & $b=37$ &  1.11 e-1 & 1.34 e-1 \\
        $\ell=4$, & $b=15$ &  1.10 e-1 & 1.43 e-1 \\
        $\ell=6$, & $b=7$ &  1.13 e-1 & 1.75 e-1 \\
        $\ell=8$, & $b=4$ &  1.07 e-1 & 2.12 e-1 \\
        \boldbottomrule
    \end{tabular}
\end{center}
\end{minipage}\hfill
\begin{minipage}{0.49\linewidth}
\newcommand{\boldtoprule}{\toprule[1.1pt]}
\newcommand{\boldbottomrule}{\bottomrule[1.1pt]}
\renewcommand{\arraystretch}{1.175}
\vspace{-0.2em}
\caption{FID scores evaluated under different NFE on the CIFAR-10 and ImageNet-32 benchmarks. Lower values correspond to better performance. MDM-Prime is trained with $\ell=2$.}
\vspace{-0.35em}
\label{tab:cifar_imagenet_step}
\begin{center}
    \small
    \begin{tabular}{cccc}
        \boldtoprule
            \multicolumn{4}{c}{CIFAR-10}  \\
        \hline
        NFE & 128 & 256 & 512  \\
        \hline
        MDM           & 7.55 & 5.00 & 4.66 \\
        MDM-Prime & \textbf{5.22} & \textbf{3.73} & \textbf{3.26} \\
        \hline
        \hline
         \multicolumn{4}{c}{ImageNet-32} \\
        \hline
        NFE    & 128 & 256 & 512 \\
        \hline
        MDM    & 9.55 & 8.24 & 8.12 \\
        MDM-Prime & \textbf{7.85} & \textbf{7.61} & \textbf{7.31} \\
        \boldbottomrule
    \end{tabular}
\end{center}
\end{minipage}
\end{table}

\begin{table}[t]
    \renewcommand{\arraystretch}{1.1}
    \newcommand{\boldtoprule}{\toprule[1.3pt]}
    \newcommand{\boldbottomrule}{\bottomrule[1.3pt]}
    \centering
    \large
    \vspace{-0.25em}
    \caption{Ablation results on the validation perplexities of MDLM-Prime ($\ell=6$). The model is trained and evaluated with or without the carry-over parameterization. Lower perplexity indicates better performance.}
    \vspace{-0.25em}
    \resizebox{\linewidth}{!}{%
    \begin{tabular}{cccccccccc}
        \boldtoprule
        \multicolumn{2}{c}{Carry-over} & \multicolumn{8}{c}{Dataset} \\
        Train   &  Evaluation & OWT & LAMBADA & WikiText & PTB & LM1B & AG News & PubMed & ArXiv \\
        \cmidrule(lr){1-2} \cmidrule(lr){3-10}
         &        & $\leq$15.43 & $\leq$25.89 & $\leq$25.86 & $\leq$52.42 & $\leq$37.65 & $\leq$56.68  & $\leq$497.39 & $\leq$24.99 \\
         & \cmark & $\leq$15.41 & $\leq$\textbf{24.36} & $\leq$\textbf{24.25} & $\leq$\textbf{49.77} & $\leq$\textbf{37.35} & $\leq$\textbf{56.48}  & $\leq$63.58 & $\leq$\textbf{24.81} \\
         \cmark & \cmark & $\leq$\textbf{15.36} & $\leq$25.80 & $\leq$26.87 & $\leq$62.62 & $\leq$45.36 & $\leq$60.16  & $\leq$\textbf{59.09} & $\leq$25.19 \\
        \boldbottomrule
    \end{tabular}}
    \label{tab:apx:co_eval_ablation}
    \vspace{-0.5em}
\end{table}

Table~\ref{tab:time_eval} presents a comparison of the time required to optimize the model parameterized with and without the carry-over condition (i.e., Eq.~(\ref{eq:methodology:marginalization})) under different choices of $\ell$. We observe that larger values of $\ell$ incur higher computational costs for the setup with carry-over condition. The increase in runtime arises from the filter lookup table calculation required by the carry-over parameterization (see Section~\ref{apx:analysis:carry_over}). Due to the 48 GB memory limitation of our available GPUs, parallelized querying of the filter lookup table (with a batch size of $128$) often results in out-of-memory errors. To mitigate this issue, we implement the lookup operations sequentially in our text experiments, which introduces additional computational overhead as $\ell$ increases. To maintain a runtime comparable to the baseline, we first train the model without the carry-over parameterization for 900K iterations, and then enable this mechanism during the final 100K iterations. Alternatively, the carry-over parameterization can be applied only at inference time. As shown in Table~\ref{tab:apx:co_eval_ablation}, this setup—training without the carry-over parameterization but evaluating with it—achieves performance comparable to, or slightly better than, that of the model trained and evaluated with the carry-over parameterization, while avoiding the higher training cost. Nonetheless, to preserve the mathematical formality of our proposed framework, we adopt the carry-over parameterization during both training and evaluation.

\begin{table}[t]
    \renewcommand{\arraystretch}{1.1}
    \newcommand{\boldtoprule}{\toprule[1.2pt]}
    \newcommand{\boldbottomrule}{\bottomrule[1.2pt]}
    \centering
    \caption{Runtime and effective NFE evaluation of MDLM and MDLM-Prime ($\ell=4$).}
    \begin{tabular}{ccccccc}
        \boldtoprule
         & \multicolumn{3}{c}{MDLM} & \multicolumn{3}{c}{MDLM-Prime ($\ell=4$)} \\
        \hline
        Effective NFE & 768 & 1,003 & 1,023 & 768 & 1,003 & 1,023 \\ 
        Discretized Timesteps & 1,685 & 25,000 & 500,000 & 772 & 1,022 & 1,044 \\
        Runtime (sec.) & 8.87 & 23.30 & 159.81 & 8.72 & 12.31 & 12.58 \\
        \boldbottomrule
    \end{tabular}
    \label{tab:apx:nfe}
\end{table}

\lance{As for inference efficiency, although idle sampling steps in masked diffusion models can be executed efficiently using cached model output probabilities (since the model is timestep-independent~\cite{sahoo2024simplifieddiff}), determining the unmasked elements (i.e., resampling) during these steps still incurs computational overhead, even without extra forward passes. Consequently, the sampling cost of MDLM grows substantially when targeting a higher effective NFE (i.e., NFEs that cannot be cached). As shown in Table~\ref{tab:apx:nfe}, MDLM requires 500,000 discretized steps to achieve an effective 1,023 NFE, severely limiting sampling efficiency. In contrast, MDLM-Prime ($\ell=4$) attains the same effective NFE with only 1,044 discretized steps, highlighting its substantially improved efficiency.}

\subsubsection{Training Curves}
\label{apx:experiments:curves}
\begin{figure}[t]
    \centering
    \footnotesize
    \vspace{-0.5em}
    \includegraphics[width=\linewidth]{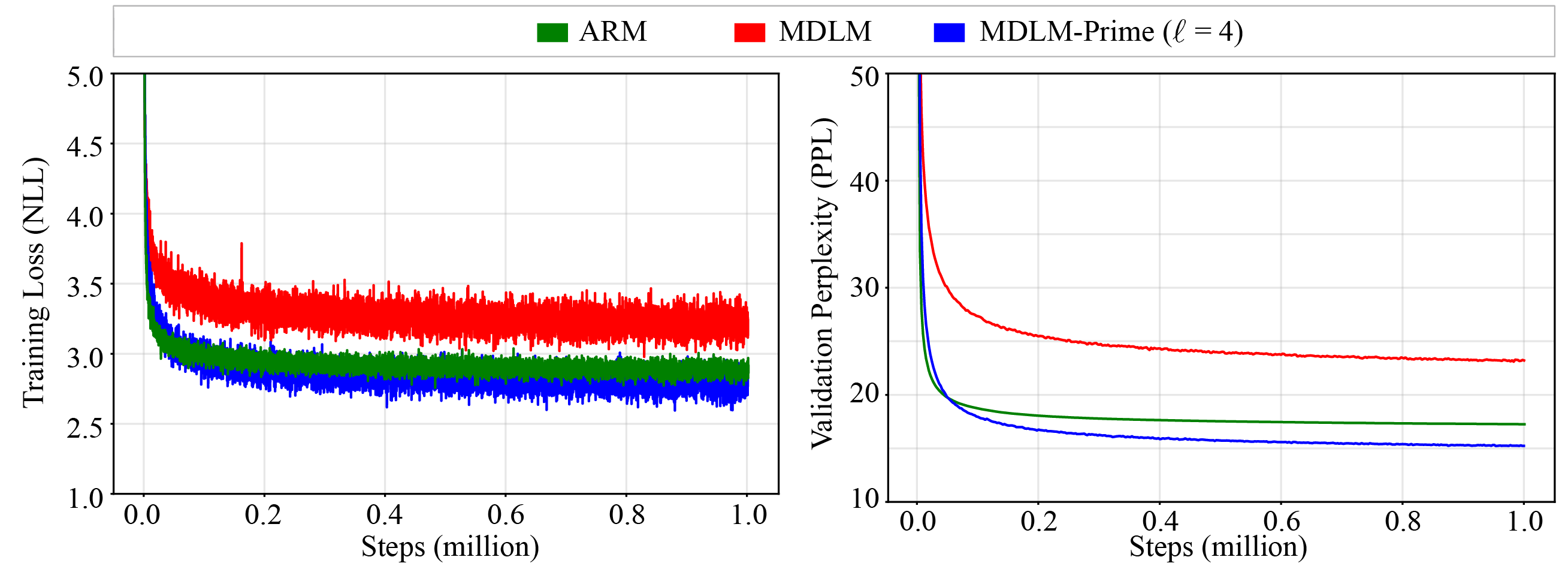}
    \vspace{-1.5em}
    \caption{The training and evaluation curves of ARM, MDM, and MDM-Prime ($\ell=4$).}
    \label{fig:curves}
    \vspace{-0.5em}
\end{figure}

\lance{Fig.~\ref{fig:curves} compares the training and evaluation curves of MDLM, MDLM-Prime ($\ell=4$), and ARM. MDLM-Prime ($\ell=4$) achieves the lowest evaluation PPL among the three models.}

\subsubsection{Sample Quality Evaluation}
\label{apx:experiments:qualitative}

\paragraph{Image Generation.}~In this subsection, we provide additional quantitative and qualitative results to assess sample quality in image generation tasks. Table~\ref{tab:cifar_imagenet_step} presents a comparison of FID scores between MDM-Prime and its baseline (i.e., a standard MDM). The results show that MDM-Prime outperforms MDM across different number of function evaluations (NFE). In addition, Fig.~\ref{fig:imputation_all} provides a number of imputation examples, while Fig.~\ref{fig:cifar10} 
presents qualitative results showcasing multiple uncurated samples generated by MDM-Prime.

\begin{wrapfigure}[]{r}{0.47\linewidth}
    \centering
    \vspace{-1.75em}
    \footnotesize
    \includegraphics[width=\linewidth]{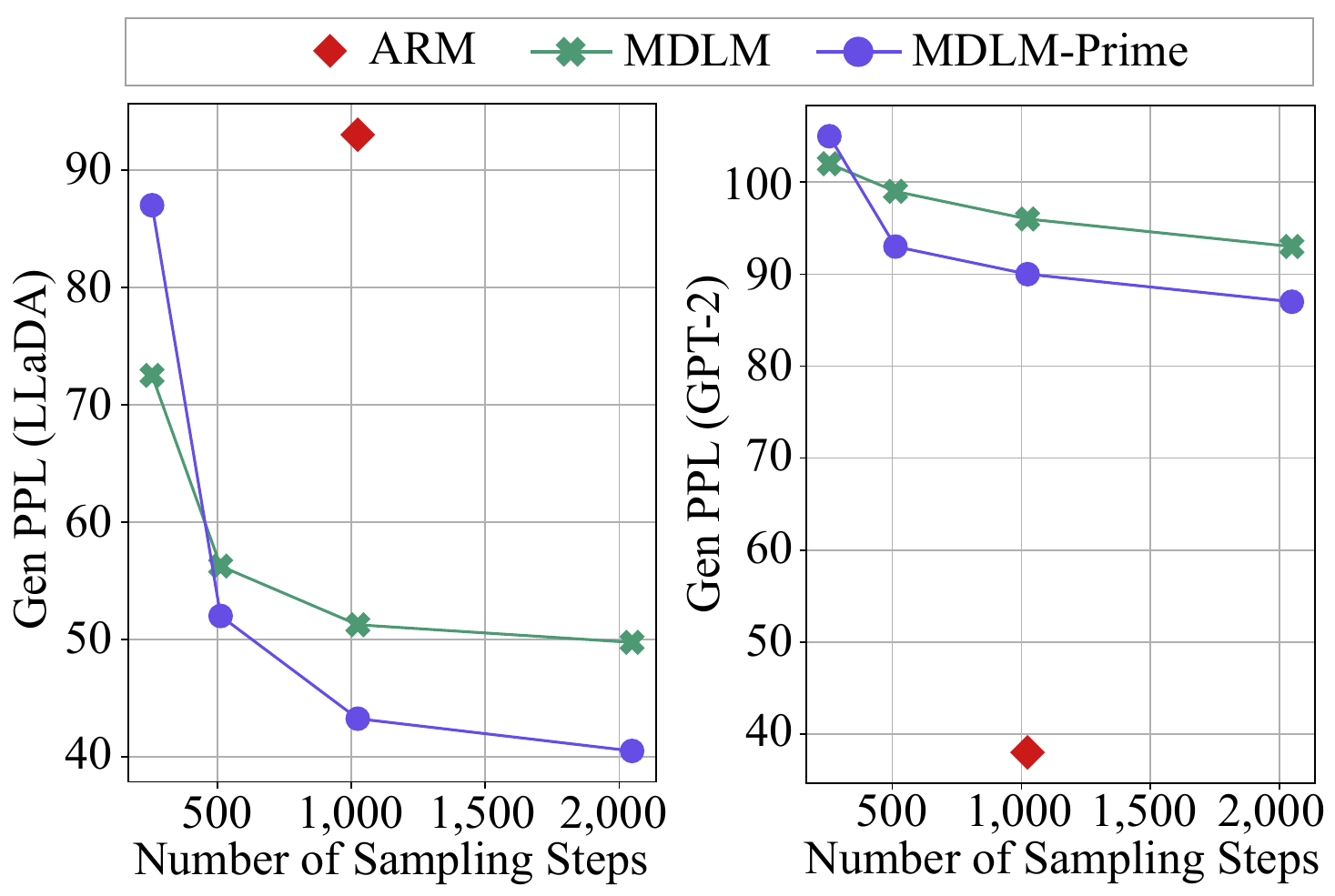}
    \vspace{-1.5em}
    \caption{Gen PPL evaluated using LLaDA-8B (left) and GPT-2 Large (right). Lower values correspond to better performance. MDLM is evaluated using their officially released code.}
    \vspace{-1em}
\label{fig:apx:gen_ppl_eval}
\end{wrapfigure}
\paragraph{Text Generation.}~In this subsection, we present additional sample quality evaluations for text generation tasks. Fig.~\ref{fig:apx:gen_ppl_eval} presents the generative perplexity (Gen PPL) results for ARM, MDLM, and MDLM-Prime. Gen PPL is calculated by evaluating the perplexity of generated samples using a pretrained large language model. To ensure a comprehensive assessment, we report Gen PPL obtained from both order-agnostic models (e.g., LLaDA-8B~\cite{nie2025llmdiff}) and  order-specific models (e.g., GPT-2 Large~\cite{radford2019language}). Following~\cite{zheng2024mdm}, we adopt 64-bit floating-point precision to enhance sampling accuracy. We observe opposite trends depending on the pretrained model: MDLM and MDLM-Prime outperform ARM under LLaDA-based Gen PPL, whereas ARM achieves lower Gen PPL than MDLM and MDLM-Prime when evaluated using GPT-2.

In addition, to qualitatively evaluate MDLM-Prime, we present both unconditional and conditional generation samples. Unconditional generation results are shown in Fig.~\ref{fig:uncond_text_sample}. For conditional generation, we provide the model with prefix (i.e., Fig.~\ref{fig:apx:prefix}) and suffix (i.e., Fig.~\ref{fig:apx:suffix}) texts sourced from an online article describing Rabindranath Tagore’s poems (\href{https://timesofindia.indiatimes.com/life-style/books/features/10-timeless-poems-by-rabindranath-tagore/amp_etphotostory/75593222.cms}{link}), and assign the model to generate the middle content. Figs.~\ref{fig:apx:poem_2bit}-\ref{fig:apx:poem_6bit} present some samples generated using MDLM-Prime.

\subsubsection{Visualization of Sampling Processes}
\label{apx:experiments:traj}
Unlike continuous diffusion models (e.g.,~\cite{song2021scoreflow, song2019generative}), visualizing the sampling processes of MDM and MDM-Prime is nontrivial due to the presence of masked tokens and sub-tokens. To qualitatively analyze the generation behaviors of MDM and MDM-Prime, we develop a visualization technique that illustrates the evolution of samples throughout the diffusion process. Instead of directly visualizing the latent variables (i.e., $\vx_t$ for MDM or $\vy_t$ for MDM-Prime), we visualize the model’s predictions of the final unmasked sample. Specifically, we show $\vx_0 \sim p_\theta(\cdot\, | \vx_t)$ for MDM, and $\vx_0 = f^{-1}(\vy_0)$ where $\vy_0 \sim p_\theta(\cdot\, | \vy_t)$ for MDM-Prime.

Fig.~\ref{fig:sampling_traj} shows the evolution of samples $\vx_0$ generated by MDM-Prime trained with different values of $\ell$ on CIFAR-10. For each setup, we capture a snapshot every 25 timesteps, with the total number of function evaluations (NFE) fixed at 500. The snapshots are displayed from left to right, depicting the progressive refinement from noisy initialization to the final output. 

The text generation processes are presented in Figs.~\ref{fig:text_sampling_traj_1bit}-\ref{fig:text_sampling_traj_4bit}. Since the coarse-to-fine transitions of $\vx_0$ are less visually discernible in text generation, we additionally illustrate the denoising progression using token-wise masked ratios. In the figures, darker shades of blue indicate a higher degree of masking, while lighter colors suggest that a token is closer to its final predicted state. The results demonstrate that MDLM-Prime with larger values of $\ell$ enables a more fine-grained denoising process. In contrast, the original MDLM employs a binary masking scheme, where each token is either masked or unmasked. The results thus highlight the property of the proposed Prime method.

\begin{figure}[t]
    \centering
    \footnotesize
    \includegraphics[width=0.8\linewidth]{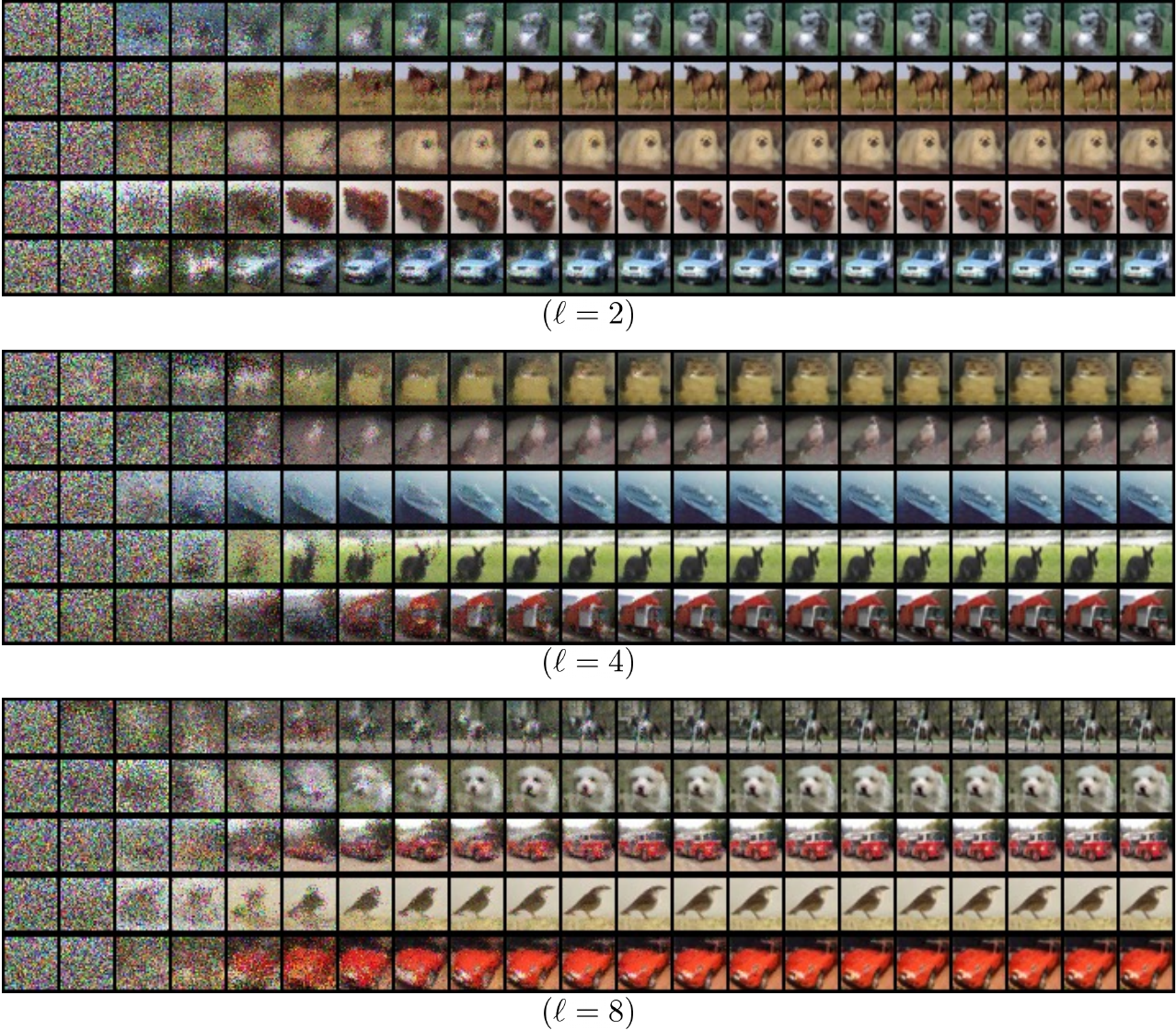}
    \vspace{-1em}
    \caption{Visualization of the sampling processes of MDM-Prime with $\ell=2$ (top), $\ell=4$ (middle), and $\ell=8$ (bottom). The models are trained on CIFAR-10.}
    \vspace{-0.5em}
    \label{fig:sampling_traj}
\end{figure}

\subsection{Limitations}
\label{apx:limitation}
A core assumption in MDM is that tokens (i.e., $x_0^1, \cdots,x_0^L$) are conditionally independent given the latent representation $\vx_t$, i.e., $x^i_0 \ind x^j_0 \mid \vx_t$ for $i\neq j$. Under this assumption, many recent works~\cite{nie2025llmdiff, sahoo2024simplifieddiff, shi2024simplifieddiff, austin2021structurediff, campbell2022ctmc, gat2024dfm, zheng2024mdm} factorize the conditional distribution as $p_\theta (\vx_0|\vx_t) = \prod_{i=1}^L p_\theta (x^i_0|\vx_t)$, which leads to the sum of log-probability terms in the evidence upper bound (i.e., Eq.~(\ref{eq:background:diffusion_elbo})). While this factorized parameterization offers significant advantages in sampling and training efficiency, some recent studies~\cite{xu2025ebmdiff, liu2025dcd} have shown that it may degrade performance due to its inability to model inter-token dependencies.

In this work, we adopt the same conditional independence assumption as prior studies~\cite{nie2025llmdiff, sahoo2024simplifieddiff, shi2024simplifieddiff, austin2021structurediff, campbell2022ctmc, gat2024dfm, zheng2024mdm}, and extend it to our sub-token formulation. Specifically, we assume $\vy^i_0 \ind \vy^j_0 \mid \vy_t$ for $i\neq j$, and accordingly factorize the conditional distribution as $p_\theta(\vy_0 | \vy_t) = \prod_{i=1}^L p_\theta(\vy^i_0 | \vy_t)$. As discussed in Section~\ref{sec:methodology}, further factorizing each sub-token sequence prediction as $p_\theta(\vy^i_0 | \vy_t) = \prod_{j=1}^\ell p_\theta(y^{i,j}_0 | \vy_t)$ leads to a clear drop in performance. Therefore, we retain the inter-token factorization (i.e., $p_\theta(\vy_0 | \vy_t) = \prod_{i=1}^L p_\theta(\vy^i_0 | \vy_t)$) to strike a balance between accuracy and computational efficiency.

\subsection{Broader Impacts and Future Works}
\label{apx:impacts}
This paper investigated whether discrete data can be effectively represented and reconstructed using sub-token representations. We proposed MDM-Prime as a simple yet effective instantiation of this idea. Given its superior performance across both image and text generation tasks, MDM-Prime holds potential for positive societal impact. To support broader scientific contributions, we outline two potential directions for extending the proposed framework and guiding future research in this area:

\paragraph{Learnable Transformations for Discrete Data.}~MDM-Prime adopts base-$b$ encoding as an invertible mapping to extend discrete data into longer sequences of sub-tokens. Analogous to normalizing flows in the continuous domain (e.g.,~\cite{Dinh2014NICENI, Dinh2016DensityEU}), which parameterize invertible transformations using carefully designed model architectures, we anticipate that this discrete transformation can likewise be parameterized and optimized during training \lance{to encode the semantics of each token}. As invertible modeling in the discrete domain remains underexplored, it presents opportunities to advance discrete generative modeling.

\paragraph{Capturing Inter-token Dependencies.}~Our current decoder implementation follows prior works~\cite{nie2025llmdiff, sahoo2024simplifieddiff, shi2024simplifieddiff, austin2021structurediff, campbell2022ctmc, gat2024dfm, zheng2024mdm}, which assume conditional independence between tokens (see Section~\ref{apx:limitation}). While recent approaches~\cite{xu2025ebmdiff, liu2025dcd} have proposed methods to relax this assumption, they require training an additional autoregressive model to guide the sampling process of MDM, resulting in substantial computational overhead. Hence, developing a more efficient method for MDM to model inter-token joint distributions represents a promising direction for future work.

\clearpage

\begin{figure}[t]
    \centering
    \footnotesize
    \vspace{-1em}
    \includegraphics[width=\linewidth]{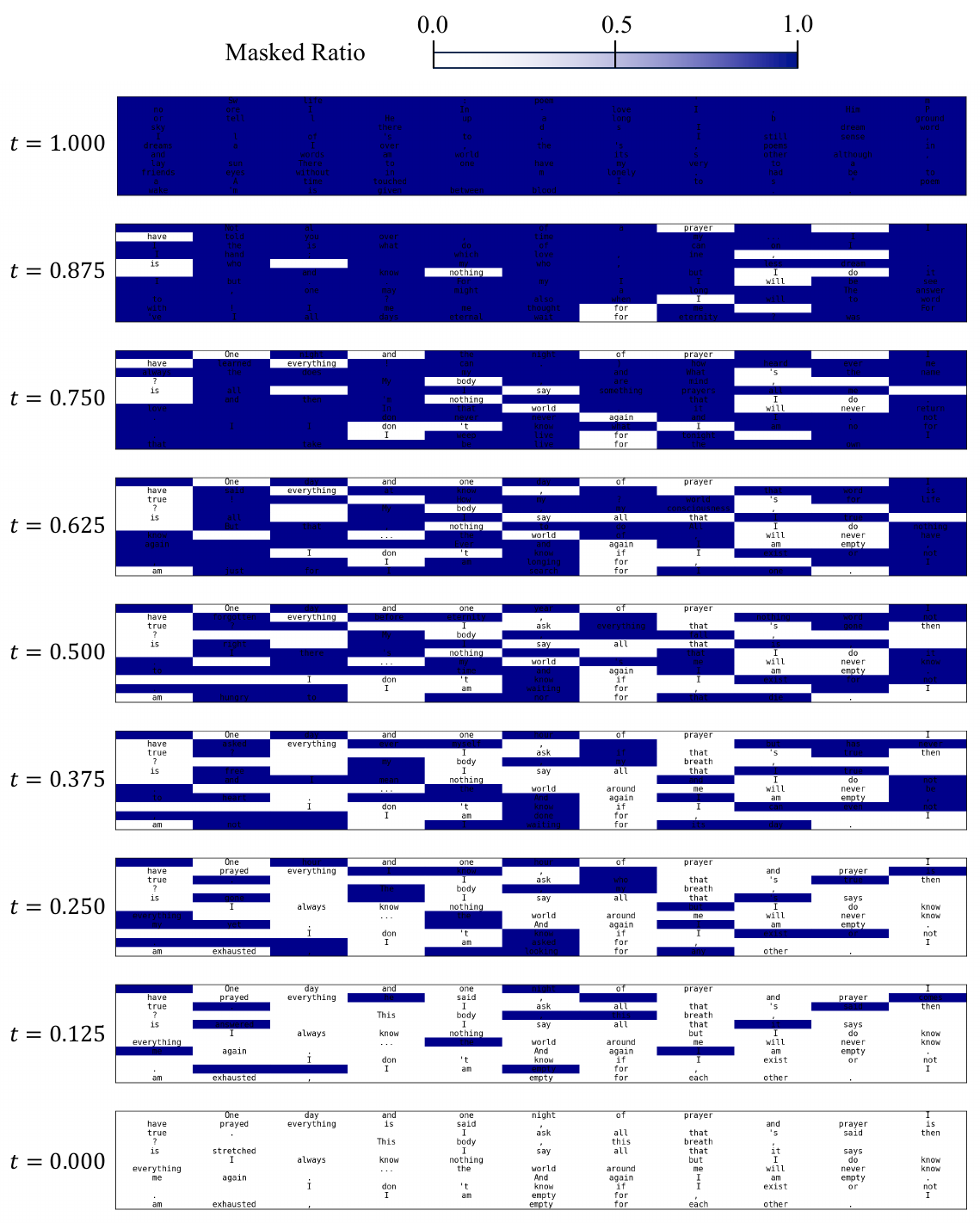}
    \vspace{-1em}
    \caption{Visualization of the sampling processes of MDLM.  The masked ratio is measured on a per-token basis, with higher values indicated by darker shades of blue. The samples are generated with prefix and suffix presented in Figs.~\ref{fig:apx:prefix} and~\ref{fig:apx:suffix}, respectively. Further experimental details are shown in Section~\ref{apx:experiments:qualitative}.}
    \vspace{-1.5em}
    \label{fig:text_sampling_traj_1bit}
\end{figure}

\begin{figure}[t]
    \centering
    \footnotesize
    \vspace{-1em}
    \includegraphics[width=\linewidth]{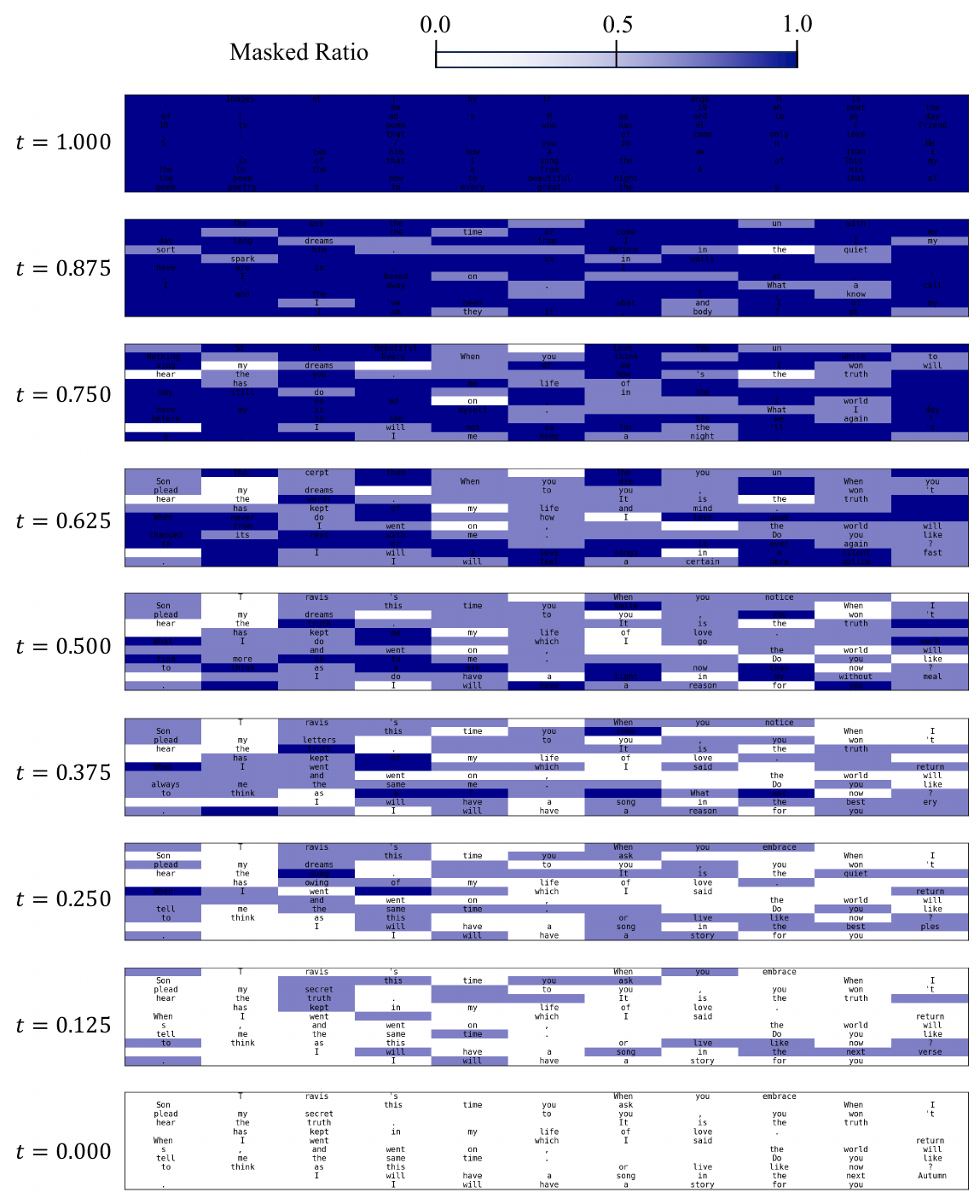}
    \vspace{-1em}
    \caption{Visualization of the sampling processes of MDLM-Prime ($\ell=2$).  The masked ratio is measured on a per-token basis, with higher values indicated by darker shades of blue. The samples are generated with prefix and suffix presented in Figs.~\ref{fig:apx:prefix} and~\ref{fig:apx:suffix}, respectively. Further experimental details are shown in Section~\ref{apx:experiments:qualitative}.}
    \vspace{-1.5em}
    \label{fig:text_sampling_traj_2bit}
\end{figure}

\begin{figure}[t]
    \centering
    \footnotesize
    \vspace{-1em}
    \includegraphics[width=\linewidth]{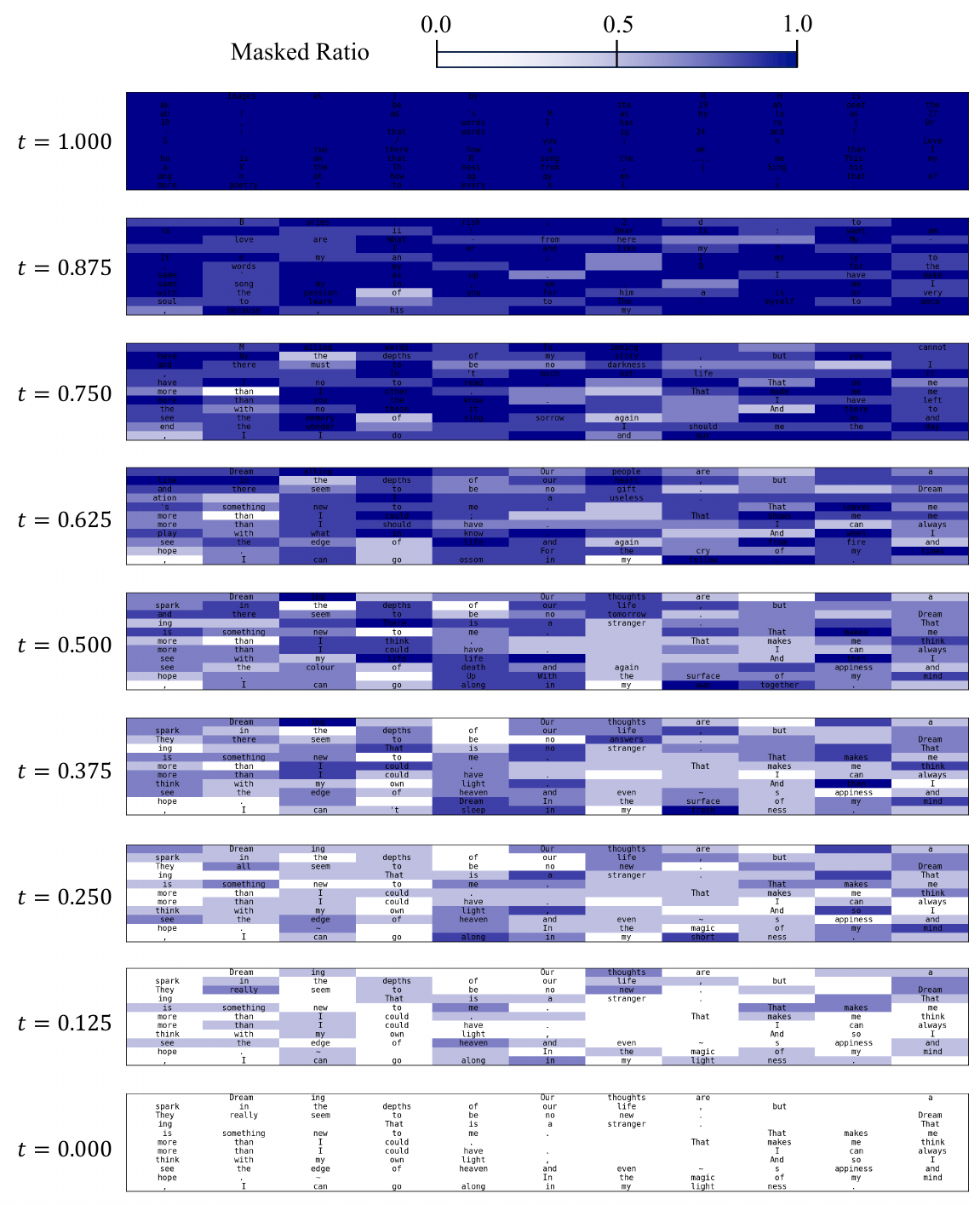}
    \vspace{-1em}
    \caption{Visualization of the sampling processes of MDLM-Prime ($\ell=4$).  The masked ratio is measured on a per-token basis, with higher values indicated by darker shades of blue. The samples are generated with prefix and suffix presented in Figs.~\ref{fig:apx:prefix} and~\ref{fig:apx:suffix}, respectively. Further experimental details are shown in Section~\ref{apx:experiments:qualitative}.}
    \vspace{-1.5em}
    \label{fig:text_sampling_traj_4bit}
\end{figure}
\begin{figure}[t]
    \centering
    \footnotesize
    \setlength{\fboxsep}{0pt}
    \fcolorbox{gray}{white}{%
        \includegraphics[width=\linewidth]{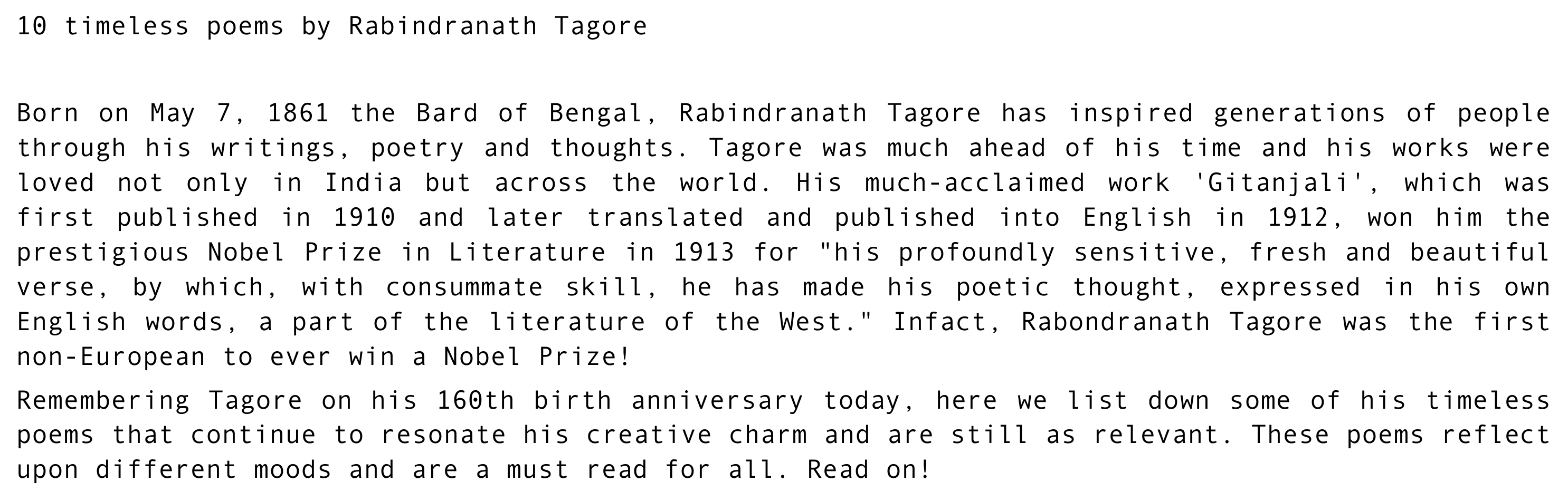}
    }
    \caption{Prefix text used for conditional sample generation.}
    \label{fig:apx:prefix}
\end{figure}

\begin{figure}[t]
    \centering
    \footnotesize
    \setlength{\fboxsep}{0pt}
    \fcolorbox{gray}{white}{%
        \includegraphics[width=\linewidth]{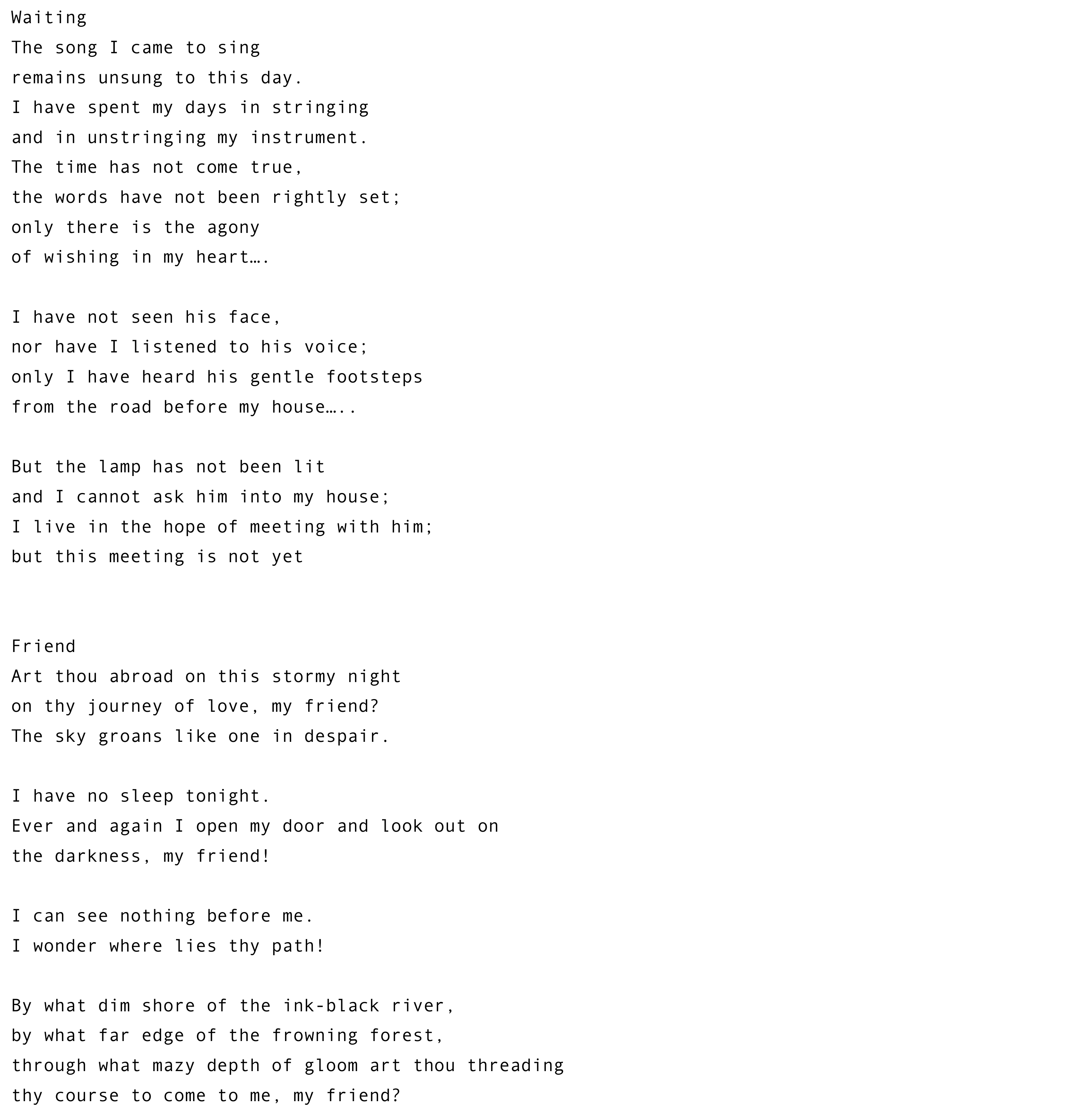}
    }
    \caption{Suffix text used for conditional sample generation.}
    \label{fig:apx:suffix}
\end{figure}

\clearpage
\begin{figure}[t]
    \centering
    \footnotesize
    \setlength{\fboxsep}{0pt}
    \fcolorbox{gray}{white}{%
        \includegraphics[width=\linewidth]{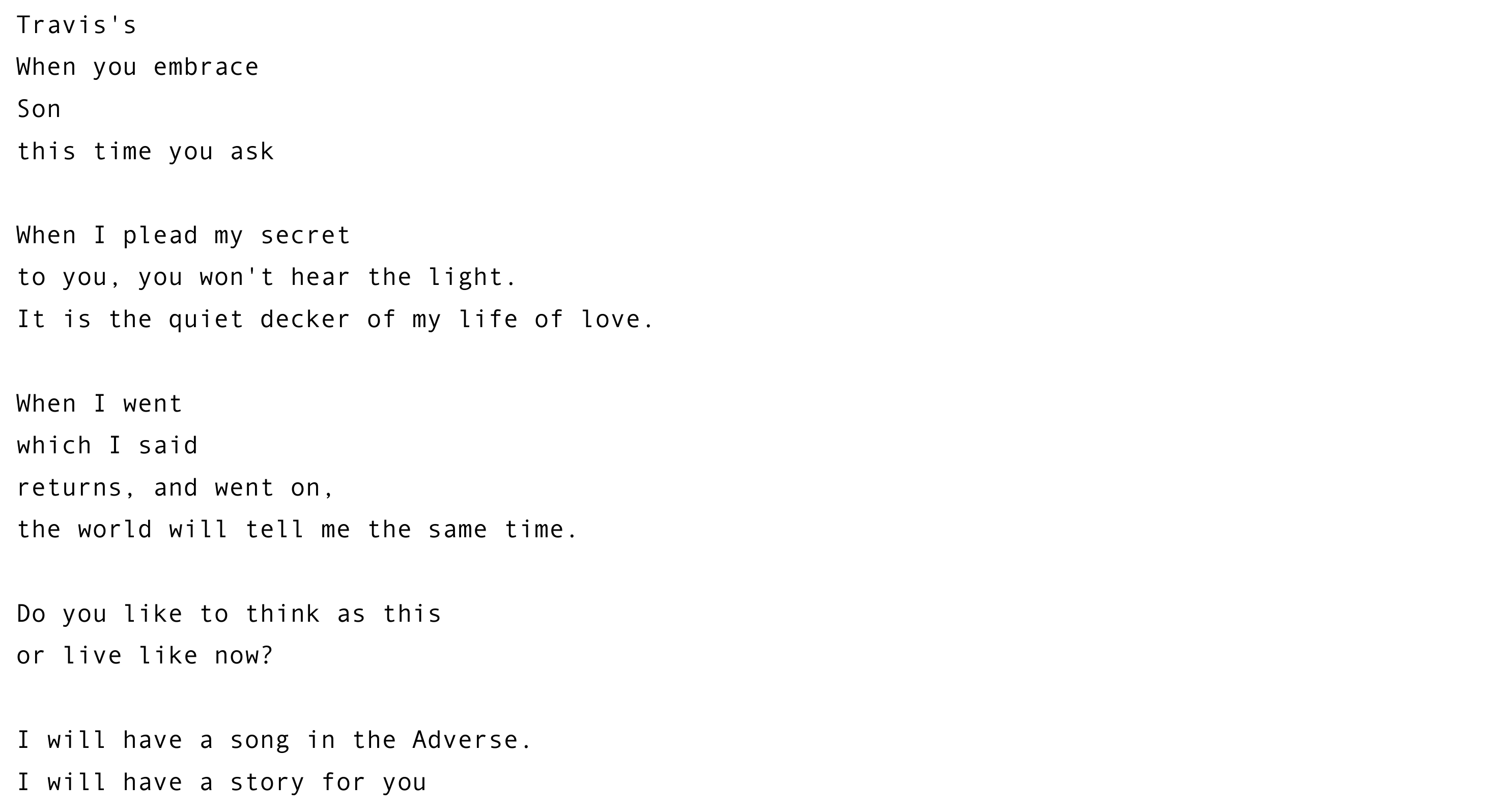}
    }
    \caption{Conditional sample generated using MDLM-Prime ($\ell=2$).}
    \label{fig:apx:poem_2bit}
\end{figure}

\begin{figure}[t]
    \centering
    \footnotesize
    \setlength{\fboxsep}{0pt}
    \fcolorbox{gray}{white}{%
        \includegraphics[width=\linewidth]{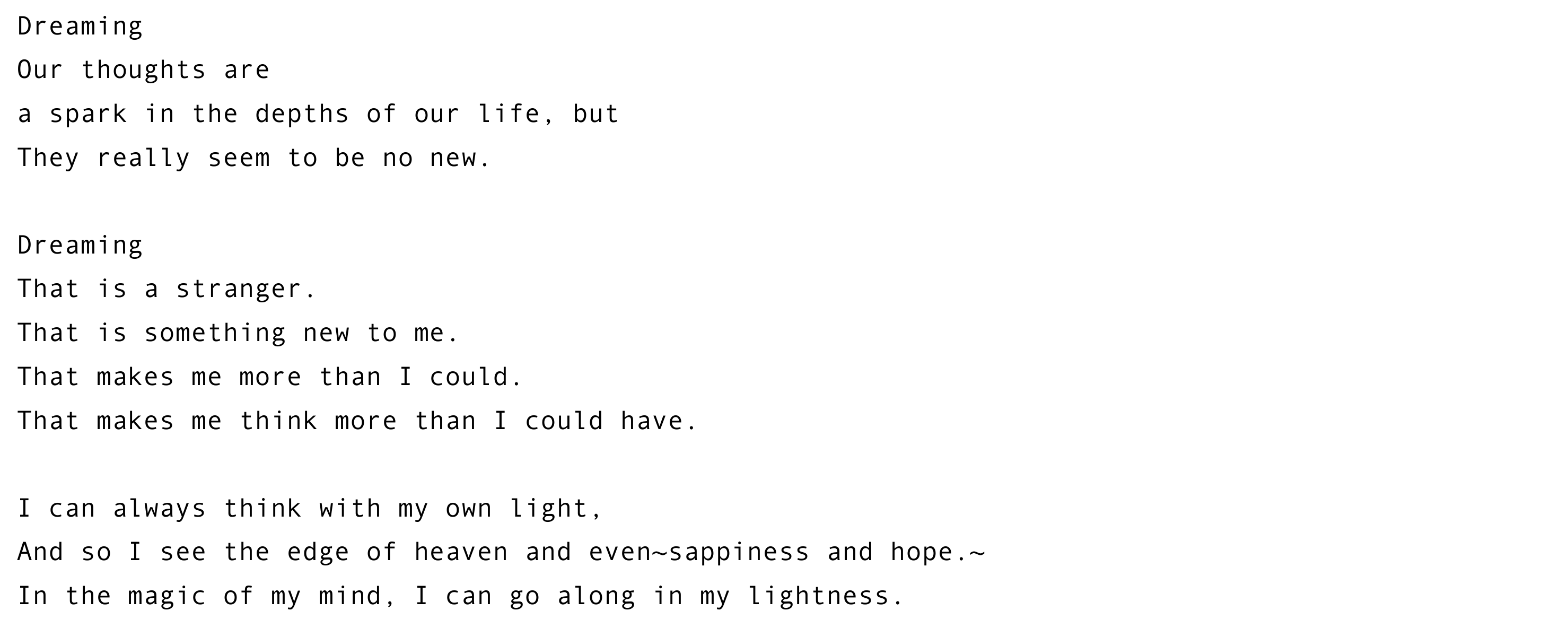}
    }
    \caption{Conditional sample generated using MDlM-Prime ($\ell=4$).}
    \label{fig:apx:poem_4bit}
\end{figure}

\begin{figure}[t]
    \centering
    \footnotesize
    \setlength{\fboxsep}{0pt}
    \fcolorbox{gray}{white}{%
        \includegraphics[width=\linewidth]{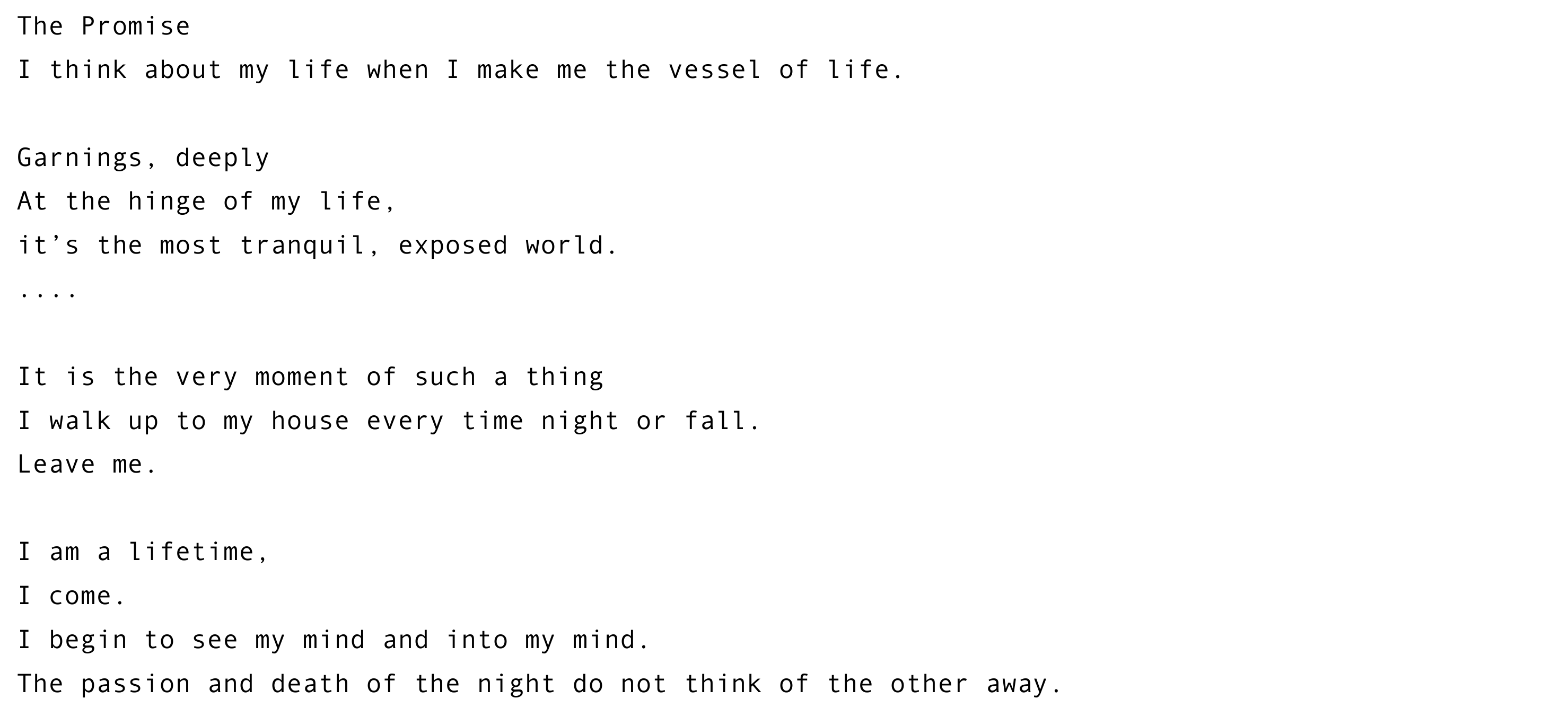}
    }
    \caption{Conditional sample generated using MDLM-Prime ($\ell=6$).}
    \label{fig:apx:poem_6bit}
\end{figure}



\begin{figure}[t]
    \centering
    \footnotesize
    \setlength{\fboxsep}{0pt} 
    \fcolorbox{gray}{white}{%
        \includegraphics[width=\linewidth]{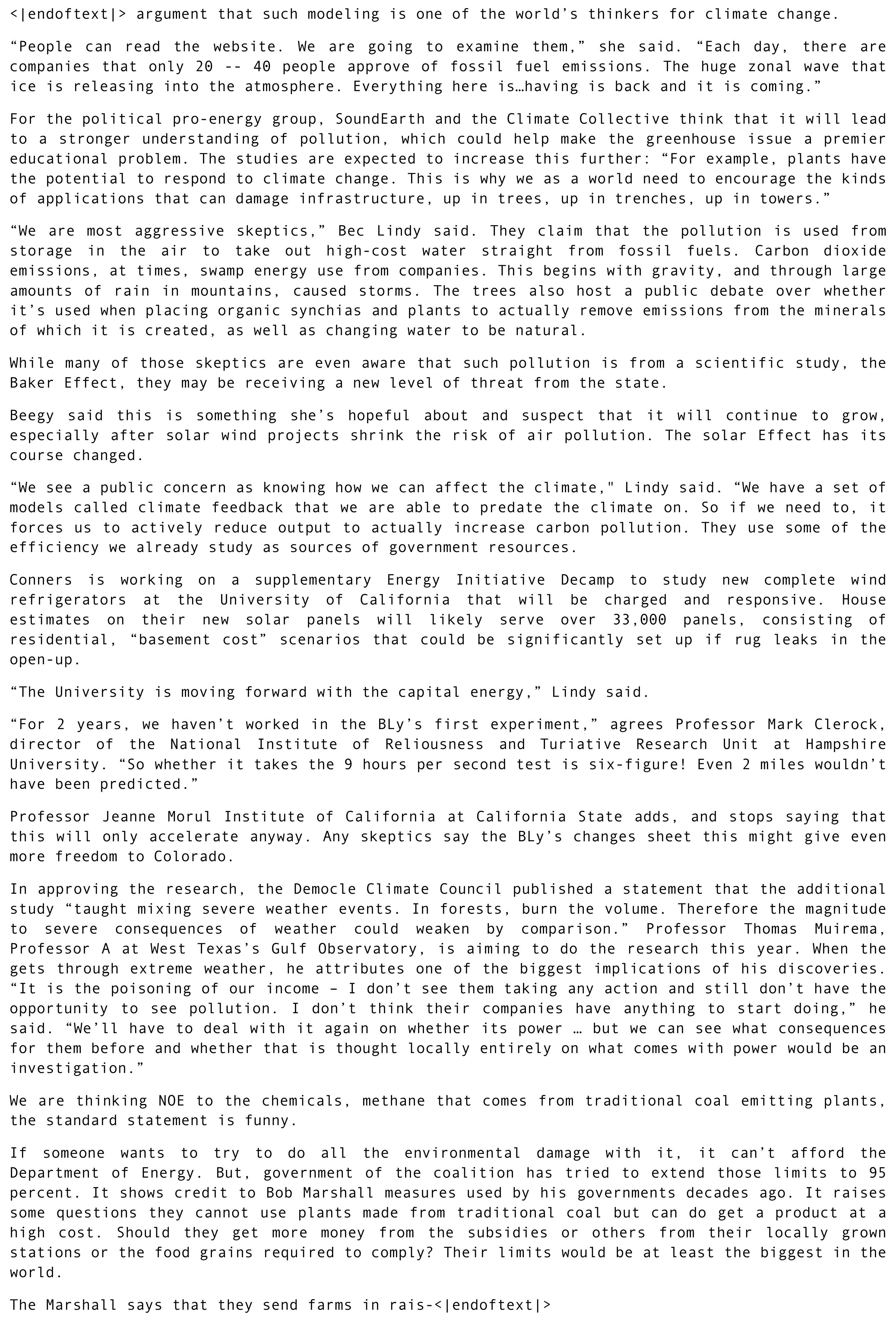}
    }
    \caption{Unconditional samples generated by MDLM-Prime with 1,024 sampling steps.}
    \label{fig:uncond_text_sample}
\end{figure}
\begin{figure}[t]
    \centering
    \footnotesize
    \includegraphics[width=\linewidth]{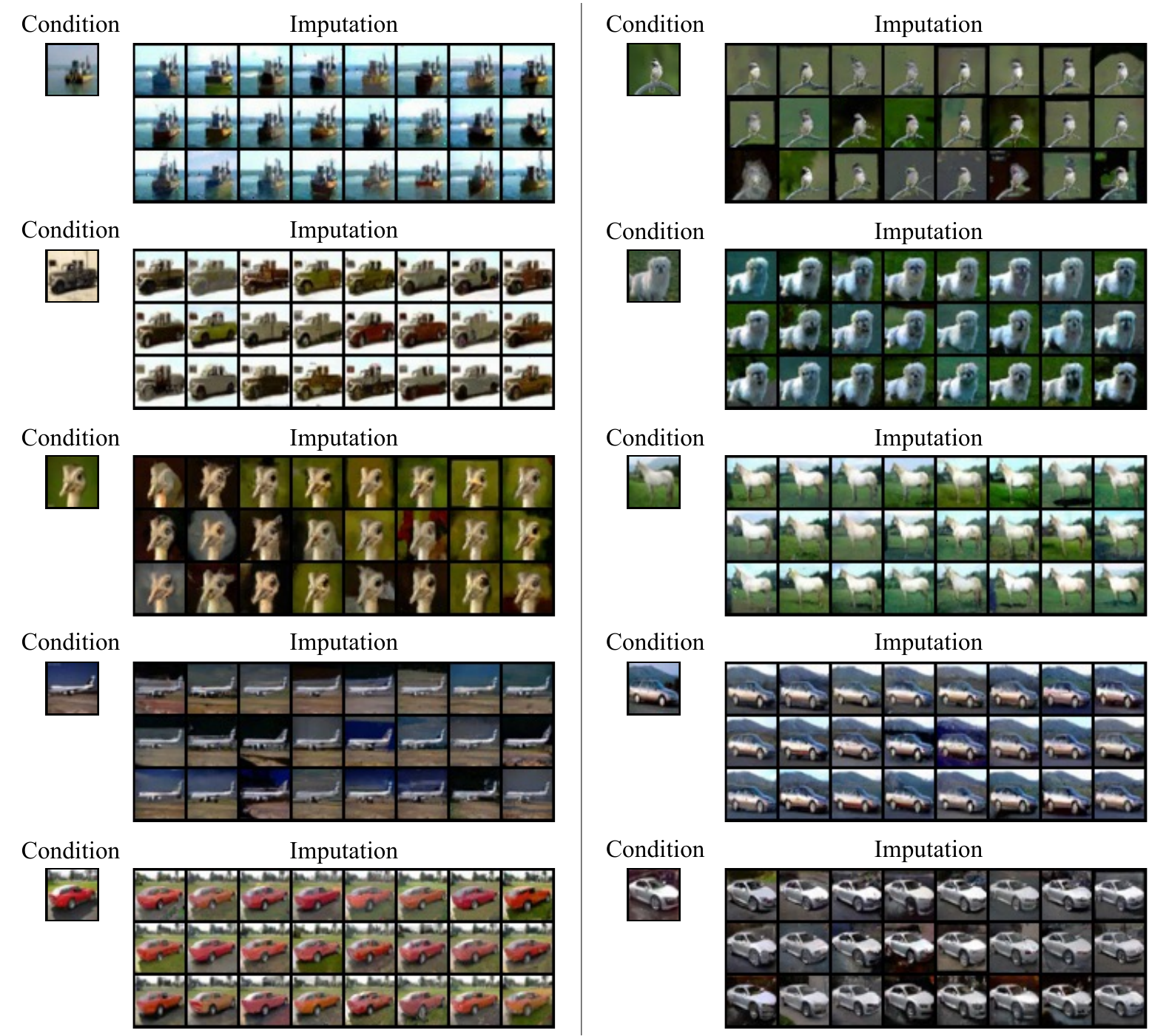}
    \caption{Uncurrated imputation results generated by MDM-Prime with CIFAR-10 images as conditions.}
    \label{fig:imputation_all}
\end{figure}

\begin{figure}[t]
    \centering
    \footnotesize
    \includegraphics[width=0.7\linewidth]{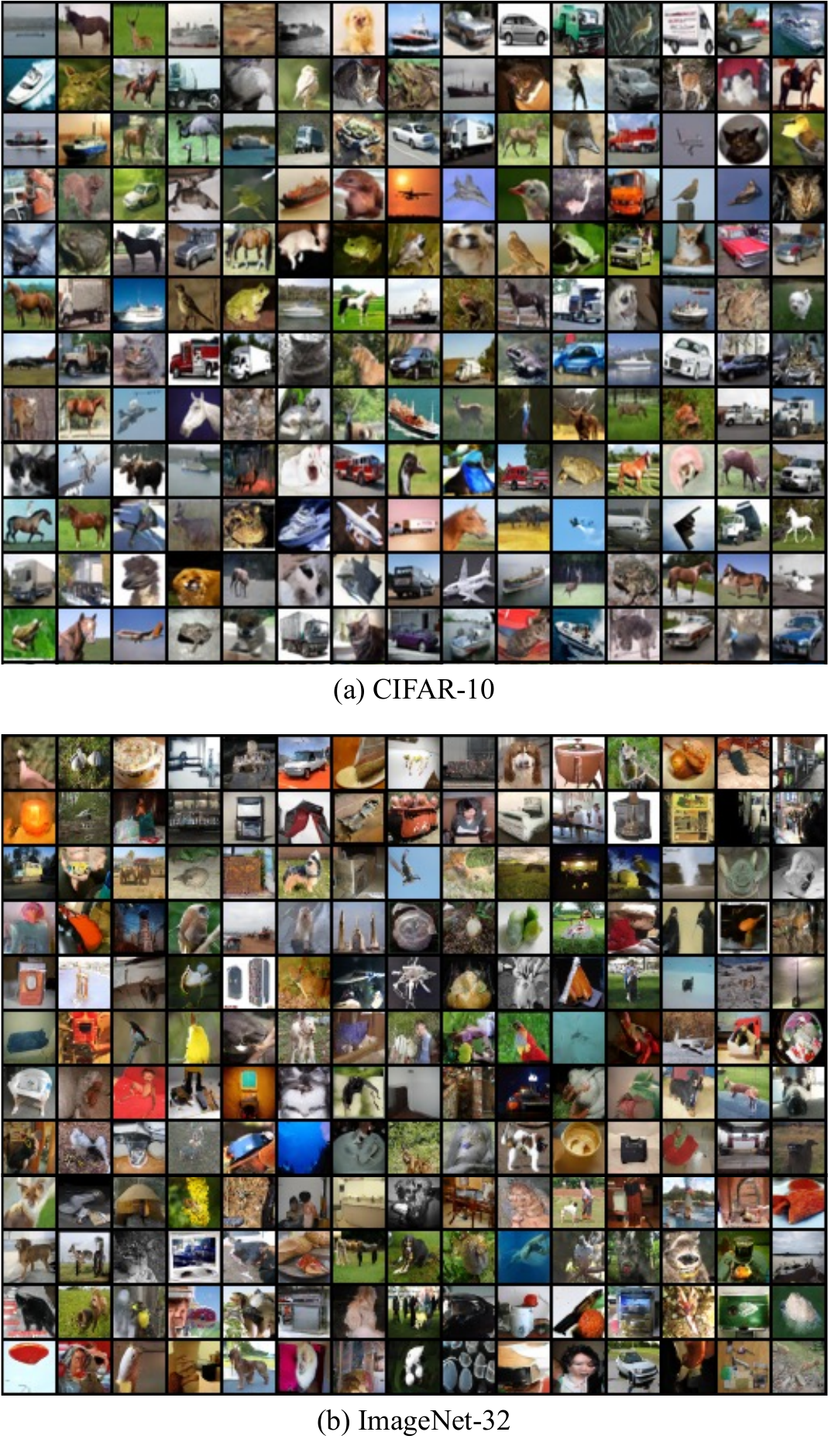}
    \caption{(a) CIFAR-10 and (b) ImageNet-32 samples generated by MDM-Prime with NFE=512.}
    \label{fig:cifar10}
\end{figure}

\end{document}